\crefname{hypothesis}{Hypothesis}{Hypotheses}
\def\QED{~\rule[-1pt]{5pt}{5pt}\par\medskip}
\renewenvironment{proof}{{\bf Proof: \ }}{ \hfill \QED}
\let\VEC  \boldsymbol
\let\ALP  \mathcal
\let\FLD  \mathscr
\newcommand{\ebox}{\hfill $\Box$}
\newcommand{\transpose}{\mathsf{T}}
\newcommand{\ind}[1]{\mathds{1}_{\{#1\}}}
\newcommand{\beq}[1]{\begin{align} #1 \end{align}}
\newcommand{\beqq}[1]{\begin{align*} #1 \end{align*}}
\renewcommand{\Re}{\mathbb{R}}
\newcommand{\Na}{\mathbb{N}}
\newcommand{\Z}{{\mathbb{Z}_{+}}}
\newcommand{\ex}[1]{\mathds{E}\left[#1\right]}
\newcommand{\pr}[1]{\mathds{P}\left\{#1\right\}}
\renewcommand{\matrix}[2]{\left[\begin{array}{#1} #2 \end{array}\right]}
\title{Some Limit Properties of Markov Chains Induced by Recursive Stochastic Algorithms\thanks{Submitted to the editors.
\funding{The authors gratefully acknowledge support from ARPA-E NEXTCAR Program, NSF ECCS Grant 1610615, and NSF CRII Award 1565487.}}}
\author{Abhishek Gupta, Hao Chen, Jianzong Pi, Gaurav Tendolkar\thanks{Abhishek Gupta, Hao Chen, and Jianzong Pi is with the Electrical and Computer Engineering Department, The Ohio State University, 2015 Neil Avenue, Columbus, OH 43210, USA.  (\email{gupta.706@osu.edu}, \email{chen.6945@osu.edu}, \email{pi.35@osu.edu}). Gaurav Tendolkar is with Microsoft Corp., 599 N Mathilda Avenue, Sunnyvale, CA 94085, USA. \email{gatendol@microsoft.com}.}
}
\begin{document}

\maketitle

\begin{abstract}
Recursive stochastic algorithms have gained significant attention in the recent past due to data driven applications. Examples include stochastic gradient descent for solving large-scale optimization problems and empirical dynamic programming algorithms for solving Markov decision problems. These recursive stochastic algorithms approximate certain contraction operators and can be viewed within the framework of iterated random operators. Accordingly, we consider iterated random operators over a Polish space that simulate iterated contraction operator over that Polish space. Assume that the iterated random operators are indexed by certain batch sizes such that as batch sizes grow to infinity, each realization of the random operator converges (in some sense) to the contraction operator it is simulating. We show that starting from the same initial condition, the distribution of the random sequence generated by the iterated random operators converges weakly to the trajectory generated by the contraction operator. We further show that under certain conditions, the time average of the random sequence converges to the spatial mean of the invariant distribution. We then apply these results to logistic regression, empirical value iteration, and empirical Q value iteration for finite state finite action MDPs to illustrate the general theory develop here.
\end{abstract}

\begin{keywords}
 Stochastic Gradient Descent, Empirical Dynamic Programming, Constant Stepsize Q learning, Iterative Random Maps, Feller Markov Chains
\end{keywords}

\begin{AMS}
  93E35, 60J20, 68Q32
\end{AMS}

\section{Introduction}
There has been a surge of interest in using randomization to reduce computational burden in machine learning and reinforcement learning. For instance, in training neural networks with a large amount of data, stochastic gradient descent is frequently employed instead of the usual gradient descent. In data-driven Markov decision problems, empirical dynamic programming and generative models have been employed to determine approximately optimal policies and value functions. In these algorithms, instead of computing the expected value of certain functions at each step of the iteration, one computes the empirical expected value that is rather easy to compute if enough data is available. This simple trick reduces the runtime to determine a reasonably good solution. 

It turns out that the outputs of these recursive stochastic algorithms (RSAs) can be viewed as Markov chains. Indeed, if the parameters of the algorithm do not change with iteration, then the RSAs can be thought of as an iterated random operator acting onto certain spaces. Consider, for instance, the case of stochastic gradient descent, where the stepsize remains constant, data samples picked at every iteration are i.i.d., and the number of data samples remain constant. Then, each step of the stochastic gradient descent algorithm can be viewed as a random operator. To see this, let us consider the problem of minimizing a sum of $N$ functions, $L_i:\Re^n\rightarrow\Re$, $i=1,\ldots,N$:
\beqq{\min_{x\in\Re^n} L(x) = \frac{1}{N}\sum_{i=1}^N L_i(x).}
In usual gradient descent, one fixes a stepsize $\beta>0$, and runs the iteration
\beqq{y_{k+1} = y_k - \beta \nabla_x L(y_k) =: T(y_k),}
where we used $T:\Re^n\rightarrow\Re^n$ to denote the exact gradient descent map. We note here that under reasonable assumptions on $L$ and $\beta$, $T$ becomes a contraction operator under some norm on $\Re^n$ (usually $\ell_2$ norm is used).

In contrast, in stochastic gradient descent, the operator applied at every step of the algorithm changes. At time step $k$ of the stochastic gradient descent algorithm, let $\ALP N_k:=\{i_1,\ldots,i_n\}$ be the set of $n$ indices that are sampled independently and uniformly from the set of all indices $\{1,\ldots,N\}$. Then, we have
\beqq{\hat z^n_{k+1} = \hat z^n_k - \frac{\beta}{n} \sum_{i\in\ALP N_k} \nabla_x L_i(\hat z^n_k) =:\hat T^n_k(\hat z^n_k).}
Since the set of (random) indices $\ALP N_k$ is i.i.d., the operator $\hat T^n_k$ is independent of the past maps and is ``identically distributed''. This implies that the (random) sequence $(\hat z^n_k)_{k\in\Na}$ is a Markov chain. It should also be noted that the exact gradient descent operator $T$ and stochastic gradient descent operator $\hat T^n_k$ are related -- $\hat T^n_k(x)$ is a consistent estimate of the $T(x)$ for any $x\in\Re^n$.  

A similar setup is considered in empirical dynamic programming using a generative model for dynamic decision process. Consider a controlled Markov process in which $s$ is the state of a system and $a$ is the action of the decision maker. Let $p(s'|s,a)$ denote the transition probability of the next state being $s'$ given the current state $s$ and action $a$ and $c(s,a)$ be the corresponding cost. We use $\alpha$ to denote the discount parameter. In the value iteration algorithm, one needs to compute $\ex{v(s')|s,a}$, where $v$ is some real-valued function of the state. This leads to the usual Bellman operator $T$ that acts on the space of value functions and is given by
\beqq{T(v)(s,a) = \min_{a}\bigg[ c(s,a)+\alpha\ex{v(s')|s,a}\bigg].}
It is not difficult to see that $T$ is a contraction operator with the contraction coefficient $\alpha$, when the space of value functions is endowed with the sup norm. If there is enough data, one can replace $\ex{v(s')|s,a}$ with its ``empirical'' average, given by $\frac{1}{n}\sum_{i=1}^n v(s'_i(s,a))$, where $\{s'_i(s,a)\}_{i=1}^n$ are $n$ samples of the next state given that the current state-action pair is $(s,a)$. Thus, the random Bellman operator $\hat T^n_k$ acts on the space of value functions and is given by
\beqq{\hat T^n_k(v)(s,a) = \min_{a}\left[ c(s,a)+\alpha\frac{1}{n}\sum_{i=1}^n v(s'_i(s,a))\right].}
In this case as well, $\hat T^n_k(v)$ is a consistent estimate of the $T(v)$ for any value function $v$. When the state or action space is continuous, then the above operator also features an additional function fitting component. The analysis of such algorithms involves understanding the error introduced due to function fitting, as well as the number of samples used at every iteration.

In fact, in both of the examples above, it is readily observed that the original operator $T$ is a contraction map. This is no accident -- contraction mapping theorem forms the bedrock of most of the proofs of convergence algorithms used in optimization or MDP problems. Some examples are noted below:
\begin{enumerate}
    \item The Bellman operator for a class of stochastic shortest path problems is a contraction operator under an appropriate weighted sup norm \cite{bertsekas1996neuro,bertsekas2011dynamic}.
    \item The Bellman operator for a class of average cost MDPs is a contraction operator under the span seminorm over the quotient space \cite{puterman2014,almudevar2014approximate}.
    \item Some variational inequality problems involve contraction operators under the usual 2 norm \cite[p. 143]{facchinei2007finite}.
    \item The Bellman operator for continuous state MDP is a contraction operator over the space of continuous and bounded functions over the state space of the MDP (it requires a variety of assumptions as elucidated in \cite{hernandez1996discrete,hernandez2012further,duflo2013random}).
    \item The resolvent of a strongly monotone operator is a contraction operator \cite{ryu2016primer}. Many other contraction operators used in the context of optimization algorithms are discussed in Section 5 of \cite{ryu2016primer}.
\end{enumerate}
In data driven applications, computing the exact (contraction) operator is either computationally expensive or impossible. Thus, one has to use random mappings, that are drawn independent of the past operators, and that simulate the effect of contraction mapping. The two examples explained above are merely instances of this more general methodology. The primary purpose of the paper is to devise sufficient conditions on the relationship between random maps $\hat T^n_k$ and the deterministic contraction operator $T$ so that (a) the convergence properties of new algorithms can be readily established, and (b) find common threads between the convergence.   

\subsection{Our Contribution}\label{sub:contri}
The primary contribution of this paper is to conceptually unify the convergence analysis of certain RSAs in optimization, 
machine learning, and reinforcement learning using the tools from the Markov chain theory. This is achieved by leveraging several results available for convergence and stability for Feller Markov chains established in \cite{breiman1960}, \cite[Sec. 18.5]{meyn2012}, \cite[Sec. 8]{borovkov1998}, and \cite{karr1975}. Our key contributions are summarized below.
\begin{enumerate}
\item Consider the deterministic sequence $y_k = T\circ \ldots \circ T(y_0)$ ($k$ compositions of the exact operator $T$) and the Markov chain $\hat z^n_k = \hat T^n_{k-1}\circ \ldots\circ \hat T^n_0(\hat z^n_0)$ with $\hat z^n_0 = y_0$. It is natural to assume that using the deterministic operator yields the best convergence guarantee. We use independent samples of data to approximate $T$ by $\hat T^n_k$, then it introduces error at every time step. How does this error accumulate? Most authors bound $\|\hat z^n_k - x^*\|$, where $x^*$ is the fixed point of the operator $T$. Instead, we are interested in the error $\|\hat z^n_k - y_k\|$, noting that triangle inequality yields 
\beq{\label{eqn:znkyk}\|\hat z^n_k - x^*\|\leq \|\hat z^n_k - y_k\|+\|y_k - x^*\| \leq \|\hat z^n_k - y_k\|+\alpha^k\|x_0-x^*\|.}

We derive a sufficient condition on the random operators $\hat T^n_k$ and its relationship with the exact operator $T$ so that the sequence of distributions over the sequence $(\hat z^n_k)_{k\in\Na}$ output by the RSA converges in weak* topology to the unit mass over the trajectory $(y_k)_{k\in\Na}$ output by the exact algorithm as the parameter $n\rightarrow\infty$. This further implies that $\hat z^n_k$ is close to $y_k$ with high probability. Using the inequality in \eqref{eqn:znkyk} above, we conclude that the error term $\|\hat z^n_k - x^*\|$ is less than  $\alpha^k\|x_0-x^*\|$ plus a small error with high probability.

We further show that the sufficient condition is satisfied in a sufficiently general class of problems encountered in stochastic gradient descent for strongly convex loss functions and synchronous empirical dynamic programming for MDPs with discounted cost criterion. These examples serve to illustrate how to apply the results to derive this property of an RSA.

\item Existence of invariant distributions of RSAs is an important property, as it implies some form of stability of the algorithm. A Markov chain does not admit invariant distribution if it features a cyclic behavior or diverges to infinity (there could be other reasons for the non-existence of an invariant distribution, but these two are more common). In an RSA, we do not usually expect a cyclic behavior due to randomization. Thus, if an RSA admits an invariant distribution, then it implies that the iterates will not diverge with high probability. Thus, establishing the existence of an invariant distribution is an important problem, which we address here.

We show that the Markov chains generated by many RSAs satisfy the weak Feller property, that is, if $f$ is a continuous and bounded function, then $\hat z^n_0\mapsto \ex{f(\hat z^n_1)|\hat z^n_0}$ is also a continuous and bounded function. The existence (and in some case, the uniqueness) of an invariant measure of Feller Markov chains has been presented in \cite{borovkov1998}. We apply these results to conclude that under some reasonable assumptions, the chains generated by stochastic gradient descent and empirical dynamic programming algorithms admit invariant distributions. In certain cases, we can show that this invariant distribution is unique.

\item There has been a sustained interest in using time-averages in stochastic gradient descent and deep Q learning. Particularly, references \cite{schmidt2017minimizing,merity2017regularizing,jain2018parallelizing,mandt2017stochastic,tripuraneni2018averaging} propose that fixing the stepsize in stochastic gradient descent algorithms and using the average of the tail of the random sequence leads to a better performance of the trained algorithm. Within the context of reinforcement learning, \cite{anschel2017averaged} and \cite{wai2018multi} propose averaging the deep Q function iterates to arrive at a solution with lower variance. 

Indeed, we show here that under some conditions on the random operators, the variance reduction property of time-average (or in these cases, tail average) is largely due to the fact that the Markov chain output by RSA may be admitting a unique invariant distribution. This part leverages the law of large numbers for Markov chains, presented in \cite{breiman1960,meyn2012}. 
\item We complement the theoretical contributions with numerical simulations of two RSAs -- stochastic gradient descent for logistic regression, empirical value iteration for discounted MDP with a generative model, and synchronous batch Q value iteration for discounted MDP.
\end{enumerate}

While we present complete proofs of two of our main results stated here, we admit that our proofs require minor tweaks of existing results in the literature. The need for presenting the complete proofs are twofold: Our hypotheses differ in some ways from the hypotheses presented in the standard texts, particularly in \cite{karr1975}, \cite[Section 18.5]{meyn2012}, and \cite{breiman1960}. Moreover, to construct the complete proofs using these texts under our hypotheses require substantial effort on the part of the reader. To ease this burden, we chose to furnish the complete proofs using the notation adopted here.  

\subsection{Previous Work}
Convergence proofs of randomized optimization and learning algorithms are usually obtained from specifically tailored arguments, which are not generalizable to other settings. For instance, the convergence of stochastic gradient descent, stochastic variance reduction gradient descent (SVRG), and stochastic average gradient (SAG) descent follow a completely different, and often involved, sequence of arguments \cite{bottou2010large,bonnabel2013stochastic,johnson2013accelerating,harikandeh2015stopwasting,schmidt2017minimizing,defazio2014saga}. The argument usually starts with identifying some conditions on the functions, such that for every iteration $k$, one can upper bound $L(\hat z^n_k) - L(x^*)$ (where we used the notation introduced above) by a function that decays as $k$ grows. These tailored methods usually also yield the convergence rates specific to those algorithms. 

It would be conceptually elegant to determine a set of more general conditions which can be readily applied to these algorithms and many of its variants to establish the asymptotic convergence to the fixed point of the map. The stochastic approximation theory is one such elegant theory \cite{borkar2000ode,huang2002ode,kushner2003stochastic,borkar2009stochastic}. There are two types of stochastic approximation algorithms -- one with decreasing (also called tapering or diminishing) stepsize and other one with constant stepsize. In decreasing stepsize algorithms, the stepsize has to converge to 0 as the number of iteration goes to infinity (the stepsize is not summable, but is square summable). This leads to the almost sure convergence guarantee to the fixed point in the limit. For constant stepsize, the theory says that the iterates will eventually enter a neighborhood of the fixed point and do a random walk within that neighborhood.

We now present a sample of decreasing stepsize RSAs whose convergence is ascertained using the stochastic approximation theory. Under reasonable assumptions on the loss function, stochastic gradient descent and distributed asynchronous gradient descent methods converge almost surely to the optimal solution \cite{tsitsiklis1986distributed,gelfand1991recursive,gelfand1993metropolis}. The convergence of reinforcement learning algorithms usually invoke some version of the stochastic approximation theorem. Reference \cite{jaakkola1994convergence,tsitsiklis1994asynchronous,abounadi2002stochastic,bertsekas2012q} studies the convergence of various types of Q learning algorithms developed for discounted cost or average cost MDPs with finite state and action spaces. The convergence of on-policy reinforcement learning algorithm SARSA is established in \cite{singh2000convergence}. More recently, the stochastic approximation theory has been used to establish the convergence of policy gradient, temporal difference, and other related methods in 
\cite{suttle2019multi,zhang2019global,zhang2019policy,zhang2020approximate,xiong2020non,xu2020non}. For more information on various reinforcement learning algorithms and their convergence proofs, we refer the reader to books  \cite{almudevar2014approximate,sutton2018reinforcement,bertsekas1996neuro} and recent survey papers \cite{zhang2019decentralized,zhang2019multi}.

Decreasing stepsize RSAs do not yield approximate solutions in a reasonable time frame. As a result, constant stepsize algorithms are gaining traction as a way to speed up the computation at the cost of tolerating a small error in the final result; see, for example, \cite{fort1999asymptotic,nemirovski2009robust,bach2014adaptivity,dieuleveut2017harder,dieuleveut2017bridging}, where constant stepsizes are used in the context of the stochastic gradient descent-type algorithms and \cite{beck2012error,srikant2019finite,o2010residential,schoknecht2003convergent} for their usage in the reinforcement learning algorithms. 

Constant stepsize stochastic approximation over finite dimensional state space has been studied in \cite{borkar2000ode,huang2002ode}, where the authors derive the asymptotic concentration results. It is well-understood that in stochastic gradient descent with constant stepsize, the sequence generated by the algorithm gets closer to the optimal solution, but then does a random walk around the optimal solution \cite{dieuleveut2017bridging}. The closeness of the random walk to the optimal solution depends on the number of random samples one uses at each iteration of the algorithm. Similarly, in Q learning algorithm, constant stepsize Q learning has been studied in \cite{beck2012error} (both synchronous and asynchronous version are studied). Convergence of constant stepsize temporal difference methods with linear function approximation is studied in \cite{lakshminarayanan2018linear,dalal2018td0,bhandari2018finite,gupta2019finite,srikant2019finite}. Empirical value iteration and their variants, studied in a number of works under varying assumptions
\cite{whitt1978approximations,whitt1979approximations,cooper2003convergence,munos2008finite,cooper2012performance,jain2010simulation,kalathil2014empirical,haskell2016,sidford2018near}, are also examples of constant stepsize stochastic approximation algorithms.

When the stepsize is taken as a constant in a RSA, then the output of the RSA forms a Markov chain. The goal of this paper is to study the limit properties of such a Markov chain. There are two ways the limit can be taken. Either the sample size $n$ used at every time step can grow to infinity or the number of iterations $k$ can escape to infinity. We study both the limits in this paper. We note here that the generality of the model and minimal assumptions do not allow us to derive a finite time guarantee, which has significant importance in the machine learning and the reinforcement learning communities. Further, our proof approach is not algorithm-dependent. We leave these important problems for a future work.

Our work is largely motivated by the analysis of empirical dynamic programming in \cite{haskell2016}. This work viewed empirical dynamic programs within the framework of iterated random operators. It used stochastic dominance based arguments to derive bounds on the asymptotic probability of error (between the random outputs of the algorithm and the optimal solution) being large. Inspired by this work, we extended the arguments to empirical relative value iteration in \cite{gupta2015edp}. We further relaxed some conditions on random operators assumed in \cite{haskell2016} in our follow up work \cite{gupta2018probabilistic}. The aim of this paper is to further expand the analysis and present conditions on random operators and its relationship to the exact operator to arrive at insightful conclusions about the random sequences generated by these RSAs.

\subsection{Outline of the Paper}
The paper is organized as follows. Section \ref{sec:problem} presents a common mathematical framework to study the problem of convergence and stability of Markov chains induced by RSAs. We also state the three main questions we address. Section \ref{sec:example} presents some motivating examples where the mathematical framework we develop can be applied. Through these examples, we also illustrate certain desirable properties that the random operators enjoy. In Section \ref{sec:weakconv}, we show that the distributions over the trajectories generated by RSA converges to the Dirac mass over the trajectory generated by the exact algorithm. This constitutes the first main result of the paper. In Section \ref{sec:invariant}, we study some sufficient conditions on the operators $\hat T^n_k$ such that the resulting Markov chain admits an invariant distribution. We also study conditions under which the invariant distribution is unique. Section \ref{sec:lln} then introduces the assumptions and establishes the weak law of large numbers for Markov chains. This constitutes the second main result of the paper. The proofs of the two main results are presented in Sections \ref{sec:mainproof} and \ref{sec:llnproof}. We finally conclude our discussion in Section \ref{sec:conclusion}.  

\subsection{Notations}
Let $(\ALP A,\rho)$ be a Polish space, which is defined as a complete separable metric space with metric $\rho$. We use $\wp(\ALP A)$ to denote the set of all probability distributions over $\ALP A$. We use $\ind{a}\in\wp(\ALP A)$ to denote the Dirac mass over $a\in\ALP A$. The notations $C_b(\ALP A)$ and $U_b(\ALP A)$ denote, respectively, the set of all continuous and bounded functions and uniformly continuous and bounded functions over the set $\ALP A$. We use $C(\ALP A)$ to denote the set of (possibly unbounded) continuous functions over the set $\ALP A$. We say that a sequence of measures $\{\mu_n\}_{n\in\Na}\subset\wp(\ALP A)$ converges to $\mu$ in weak* sense iff for every $f\in C_b(\ALP A)$, $\int fd\mu_n \rightarrow\int fd\mu$ as $n\rightarrow\infty$. This is usually also referred to as weak convergence in probability theory literature. 

We use $1_{\{\cdot\}}$ to denote an indicator function, which takes the value of 1 if $\{\cdot\}$ is true and 0 otherwise. By a slight abuse of notation, we also use $1_{F}$ to be the indicator function over a measurable set $F\subset\ALP A$.

\section{Problem Formulation}\label{sec:problem}
Let $(\ALP X,\rho)$ be a Polish space with metric $\rho$. Consider a contraction operator $T:\ALP X\rightarrow \ALP X$ with contraction coefficient $\alpha\in(0,1)$ and the unique fixed point denoted by $x^*$. This means
\beqq{\rho(T(x_1),T(x_2)) \leq \alpha\rho(x_1,x_2) \text{ and } T(x^*) = x^*.}
Starting from any initial point $y_0\in\ALP X$, define the iterates 
\beq{y_k = T(y_{k-1})\quad \text{ for }k\in\Na.\label{eqn:yk}}
By the Banach contraction mapping theorem, this iteration converges to $x^*$. In fact, we have
\beqq{\rho(y_k,x^*) \leq \alpha^k\rho(y_0,x^*).}
As discussed previously, in many instances, it is beneficial or required in many iterative algorithms to use randomization to evaluate an approximation of $T(x)$ using a random operator. We now formulate a framework to analyze the output of this RSA rigorously. 

Let $(\Omega,\FLD F,\mathbb{P})$ be a standard probability space, where $\Omega$ is the set of uncertainties, $\ALP F$ is the Borel $\sigma$-algebra over $\Omega$ and $\mathbb{P}$ be the probability distribution function over $\Omega$. Let $\hat T^n_k:\ALP X\times\Omega\rightarrow\ALP X$ be a random operator that is used at the $k^{th}$ iteration and is indexed by a natural number $n$. The index $n$ would capture, for instance, the stepsize, the number of random samples used to approximate the operator $T$, etc. Although $\hat T^n_k$ is a function of $\omega\in\Omega$, we will suppress this dependence for ease of exposition. Thus, $\hat T^n_k (x):=\hat T^n_k(x;\omega)$. We make the following assumption on the independence of the sequence of operators $\hat T^n_k$.

\begin{assumption}\label{assm:hatt}
For every $x\in\ALP X$, $\hat T^n_k(x)$ and $\hat T^n_{k'}(x)$ are statistically independent and identically distributed for $k\neq k'$.
\end{assumption}

\subsection{Key Questions}
Consider the stochastic process that starts from $\hat z^n_0 = y_0$ and define $\hat z^n_k = \hat T^n_{k-1}(\hat z^n_{k-1})$ for all $k\in\Na$. Due to Assumption \ref{assm:hatt} and the fact that $n$ does not change with time, the stochastic process $\{\hat z^n_k\}_{k\in\Na}$ is a time-homogeneous Markov chain. One can view $\hat z^n_k$ as an $\ALP X$-valued Markov chain with the Markov transition kernel given by
\beqq{Q_n(B|x) = \pr{\hat z^n_k \in B|\hat z^n_{k-1} = x}}
for any Borel set $B\subset\ALP X$. Note that $Q_n$ does not depend on the time index $k$, since what we have here is a time-homogeneous Markov process.

\subsubsection{Convergence of Distribution of Trajectories}
We are interested in deriving conditions on the random maps $\hat T^n_k$ under which the random sequence generated by RSA is close to the deterministic sequence generated by exact algorithm with high probability. Let us formulate the precise mathematical problem. We let $\mu_n\in\wp(\ALP X^\Na)$ denote the joint distribution of the sequence $(\hat z^n_0,\hat z^n_1,\hat z^n_2,\ldots)$. Endow $\ALP X^\Na$ with the product topology so that it becomes a Polish space. Then, $\mu_n$ is defined by
\beq{\label{eqn:mun}\mu_n(B_0\times B_1\times B_2\times \ldots) = \int_{B_0\times B_1\times B_2\times\ldots} \ind{y_0}(dx_0)Q_n(dx_1|x_0)Q_n(dx_2|x_1)\ldots,}
where $B_0,B_1,B_2,\ldots$ are Borel sets in $\ALP X$.

In the similar vein, one can also view the iterates $(y_k)_{k\in\Na}$ defined in \eqref{eqn:yk} as a Markov chain on the same probability space $(\Omega,\FLD F,\mathbb{P})$, with the distribution over this sequence defined by
\beq{\label{eqn:psi}\psi(B_0\times B_1\times B_2\times \ldots) = \int_{B_0\times B_1\times B_2\times\ldots} \ind{y_0}(dx_0)\ind{y_1}(dx_1)\ind{y_2}(dx_2)\ldots.}
This is a Dirac mass on the sequence $(y_0,y_1,\ldots)$. Our first result, stated in Section \ref{sec:weakconv}, proves that under a mild assumption on the random operators $\hat T^n_k$, the sequence of measures $\mu_n$ converges in the weak* sense to $\psi$.

A similar setup was considered by Karr \cite{karr1975}. It studies the convergence properties of a class of Feller Markov chains parametrized by $n$ such that the transition probability $Q_n$ converges in some sense to a transition probability $Q$ as $n\rightarrow\infty$. Although our assumptions are slightly different, the proof essentially imitates the one in \cite{karr1975} except for a couple of key steps. We also discuss numerical implication of this result in Section \ref{sec:weakconv}.

\subsubsection{Existence of Invariant Measures for Fixed \texorpdfstring{$n$}{}}
For the Markov chain $(\hat z^n_k)_{k\in\Na}$, one of the key questions is the existence of an invariant distribution. An invariant probability distribution of the Markov chain $(\hat z^n_k)_{k\in\Na}$ is a probability measure $\pi^n$ such that for any Borel set $B\subset\ALP X$, we have
\beqq{\pi^n(B) = \int_{\ALP X} Q_n(B|x)\pi^n(dx).}
A desirable property of a Markov chain is to have an invariant distribution, since it implies that the RSA satisfies a form of stability. More importantly, it implies that the Markov chain will not escape to infinity with probability 1. There is a large body of literature that studies the problem of the existence of invariant measures for Harris recurrent Markov chains that take values in continuous state spaces \cite{borovkov1998,meyn2012}. However, the Markov chain generated by the RSAs seldom satisfy the strong recurrence structure required for Harris recurrent chains. 

Instead, these chains satisfy the weak Feller conditions, for which there are limited results in the literature. Nonetheless, we show that many RSAs satisfy certain desirable properties, which can be leveraged to not only guarantee the existence of an invariant distribution, but also establish the uniqueness of the invariant distribution. These properties of the random operators are discussed in Section \ref{sec:invariant}. This further leads to strong conclusions about the weak law of large numbers, as we discuss next.

\subsubsection{Convergence of Time Average of Iterates}
The weak law of large numbers for independent and identically distributed (i.i.d.) random variables states that the time average of i.i.d. random variables converges to the mean of the distribution in probability under fairly mild conditions. In fact, such a version of the weak law of large numbers is also available for Feller Markov chains. This is established for Feller chains in \cite{breiman1960} for chains residing in a compact Hausdorff space with a unique invariant measure, and in \cite[Section 18.5]{meyn2012} for the non-compact case under certain technical conditions, which include existence of an invariant measure. It turns out that this result can be proved simply under the uniqueness of the invariant measure if the starting point $\hat z^n_0$ is chosen according to certain specific distribution (in fact, we do not need other technical conditions of \cite[Section 18.5]{meyn2012}). We prove this result in Section \ref{sec:lln}, the proof of which is adapted from the results from \cite{breiman1960} and \cite{meyn2012}.

We illustrate the theoretical results using numerical simulations for batch gradient descent, empirical value iteration, and synchronous  batch Q learning in Section \ref{sec:simulation}. 

\section{Motivating Examples}\label{sec:example}
We present here two examples where we illustrate how the random operator framework can be applied.

\subsection{Stochastic Gradient Descent in Logistic Regression}\label{sub:logistic}
Logistic regression has been widely used in many binary or multi-class classification problems. For simplicity, we consider the logistic regression with binary classification. Let $\mathcal{U} \subset\Re^m$ be the set of feature vectors. Let $(u^i,l^i)_{i=1}^N\subset\ALP U\times\{0,1\}$ denote the labeled dataset with $N$ data points and their labels. Our task is to model conditional probability distribution of label $l$ given the feature vector $u \in \mathcal{U}$. In logistic regression, we model $\pr{l=1\vert\ u^i}$ as $f(u^i;x)$ where $x\in \ALP X:=\Re^m$ are the parameters of $f$ to be learned from the data, where $f$ is defined below:

\begin{align*}
    f(u; x) = \sigma(u^\transpose x), \text{ where } \sigma(t) = \frac{1}{1+e^{-t}}.
\end{align*}

Our goal is to compute the parameter $x$ that maximizes the log likelihood (or equivalently, minimizes the negative log likelihood) given the labeled data. The log likelihood $L: \mathcal{X}\longrightarrow \Re$ of i.i.d data under conditional distribution $f$ is given by
\begin{align*}
    L(x) =  \frac{1}{N}\sum_{i=1}^N L_i(x), \text{ where } L_i(x):= -l^i\log f(u^i;x) - (1-l^i)\log(1-f(u^i;x)),
\end{align*}
It can be shown that the derivatives of $L_i$ are given by
\beqq{\nabla_x L_i(x) = (f(u^i;x)-l^i)u^i,\quad \nabla^2_{xx} L_i(x) = \Big(f(u^i;x)(1-f(u^i;x))\Big)u^iu^{i\transpose}.}
Consequently, each $L_i$ is a convex function, and thus, $L$ is a convex function over the space $\ALP X$. If the matrix $[u^1|\ldots|u^N]$ is full rank and $N>m$, then it immediately follows that $\nabla^2_{xx} L(x)$ is a full rank matrix with positive eigenvalues. Consequently, $L$ is strongly convex, and therefore has a unique minimum $x^*$. This minimum can be computed using the usual gradient descent algorithm. The algorithm starts at $x_0$, picked arbitrarily, and proceeds in the direction of $-\nabla_x L(x_k)$ in small steps of size $\beta$:
\begin{equation}\label{eq:lrexupdate}
    x_{k+1} = T(x_k) := x_k - \beta\nabla_x L(x_k) = x_k - \frac{\beta}{N}\sum_{i=1}^N ( f(u^i;x_k)-l^i)u^i,
\end{equation}
where $T:\ALP X\rightarrow\ALP X$ is the gradient descent map (dependent on the parameter $\beta$). It can be further shown that if $\beta$ is sufficiently small, then the operator $T$ is a contraction on $\mathcal{X}$, endowed with the Euclidean norm.  We note here that the above arguments would be true if $\{L_i\}_{i=1}^N$ is a collection of strongly convex and smooth loss functions. 

In practice, the exact gradient computation of loss function $L$ is computationally expensive as it requires evaluating $N$ gradients at every time step. Therefore, to ease the computational burden, a mini-batch Stochastic Gradient Descent (SGD) is employed. In the mini-batch SGD, at every step $k$, the gradient is approximated by a small, randomly sampled, subset (of size $n$) of the data set. To introduce this algorithm, let $\ALP N_k\subset\{1,\ldots,N\}$ be the randomly sampled subset of size $n$. The state is updated as
\begin{equation}\label{eq:minisgdalgo}
\hat z^n_{k+1} = \hat T^n_k(\hat z^n_k) = \hat z^n_k - \beta\frac{1}{n} \sum_{j=1}^n \nabla_{x}(L_j(\hat z^n_k)).
\end{equation}
Note that $\hat T^n_k$ is now a random operator. 
\begin{remark}\label{rem:hatt}
Any realization of this random operator with small values of $n$ need not be a contraction since $ \nabla_{x} L_j$ is a rank 1 positive semidefinite matrix. One could add a regularizer to the loss function to make it strongly convex. In particular, if the loss function is chosen as  
\beqq{L_i(x):= -l^i\log f(u^i;x) - (1-l^i)\log(1-f(u^i;x)) +\frac{\lambda}{2}\|x\|^2_2 \quad \text{ with } \lambda>0,}
then $\hat T^n_k$ is a contraction operator for a sufficiently small $\beta$ irrespective of the $n$ used. \ebox 
\end{remark}
Some obvious properties of these random operators are:
\begin{enumerate}
 \item $\hat T^n_k$ is continuous in $\hat z^n_k$.
 \item For every $\epsilon>0$ and $x\in\ALP X$, we have
 \beqq{\lim_{n\rightarrow\infty}\pr{\|\hat T^n_k(x)-T(x)\|_2>\epsilon} = 0.}
 \item Suppose that for any compact set $\ALP K\subset\ALP X$, $\|\nabla_x L_i(x)\|_2$ is uniformly bounded, that is, there exists $M_{\ALP K}\in\Re$ such that $\sup_{x\in \ALP K}\sup_{1\leq i\leq N } \|\nabla_x L_i(x)\|_2\leq M_{\ALP K}$. Then, for any $\epsilon>0$,
  \beq{\label{eqn:logeps}\lim_{n\rightarrow\infty}\sup_{x\in\ALP K}\pr{\|\hat T^n_k(x)-T(x)\|_2>\epsilon} = 0.}
  This statement can readily be proved using the Hoeffding inequality and the union bound.
\end{enumerate}
Let us depart from the specific case of logistic regression, and consider the case where $L_i$ can be any strictly concave function for all $i\in\{1,\ldots,N\}$. Then,  $\hat T^n_k$ satisfies the following property.
\begin{enumerate}
 \item[4.] If all the eigenvalues of the Hessian of $L_i(x)$ satisfy $0<m \leq \lambda(\nabla^2_{xx}L_i(x))\leq M<\infty$ for all $i\in\{1,\ldots,N\}$, then every realization of the random operator $\hat T^n_k$ is a contraction map with contraction coefficient $\hat \alpha^n_k\equiv \max\{\vert 1-\beta M \vert, \vert 1-\beta m \vert\} < 1$ for an appropriately picked $\beta>0$. 
\end{enumerate}

We now introduce the empirical dynamic programming algorithm in the context of value iteration for MDP with discounted cost criteria.

\subsection{Empirical Value Iteration for Discounted Cost MDP}\label{sub:edpdiscount}
Consider a Markov Decision Problem (MDP) problem described by 4 tuple $(\mathcal{S}, \mathcal{A}, c, p)$, where $\mathcal{S}$ is the finite state space, $\mathcal{A}$ is the finite action space, and $c: \mathcal{S}\times\mathcal{A}\longrightarrow \mathcal{R}$ is the cost function. The state transitions according to $s_{t+1} \sim p(\cdot|s_t,a_t)$. Let $\Gamma$ denote the set consisting of all possible deterministic policies $\gamma: \mathcal{S} \longrightarrow \mathcal{A}$. The infinite horizon discounted cost $v^{\gamma}: \mathcal{S} \longrightarrow \mathcal{R}$ starting from state $s$ and following policy $\gamma$ is given by
\begin{align*}
v^\gamma(s) := \ex{\sum_{k=0}^{\infty}\alpha^kc(s_k,a_k) \ \middle|\; s_0=s, a_k = \gamma(s_k)},
\end{align*}
where $\alpha\in(0,1)$ is the discount factor. The goal is to compute the optimal value $v^*(s) = \inf_{\gamma\in\Gamma} v^\gamma(s)$. Let $\mathcal{V}$ be the set of all $v: \mathcal{S} \longrightarrow\Re$; this space is isomorphic to the Euclidean space $\Re^{|\ALP S|}$.

It can be shown that the optimal infinite horizon discounted cost is a fixed point of a contraction map $T: \mathcal{V} \rightarrow \mathcal{V}$, where $T$ is the Bellman operator given by
\begin{equation}\label{eq:bellmanop}
   T(v)(s) = \inf_{a \in A}\Big\{ c(s, a) + \alpha \sum_{s'\in\ALP S} v(s') p(s'|s,a)\Big\}.
\end{equation}
Due to the Banach contraction mapping theorem, $T:\ALP V\rightarrow\ALP V$ admits a unique fixed point, which is equal to $v^*$. The iterative process of finding this unique fixed point is called the Value Iteration algorithm:
\begin{equation}\label{eq:valueiter}
\begin{aligned}
    &\text{Initialize $v_0$ randomly and iterate } v_{k+1}(s) = T(v_k)(s).
\end{aligned}
\end{equation}
In data driven applications, it is often the case that for all possible state-action pairs, multiple realizations of the next states are available. In this situation, we can replace the expectation in \eqref{eq:bellmanop} to the empirical average. This algorithm is referred to as empirical dynamic programming, and is written as
\begin{equation}\label{eq:apxbellmanop}
    \hat{T}^n_k(\hat v^n_k)(s) = \inf_{a \in \mathcal{A}}\left\{ c(s, a) + \alpha \frac{1}{n}\sum_{i=1}^n\hat v^n_k(s_{k,i}'(s,a))\right\},
\end{equation}
where $s_{k,i}'(s,a)$ are $n$ independent and identically distributed samples of the next state given the current state-action pair $(s.a)$, redrawn at every $k$ independently from the past realizations. The above intuition can be turned into an algorithm to determine an approximately optimal value function, and is known as the empirical value iteration algorithm:
\begin{equation}\label{eq:apxvalueiter}
\begin{aligned}
    &\text{Initialize $v_0$ randomly and let } \hat v^n_{k+1}(s) = \hat{T}^n_k(\hat v^n_k)(s).
\end{aligned}
\end{equation}
Note that $\hat{T}^n_k$ is a random operator, and its realization is dependent on the samples generated $(s_{k,i}'(s,a))_{i=1}^n$. The following properties of $\hat T^n_k$ are obvious:
\begin{enumerate}
 \item $\hat T^n_k$ is a contraction with contraction coefficient $\alpha$. Therefore, $\hat T^n_k$ is continuous.
 \item For every $\epsilon>0$ and $v\in\ALP V$, we have
 \beqq{\lim_{n\rightarrow\infty}\pr{\|\hat T^n_k(v)-T(v)\|_\infty>\epsilon} = 0.}
 \item In fact, we have a stronger property here. For any compact set $\ALP K\subset\ALP V$, we have for every $\epsilon>0$
 \beqq{\sup_{v\in\ALP K}\pr{\|\hat T^n_k(v)-T(v)\|_\infty>\epsilon} \leq 2|\ALP S||\ALP A|\exp\left( -\frac{\epsilon n}{|\ALP S|k^2}\right),}
 where $k = \max_{v\in\ALP K} \|v\|_\infty$. This immediately yields
 \beq{\label{eqn:evieps}\lim_{n\rightarrow\infty}\sup_{v\in\ALP K}\pr{\|\hat T^n_k(v)-T(v)\|_\infty>\epsilon} = 0.}
  \item Let $\ALP V$ be endowed with the partial order $\preceq$ such that $v_1\preceq v_2$ implies $v_1(s)\leq v_2(s)$ for all $s\in\ALP S$. If $c\geq 0$, then $\hat T^n_k$ satisfies 
 \begin{enumerate}
  \item If $v_0 = 0$, then $v_0\preceq \hat T^n_k(v_0)$.
  \item If $v_1\preceq v_2$, then $\hat T^n_k(v_1)\preceq \hat T^n_k(v_2)$.
  \item If $v_l\uparrow v$, then $\hat T^n_k(v_l)\rightarrow \hat T^n_k(v)$ as $l\rightarrow\infty$.
 \end{enumerate}
\end{enumerate}

\subsection{Observations}
Through the two examples above, we observed that the approximate operator $\hat T^n_k$ corresponding to the contraction operator $T$ is context dependent. In the case of stochastic gradient descent, it is constructed by picking certain loss functions randomly and then averaging their gradients. In the case of empirical dynamic programming, the approximate operator involves computing the empirical average of the future expected value. Nonetheless, there are some fundamental properties that the empirical operator satisfies in both situations. For instance, the property stated in \eqref{eqn:logeps} in the context of logistic regression is (mathematically) the same as the property stated in \eqref{eqn:evieps} in the context of empirical value iteration. Similarly, every realization of the random operator $\hat T^n_k$ is a contraction map under certain reasonable assumptions. We will consider more examples in Section \ref{sec:simulation}, where we show that these properties (or some minor variant of these properties) are enjoyed by other empirical dynamic programming algorithms as well.

The other important observation is that every realization of random operator $\hat T^n_k$ may also satisfy some other desirable properties. For instance, in the empirical value iteration example, if we endow $\ALP V$ with a partial order $\preceq$ and the cost is nonnegative, then every realization of $\hat T^n_k$ satisfies certain monotonicity properties. This is very useful in establishing the existence of unique invariant measure, as we show in Section \ref{sec:invariant}. This property is, unfortunately, not satisfied by the logistic regression problem. This property is also not satisfied by the empirical relative value iteration for the average cost MDP. However, we will show that the realizations of the random operators in these cases have some other desirable properties that lead to the existence and uniqueness of the invariant measure.

We now turn our attention to introducing our first main result in the next section.

\section{Weak* Convergence of the Distribution of Trajectories}\label{sec:weakconv}
We now study the convergence property of the sequence of distributions $\mu_n$, which is defined in \eqref{eqn:mun}. Before we study that, we need to ensure that the random operator $\hat T^n_k$ is ``close to'' the operator $T$ in some sense. Accordingly, we make the following assumption. 

\begin{assumption}\label{assm:suppr}
 For every compact set $K\subset\ALP X$, $\epsilon>0$, and  $\delta>0$, there exists $N_{\epsilon,\delta}(K)>0$ such that
\beqq{\sup_{x\in K}\pr{\rho(\hat T^n_k(x),T(x)) >\epsilon}<\delta \text{ for all } n\geq N_{\epsilon,\delta}(K).}
\end{assumption}

We recall here that this assumption is satisfied by the logistic regression and empirical value iteration for discounted cost MDP (see  \eqref{eqn:logeps} and \eqref{eqn:evieps} within the discussion at the end of Subsections \ref{sub:logistic} and \ref{sub:edpdiscount}). We are now in a position to introduce our first main result.

\begin{theorem}\label{thm:main}
If Assumptions \ref{assm:hatt} and \ref{assm:suppr} hold, then $\mu_n$ converges in weak* topology to $\psi$ as $n\rightarrow\infty$, where $\psi$ is defined in \eqref{eqn:psi}.
\end{theorem}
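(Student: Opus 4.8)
The goal is to show that $\mu_n \to \psi$ in the weak* topology on $\wp(\ALP X^\Na)$, where $\ALP X^\Na$ carries the product topology. Since $\psi = \ind{(y_0,y_1,y_2,\ldots)}$ is a Dirac mass, weak* convergence to $\psi$ is equivalent to convergence in probability of the random sequence $(\hat z^n_k)_{k\in\Na}$ to the deterministic sequence $(y_k)_{k\in\Na}$ in the product topology; and because the product topology on a countable product is metrized by something like $d(x,x') = \sum_k 2^{-k}\min\{1,\rho(x_k,x'_k)\}$, it suffices to show that for each fixed coordinate $k$, $\hat z^n_k \to y_k$ in probability as $n\to\infty$. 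Equivalently (following Karr's approach, which the authors flag), one can argue via finite-dimensional distributions: the product topology is generated by cylinder sets, and by a standard argument it is enough to check that for each fixed $K$, the law of $(\hat z^n_0,\ldots,\hat z^n_K)$ converges weakly to $\ind{(y_0,\ldots,y_K)}$ on $\ALP X^{K+1}$.

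The plan is to induct on $k$. The base case $k=0$ is trivial since $\hat z^n_0 = y_0$ deterministically. For the inductive step, suppose $\hat z^n_k \to y_k$ in probability. I want to conclude $\hat z^n_{k+1} = \hat T^n_k(\hat z^n_k) \to y_{k+1} = T(y_k)$ in probability. Decompose
\beqq{\rho(\hat z^n_{k+1}, y_{k+1}) \leq \rho\big(\hat T^n_k(\hat z^n_k), T(\hat z^n_k)\big) + \rho\big(T(\hat z^n_k), T(y_k)\big).}
The second term is at most $\alpha\,\rho(\hat z^n_k, y_k)$ by the contraction property of $T$, hence goes to $0$ in probability by the inductive hypothesis. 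For the first term, fix $\epsilon,\delta>0$. By the inductive hypothesis, $\hat z^n_k$ is, with probability at least $1-\delta/2$ for large $n$, inside a fixed compact set $K$ — indeed one can take $K$ to be a closed ball around $y_k$, since $\hat z^n_k \to y_k$ in probability means $\hat z^n_k$ is tight. On the event $\{\hat z^n_k \in K\}$, Assumption \ref{assm:suppr} gives a uniform-in-$x\in K$ bound: $\sup_{x\in K}\pr{\rho(\hat T^n_k(x),T(x))>\epsilon}<\delta/2$ for $n$ large. The subtlety is that $\hat z^n_k$ is itself random, so I must be careful to use the \emph{uniform} bound over $K$ together with the conditional independence structure from Assumption \ref{assm:hatt}: conditioning on $\hat z^n_k = x$, the operator $\hat T^n_k$ is independent of $\hat z^n_k$ (it depends only on fresh randomness at step $k$), so $\pr{\rho(\hat T^n_k(\hat z^n_k),T(\hat z^n_k))>\epsilon \mid \hat z^n_k = x} = \pr{\rho(\hat T^n_k(x),T(x))>\epsilon} \leq \sup_{x\in K}\pr{\rho(\hat T^n_k(x),T(x))>\epsilon}$ whenever $x\in K$. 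Integrating over the law of $\hat z^n_k$ and splitting on $\{\hat z^n_k\in K\}$ versus its complement yields $\pr{\rho(\hat z^n_{k+1},y_{k+1})>2\epsilon} < \delta$ for $n$ large, completing the induction.

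The main obstacle is precisely the interplay between the randomness of the argument $\hat z^n_k$ and the uniform-over-compacts hypothesis in Assumption \ref{assm:suppr}: one cannot simply "plug in" a random point into a pointwise convergence statement, so the argument must route through (i) tightness of $\hat z^n_k$ to capture it in a fixed compact $K$ with high probability, and (ii) the conditional independence of $\hat T^n_k$ from $\hat z^n_k$ (Assumption \ref{assm:hatt}) to justify replacing the random argument by a worst-case deterministic one inside $K$. A secondary technical point is the passage from "each finite-dimensional marginal converges to a Dirac mass" back to "$\mu_n\to\psi$ on the product space" — this is standard (convergence of all finite-dimensional distributions plus the fact that the limit is a Dirac mass, which is automatically tight, gives weak* convergence on $\ALP X^\Na$), but it is where the choice of product topology and the structure of $\psi$ as a point mass are essential, and it should be stated carefully rather than waved through. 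I expect the write-up to mirror Karr's proof in \cite{karr1975}, with the two deviations being the use of Assumption \ref{assm:suppr} in place of Karr's convergence-of-kernels hypothesis and the exploitation of the contraction property of $T$ to control the "transport" term $\rho(T(\hat z^n_k),T(y_k))$.
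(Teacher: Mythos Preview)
Your argument is correct and takes a genuinely different, more elementary route than the paper. The paper follows Karr closely: it works with test functions $g\in U_b(\ALP X)$, proves $g\circ T\in U_b(\ALP X)$ via the Lipschitz property of $T$, and then runs an induction on the finite-dimensional marginals using the Portmanteau theorem on closed sets, bounding $\int|\ex{g(\hat T^n_l(x_l))}-g(T(x_l))|\,\mu_n\circ\Pi_l^{-1}(dx_{0:l})$ by splitting over a compact set coming from tightness of $\mu_n\circ\Pi_l^{-1}$. You instead exploit directly that $\psi$ is a Dirac mass, so weak* convergence reduces to coordinate-wise convergence in probability; your metric decomposition $\rho(\hat z^n_{k+1},y_{k+1})\le \rho(\hat T^n_k(\hat z^n_k),T(\hat z^n_k))+\alpha\,\rho(\hat z^n_k,y_k)$ then handles the induction cleanly, with tightness of $\hat z^n_k$ and Assumption~\ref{assm:hatt} (independence of $\hat T^n_k$ from the past) doing exactly the work that Lemma~\ref{lem:weak} and the Portmanteau machinery do in the paper. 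Your approach buys simplicity and transparency at the cost of being tied to the Dirac-limit setting; the paper's approach is heavier but is the natural template if one ever wanted a non-degenerate limiting kernel in place of $T$.

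One small point to tighten: in a general Polish space closed balls need not be compact, so your parenthetical ``one can take $K$ to be a closed ball around $y_k$'' is not literally correct. Your main claim---that $(\hat z^n_k)_n$ is tight because it converges in distribution to $\delta_{y_k}$, hence Prohorov furnishes a compact $K$ with $\pr{\hat z^n_k\in K}>1-\delta/2$---is the right justification and is all you need; just drop the closed-ball remark.
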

\begin{proof}
The proof is based on the proof by \cite{karr1975}, except that our hypotheses are slightly different from those in \cite{karr1975}. For completeness, we present a proof in Section \ref{sec:mainproof}.
\end{proof}

Levy-Prohorov's metric over the space of probability measures over Polish spaces metrizes the weak* topology \cite{parthasarathy2005}. Generally speaking, if distributions of two random variables are close to each other in Levy-Prohorov's metric, then it does not imply that the random variables will be close to each other. As an instance, the Levy-Prohorov's metric between the measures of two independent and identically distributed random variables is zero, but the difference between the random variables themselves is not zero. If one of the random variables is deterministic (that is, its distribution is a Dirac mass), then the random variable must be close to the deterministic variable with high probability. This is established in the lemma below.

\begin{figure}[bth]
\centering\newcommand{\Tube}[6][]%
{   \colorlet{InColor}{#4}
    \colorlet{OutColor}{#5}
    \foreach \I in {1,...,#3}
    {   \pgfmathsetlengthmacro{\h}{(\I-1)/#3*#2}
        \pgfmathsetlengthmacro{\r}{sqrt(pow(#2,2)-pow(\h,2))}
        \pgfmathsetmacro{\c}{(\I-0.5)/#3*100}
        \draw[InColor!\c!OutColor, line width=\r, #1] #6;
    }
}

\begin{tikzpicture}
    \draw[->] (0,0) -- (8,0);
    \Tube{9mm}{100}{pink!25}{red!75}
        {(0,4) to[out=-85,in=179] (7.5,0.5)};
    \draw[thick] (0,4.08) to [out=-85,in=179] (7.55,0.5);
    \fill[fill] (0,4) circle [radius=0.05];
    \node[above right] at (0,4) {$y_0 = \hat z^{n}_0$};
    \draw[fill] (1,2) circle [radius=0.05];
    \node[below] at (1,1.9) {$y_1$};
    \draw[fill] (2,1.35) circle [radius=0.05];
    \node[above right] at (2,1.3) {$y_2$};
    \draw[fill] (3,0.99) circle [radius=0.05];
    \node[below] at (3,0.85) {$y_3$};
    \draw[fill] (4,0.76) circle [radius=0.05];
    \node[below] at (4,0.65) {$y_4$};
    \draw[fill] (5,0.63) circle [radius=0.05];
    \node[above right] at (5,0.65) {$y_5$};
    \draw[fill] (6,0.55) circle [radius=0.05];
    \node[below] at (6,0.5) {$y_6$};
    \draw[fill] (7,0.51) circle [radius=0.05];
    \node[below] at (7,0.45) {$y_7$};
    
    \draw[dashed, green!70!black, thick] (0,4) -- (1,2.4);
    \draw[dashed, green!70!black, thick] (1,2.4) -- (2,0.8);
    \draw[dashed, green!70!black, thick] (2,0.8) -- (3,1.5);
    \draw[dashed, green!70!black, thick] (3,1.5) -- (4, 1);
    \draw[dashed, green!70!black, thick] (4,1) -- (5,0.4);
    \draw[dashed, green!70!black, thick] (5,0.4) -- (6,1);
    \draw[dashed, green!70!black, thick] (6,1) -- (7,0.7);
    \draw[<->,red!70!black, thick] (0.53,2.5) -- node[pos = 0.25, below] {$\epsilon$} (0.2,2.2);
    
    \fill[fill] (1,2.4) circle [radius=0.05];
    \node[above right] at (1,2.4) {$\hat z^{n}_1$};
    \fill[fill] (2,0.8) circle [radius=0.05];
    \node[below left] at (2,0.8) {$\hat z^{n}_2$};
    \fill[fill] (3,1.5) circle [radius=0.05];
    \node[above right] at (3,1.5) {$\hat z^{n}_3$};
    \fill[fill] (4,1) circle [radius=0.05];
    \node[above right] at (4,1) {$\hat z^{n}_4$};
    \fill[fill] (5,0.4) circle [radius=0.05];
    \node[below left] at (5,0.5) {$\hat z^{n}_5$};
    \fill[fill] (6,1) circle [radius=0.05];
    \node[above right] at (6,1) {$\hat z^{n}_6$};
    \fill[fill] (7,0.7) circle [radius=0.05];
    \node[above right] at (7,0.7) {$\hat z^{n}_7$};

    \draw(0,-.1) node[below]{$0$} -- (0,0.1);
    \draw(1,-.1) node[below]{$1$} -- (1,0.1);
    \draw(2,-.1) node[below]{$2$} -- (2,0.1);
    \draw(3,-.1) node[below]{$3$} -- (3,0.1);
    \draw(4,-.1) node[below]{$4$} -- (4,0.1);
    \draw(5,-.1) node[below]{$5$} -- (5,0.1);
    \draw(6,-.1) node[below]{$6$} -- (6,0.1);
    \draw(7,-.1) node[below]{$7$} -- (7,0.1);
    \node[below] at (8,-0.1) {$k$};
    
\end{tikzpicture}
\caption{\label{fig:tube}Illustration of the behavior of $\hat z^n_k$ with varying values of $k$. Under the hypotheses of Theorem \ref{thm:main}, for every $\epsilon>0$, there exists $N_\epsilon>0$ such that $d_P(\mu_n,\psi)<\epsilon$ for every $n\geq N_\epsilon$. For most values of $k$, $\hat z^n_k$ stays within $\epsilon$ ball around $y_k$. In the illustration above, $\hat z^n_3$ is not within $\epsilon$ ball around $y_3$.}
\end{figure}

Let $\ALP A$ be a Polish space with metric $\rho_{\ALP A}$. Let $d_P$ be the Levy-Prohorov's metric on the space of probability measures $\wp(\ALP A)$ over $\ALP A$. This metric is defined as follows. For a Borel set $A\subset\ALP A$, let $A^\epsilon$ be defined as
 \beqq{A^\epsilon = \{a\in\ALP A: \rho_{\ALP A}(a,b) <\epsilon, b\in  A\}. }
 Let $\mu,\nu\in\wp(\ALP A)$. Then, $d_P(\mu,\nu)$ is defined by
\beqq{d_P(\mu,\nu) = \inf\Big\{\epsilon>0: \mu(A) < \nu(A^\epsilon)+\epsilon, \nu(A)<\mu(A^\epsilon)+\epsilon \text{ for all Borel sets } A\subset\ALP A\Big\}.}
We are now in a position to introduce our next result. We believe that this result may not be new, but we could not locate a reference where this result is proved.

\begin{lemma}\label{lem:aastar}
Let $\mu\in\wp(\ALP A)$ and $\ind{a^*}$ be a unit mass at point $a^*\in\ALP A$. If $d_P(\mu,\ind{a^*})<\epsilon$, then for any random variable $W$ distributed according to the law $\mu$, we have
 \beqq{\pr{\rho_{\ALP A}(W,a^*)\geq \epsilon}<\epsilon.}
\end{lemma}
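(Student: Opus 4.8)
The plan is to unpack the definition of the Lévy–Prohorov metric with a carefully chosen test set and let the singleton structure of $\ind{a^*}$ do the work. Set $r=d_P(\mu,\ind{a^*})$, so $r<\epsilon$, and pick any $\eta$ with $r<\eta<\epsilon$; by definition of $d_P$ as an infimum, the defining inequalities hold with $\eta$ in place of the generic $\epsilon$ there, i.e. for every Borel set $A\subset\ALP A$ we have $\mu(A)<\ind{a^*}(A^\eta)+\eta$ and $\ind{a^*}(A)<\mu(A^\eta)+\eta$.

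The key step is the choice $A=\{a\in\ALP A:\rho_{\ALP A}(a,a^*)\geq\eta\}$, the closed complement of the open $\eta$-ball around $a^*$. The point of this choice is that the $\eta$-enlargement $A^\eta$ still misses $a^*$: if $b\in A$ and $\rho_{\ALP A}(a,b)<\eta$, then by the triangle inequality $\rho_{\ALP A}(a,a^*)\geq\rho_{\ALP A}(a^*,b)-\rho_{\ALP A}(a,b)>\eta-\eta=0$, so $a\neq a^*$; hence $a^*\notin A^\eta$ and therefore $\ind{a^*}(A^\eta)=0$. Plugging $A$ into the first Lévy–Prohorov inequality gives
\beqq{\pr{\rho_{\ALP A}(W,a^*)\geq\eta}=\mu(A)<\ind{a^*}(A^\eta)+\eta=\eta.}
Since $\{\rho_{\ALP A}(W,a^*)\geq\epsilon\}\subseteq\{\rho_{\ALP A}(W,a^*)\geq\eta\}$ (because $\eta<\epsilon$), monotonicity of $\mathbb{P}$ yields $\pr{\rho_{\ALP A}(W,a^*)\geq\epsilon}\leq\pr{\rho_{\ALP A}(W,a^*)\geq\eta}<\eta<\epsilon$, which is the claim.

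There is essentially no serious obstacle here; the only thing to get right is the bookkeeping around strict versus non-strict inequalities and the infimum in the definition of $d_P$ — one must avoid accidentally needing the defining inequalities to hold at $\epsilon=r$ itself, which is why I introduce the intermediate $\eta$. A minor point worth checking is measurability of the set $A$ (it is closed, hence Borel, since $a\mapsto\rho_{\ALP A}(a,a^*)$ is continuous) so that $\mu(A)$ and the probability statement make sense. Everything else is the triangle inequality and monotonicity of measure.
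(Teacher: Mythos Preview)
Your proof is correct and follows essentially the same route as the paper: apply the L\'evy--Prohorov inequality to the complement of a ball around $a^*$, observe that its enlargement still misses $a^*$ so that $\ind{a^*}$ assigns it mass zero, and read off the bound. Your version is in fact slightly more careful---you handle the infimum in $d_P$ via an intermediate $\eta$, whereas the paper applies the defining inequality directly at $\epsilon$ (harmless, by upward closure of the set of admissible radii), and you only claim $a^*\notin A^\eta$ rather than the paper's stronger (and not-quite-correct in general) assertion that $(B_\epsilon^\complement)^\epsilon=\ALP A\setminus\{a^*\}$.
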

\begin{proof}
 Let $B_\epsilon$ be an open $\epsilon$ ball around $a^*$. Let $(B_\epsilon^\complement)^\epsilon$ be defined as
 \beqq{(B_\epsilon^\complement)^\epsilon = \{a\in\ALP A: \rho_{\ALP A}(a,b) <\epsilon, b\in  B_\epsilon^\complement\}. }
 Note that $(B_\epsilon^\complement)^\epsilon = \ALP A\setminus a^*$, which implies $\ind{a^*}((B_\epsilon^\complement)^\epsilon) = 0$. Let $W$ be a random variable distributed according to the law $\mu$.  Then, from the definition of Levy-Prohorov's metric, we know that
 \beqq{\mu(B_\epsilon^\complement) < \ind{a^*}((B_\epsilon^\complement)^\epsilon)+\epsilon = \epsilon.}
 The proof then follows from noting that
 \beqq{\pr{\rho_{\ALP A}(W,a^*)\geq \epsilon} =\mu(B_\epsilon^\complement).}
 The proof is established.
\end{proof}

As a consequence of the lemma above, we conclude that since the distribution $\mu_n$ converges to the Dirac delta function $\psi$, it implies that for $n$ sufficiently large, the random sequence generated by RSA lies within a small tube around the trajectory induced by the deterministic contraction operator with high probability. This is illustrated in the Figure \ref{fig:tube}. 

To see this, let $\hat{\VEC{z}}^n:= (\hat z^n_0,\hat z^n_1,\ldots)$ and $\VEC y = (y_0, y_1,\ldots)$. Endow the space $\ALP X^{\Na}$ with the following metric:
\beqq{\rho_{\ALP X^{\Na}}(\hat{\VEC{z}}^n, \VEC y) := \sum_{k=0}^\infty \frac{1}{2^k}\rho(\hat z^n_k,y_k).}
It can be readily established that $\rho_{\ALP X^{\Na}}$ defined above is a metric on $\ALP X^\Na$. Then, due to Lemma \ref{lem:aastar}, we conclude that
\beqq{\pr{\rho_{\ALP X^{\Na}}(\hat{\VEC{z}}^n, \VEC y) <\epsilon}\geq 1-\epsilon.}
Next, note that if $\hat{\VEC{z}}^n$ satisfies $\rho_{\ALP X^{\Na}}(\hat{\VEC{z}}^n, \VEC y) <\epsilon$, then $\hat z^n_k$ is within $\epsilon$ neighborhood of $y_k$ for most of the $k\in\Na$.

\section{Existence of Invariant Measures for Fixed \texorpdfstring{$n$}{}}\label{sec:invariant}
In this section, we identify conditions on the operators $(\hat T^n_k)_{k\in\Na}$ so that the Markov chain $\hat z^n_k$ admits a unique invariant distribution, denoted by $\pi^n$, as $k\rightarrow\infty$. The existence of an invariant measure yields insight about ``stability'' of an RSA. In particular, if there is no invariant distribution, then it is likely the case that the sequence generated by RSA can blow up with positive probability. Thus, by tweaking the RSA (for instance, by changing the stepsize or increasing the number of samples), one can ensure that the sufficient conditions noted below are satisfied, thereby establishing that the RSA is stable and yields finite values with probability 1. 

In the case where there exists unique invariant measure under certain assumptions on the initial condition, then it means that any element in the tail of the random sequence generated by the RSA will have its law as the invariant distribution. This is a crucial step in proving that the time average of $f(\hat z^n_k)$ for any $f\in C_b(\ALP X)$ converges in probability to the spatial average $\int fd\pi^n$ of the function with respect to the invariant measure $\pi^n$. This important result is established in the next section.

To state the assumptions, we drop the subscript $k$ wherever possible since the statistical properties of $\hat T^n_k$ and $\hat T^n_{k'}$ are independent and identical to each-other as long as $k\neq k'$. Below, we list three assumptions under which we can show that $\hat z^n_k$ admits an invariant distribution.

To introduce our first assumption, we need to assume a partial order $\preceq$ exists on the normed space $\ALP X$ (we will drop the completeness requirement on the space $\ALP X$ for this assumption). For instance, $\ALP X$ could be the Euclidean space with the natural partial order, wherein $x,y\in\Re^n$, $x\preceq y$ implies $x_i\leq y_i$ for all $i\in\{1,\ldots,n\}$. The limit of an increasing sequence $(x_k)_{k\in\Na}$ satisfying $x_1\preceq x_2 \preceq x_3 \preceq \ldots$ under this partial order may escape to infinity. Thus, it is required to bound the space $\ALP X$ to ensure that the sequences do not diverge along any of the coordinates. Another example of a normed space with partial order is the space of measurable functions from a Euclidean space $\Re^n$ to $[-M,M]$, denoted by $\ALP L_\infty(\Re^n,[-M,M])$ (where $M$ is a fixed positive real number). This space features a natural partial order, wherein $f_1\preceq f_2$ iff $f_1(s)\leq f_2(s)$ for every $s\in\Re^n$.

\begin{assumption}\label{assm:monotone}
The following conditions are satisfied:
\begin{enumerate}[leftmargin=1.5em]
\item $\ALP X$ is a bounded normed space (not necessarily Polish) with partial order $\preceq$. This ordering satisfies the following property: For any sequence $(x_k)_{k\in\Na}$ satisfying $x_1\preceq x_2 \preceq x_3 \preceq \ldots$, there exists a minimal element $\bar x\in\ALP X$ such that $x_k\preceq \bar x$ for all $k\in\Na$. This is denoted as $x_k\uparrow \bar x$.
\item The operator $\hat T^n$ satisfy
\begin{enumerate}
 \item Monotonicity 1: exists $x_0\in\ALP X$ such that $x_0\preceq \hat T^n(x_0)$ almost surely.
 \item Monotonicity 2: If $x_1\preceq x_2$, then $\hat T^n(x_1)\preceq \hat T^n(x_2)$ almost surely.
 \item Continuity: If $x_k\uparrow x$, then $\hat T^n(x_k)\rightarrow \hat T^n(x)$ as $k\rightarrow\infty$ almost surely.
\end{enumerate}
\end{enumerate}
\end{assumption}

Assumption \ref{assm:monotone} is satisfied in Markov decision processes with non-negative cost functions.  This has been noted in Subsection \ref{sub:edpdiscount} in the context of empirical value iteration for discounted cost criterion. However, this is also satisfied in MDP for total cost criterion with an absorbing state and having a proper policy (that is, there is a stationary policy under which the MDP terminates in an absorbing state with probability 1). The proof of the last claim follows from arguments similar to the one made in Subsection \ref{sub:edpdiscount}; see, for example, Chapter 1 of \cite{bertsekas1996neuro}, and standard texts  on MDPs with total cost criterion \cite{hernandez1996discrete,hernandez2012further,bertsekas2013abstract}. 

For RSAs involving MDPs, the point $x_0$ can be chosen easily. Assuming that the costs are non-negative, one could take the zero function as the initial value function or Q function -- and the Monotonicity 1 condition is satisfied almost surely.  

\begin{assumption}\label{assm:compact}
For $n\in\Na$ and $m\in\Na$, let $\hat \alpha_m$ denote the Lipschitz coefficient of $\hat T^n_m \circ \hat T^n_{m-1}\circ \ldots \circ \hat T^n_1$. The following conditions are satisfied:
\begin{enumerate}[leftmargin=1.5em]
\item $\ALP X$ is a compact Polish space.
\item For $n\in\Na$, there exists $m\in\Na$, such that for any $\epsilon>0$, there exists a $\delta_\epsilon>0$ such that
\beqq{\hat \alpha_m \leq 1,\quad \pr{\hat \alpha_m>1-\epsilon}<\delta_\epsilon. }
\end{enumerate}
\end{assumption}

The notable point in Assumption \ref{assm:compact}  is that the assumption requires $\ALP X$ to be compact. It is satisfied in empirical value iteration for MDP with average cost criterion, as long as we project the value functions outside a sufficiently large compact set back to that compact set. We adopted this approach earlier in \cite{gupta2015edp} to ensure that the value functions obtained through repeated use of empirical operators do not blow up. During simulations, however, we never needed to use projection, as the value functions were bounded. 

\begin{assumption}\label{assm:contraction}
The following conditions are satisfied:
\begin{enumerate}[leftmargin=1.5em]
\item $\ALP X$ is a Polish space.
\item There exists $a,b>0$ such that the operator $\hat T^n$ satisfy
\beqq{\pr{\rho\Big(\hat T^n(x^*),x^*\Big)>\epsilon} = \pr{\rho\Big(\hat T^n(x^*),T(x^*)\Big)>\epsilon}\leq \frac{a}{\epsilon^b},}
where $x^*$ is the fixed point of $T$.
\item Let $\hat \alpha^n$ denote the Lipschitz coefficient of $(\hat T^n)_{k\in\Na}$. Then,
\beqq{\ex{\hat \alpha^n }<\infty,\quad \ex{\log(\hat \alpha^n)} < 0.}
\end{enumerate}
\end{assumption}

Assumption \ref{assm:contraction} is satisfied in stochastic gradient descent of strongly convex and smooth functions as noted in Subsection \ref{sub:logistic}. This is also trivially satisfied in the empirical value iteration for an MDP with a discounted reward criterion, as we have noted in Subsection \ref{sub:edpdiscount}. 

\begin{remark}
 Assumption \ref{assm:contraction}(ii) is typically proved using concentration of measures results, like Hoeffding inequality \cite{hoeffding1963,ledoux2001concentration,raginsky2013concentration} or the theory of empirical processes \cite{pollard1984convergence,van1996weak}. Some references where such inequalities have been used in empirical dynamic programming for continuous state MDPs are \cite{munos2008finite,haskell2019universal,gupta2018probabilistic}.
\end{remark}

Our next theorem summarizes the main result of this section. Let $\mu^n_k$ denote the distribution of $\hat z^n_k$.

\begin{theorem}\label{thm:invariant}
Suppose that Assumption \ref{assm:hatt} holds. Additionally, if either one of three assumptions, Assumptions \ref{assm:monotone}, \ref{assm:compact}, \ref{assm:contraction}, holds, then there exists an invariant measure $\pi^n$ such that $\mu^n_k$ converges to $\pi^n$ in weak* topology. Further, the invariant measure is unique under either of the following circumstances:
\begin{enumerate}[leftmargin=1.5em]
\item Assumption \ref{assm:monotone} holds, and the RSA is always initialized with $x_0$, where $x_0$ is defined in Assumption \ref{assm:monotone}. 
\item Assumption \ref{assm:compact} holds.
\item Assumption \ref{assm:contraction} holds.
\end{enumerate}
\end{theorem}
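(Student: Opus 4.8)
All three cases run on the same skeleton: exhibit an almost surely convergent \emph{backward} iteration $Y_k:=\hat T^n_0\circ\hat T^n_1\circ\cdots\circ\hat T^n_{k-1}(x_\star)$ started at a convenient point $x_\star$, note that by Assumption \ref{assm:hatt} the law of $Y_k$ equals that of the forward chain $\hat z^n_k$ started at $x_\star$, set $\pi^n:=\mathrm{law}(Y_\infty)$ with $Y_\infty:=\lim_k Y_k$, verify $\pi^n Q_n=\pi^n$ by a ``shift'' argument ($Y_{k+1}=\hat T^n_0(Y'_k)$ with $Y'_k$ an independent copy of $Y_k$ converging to $Y'_\infty\stackrel{d}{=}Y_\infty$), and finally couple the forward chain started at the prescribed $y_0$ with a copy of this backward iteration to get $\mu^n_k\to\pi^n$ weakly. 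Wherever needed we also use that $Q_n$ is weak Feller: for $f\in C_b(\ALP X)$, $Q_nf(x)=\ex{f(\hat T^n(x))}$, which is bounded, and continuous because $\hat T^n$ is almost surely continuous (Lipschitz under Assumptions \ref{assm:compact} and \ref{assm:contraction}, since the relevant Lipschitz coefficients are finite), combined with bounded convergence; weak Fellerness also makes $\mu\mapsto\mu Q_n$ weak$^*$ continuous on $\wp(\ALP X)$.

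\textbf{Monotone case (Assumption \ref{assm:monotone}).} Take $x_\star=x_0$. Using Monotonicity~1 for $\hat T^n_k$ and repeated Monotonicity~2, an induction on $k$ gives $Y_0\preceq Y_1\preceq Y_2\preceq\cdots$ almost surely, while boundedness of $\ALP X$ and the order-completeness hypothesis give $Y_k\uparrow Y_\infty\in\ALP X$ almost surely. Writing $Y_{k+1}=\hat T^n_0(Y'_k)$ with $Y'_k:=\hat T^n_1\circ\cdots\circ\hat T^n_k(x_0)\uparrow Y'_\infty$ (again by the monotone argument, with $Y'_\infty$ independent of $\hat T^n_0$ and $Y'_\infty\stackrel{d}{=}Y_\infty$ by Assumption \ref{assm:hatt}), the order-continuity hypothesis forces $\hat T^n_0(Y'_k)\to\hat T^n_0(Y'_\infty)$, so $Y_\infty=\hat T^n_0(Y'_\infty)$ almost surely, i.e. $\pi^n=\pi^n Q_n$. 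Since the forward chain started at $x_0$ satisfies $\hat z^n_k\stackrel{d}{=}Y_k\to Y_\infty$, we get $\mu^n_k\to\pi^n$; and when the RSA is always initialized at $x_0$ this pins $\pi^n$ down as the unique limiting law of the iterates, which is the asserted uniqueness.

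\textbf{Compact case (Assumption \ref{assm:compact}) and contraction case (Assumption \ref{assm:contraction}).} For Assumption \ref{assm:compact}, existence is immediate from Krylov--Bogolyubov: $\wp(\ALP X)$ is weak$^*$ compact (Prokhorov, $\ALP X$ compact) and $Q_n$ is weak Feller, so any limit point of the Ces\`aro averages $\tfrac1k\sum_{j<k}\mu^n_j$ is invariant. Hypothesis (ii) supplies an $m$ with $\hat\alpha_m\le1$ almost surely and, for some $\epsilon$ with $\delta_\epsilon<1$, a positive probability $p:=\pr{\hat\alpha_m\le1-\epsilon}>0$ of genuine contraction of the i.i.d. $m$-step maps $\Phi_j$; by the second Borel--Cantelli lemma infinitely many of the i.i.d. factors $\hat\alpha_m^{(j)}$ are $\le1-\epsilon$, so $\prod_{i\le j}\hat\alpha_m^{(i)}\to0$ almost surely. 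Coupling any two chains with the same operators then gives $\rho\to0$ almost surely along the subsequence $mk$; testing against bounded Lipschitz functions yields both uniqueness of the invariant measure and $\mu^n_{mk}\to\pi^n$, which weak$^*$ continuity of $\mu\mapsto\mu Q_n$ upgrades to $\mu^n_k\to\pi^n$. For Assumption \ref{assm:contraction}, take $x_\star=x^*$; telescoping gives $\rho(Y_{k+1},Y_k)\le(\prod_{i<k}\hat\alpha^n_i)\,R_k$ with $R_k:=\rho(\hat T^n_k(x^*),x^*)$. The strong law gives $\tfrac1k\sum_{i<k}\log\hat\alpha^n_i\to\ex{\log\hat\alpha^n}<0$ (with $\ex{(\log\hat\alpha^n)^+}\le\ex{\hat\alpha^n}<\infty$), so the product decays exponentially almost surely; the tail bound $\pr{R_k>\epsilon}\le a/\epsilon^b$ with $\epsilon=k^{2/b}$ and Borel--Cantelli give $R_k\le k^{2/b}$ eventually, hence $\sum_k\rho(Y_{k+1},Y_k)<\infty$ almost surely and $Y_k\to Y_\infty\in\ALP X$ by completeness. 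Invariance follows as in the skeleton ($\hat T^n_0$ is almost surely continuous), $\mu^n_k\to\pi^n$ by comparing the forward chains from $y_0$ and from $x^*$ under common operators (their distance is bounded by $(\prod_{i<k}\hat\alpha^n_i)\rho(y_0,x^*)\to0$) together with Slutsky, and uniqueness by coupling two stationary chains.

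\textbf{Main obstacle.} The delicate point is the monotone case, where $\hat T^n$ is not assumed continuous in the metric sense: invariance of $\pi^n$ must be extracted purely from the order-continuity hypothesis applied along the increasing sequence $Y'_k\uparrow Y'_\infty$, and one must carefully invoke the forward/backward distributional identity of Assumption \ref{assm:hatt} so that $\mathrm{law}(Y_\infty)=\mathrm{law}(Y'_\infty)$; pinning down the precise sense of ``uniqueness'' there (limiting law of the chain from $x_0$) also requires care. A secondary subtlety is wringing a usable positive-probability-of-contraction statement out of the mildly phrased hypothesis (ii) of Assumption \ref{assm:compact}. The remaining ingredients---Borel--Cantelli, the strong law, Slutsky, Krylov--Bogolyubov, and weak$^*$ continuity of $\mu\mapsto\mu Q_n$---are routine.
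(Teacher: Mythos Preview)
Your proposal is correct and substantially more detailed than the paper's own ``proof,'' which consists solely of citations: existence under Assumption~\ref{assm:monotone} is deferred to Borovkov's Theorem~8.1, existence and uniqueness under Assumption~\ref{assm:compact} to Bhattacharya and Borovkov's Theorem~8.2, and existence and uniqueness under Assumption~\ref{assm:contraction} to Diaconis--Freedman's Theorem~1.1. You have essentially reconstructed the arguments underlying those references, and your backward-iteration skeleton (Letac's principle) is exactly the engine driving all three cited proofs---Diaconis--Freedman make this explicit for the Lipschitz case, and Borovkov's monotone argument is the order-theoretic analogue.

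Two minor remarks. First, your reading of ``uniqueness'' in the monotone case---uniqueness of the limiting law when started at the distinguished $x_0$, rather than uniqueness among all invariant measures---matches the theorem's careful phrasing (``the RSA is always initialized with $x_0$''); Borovkov's Theorem~8.1 gives exactly this, not global uniqueness. Second, you correctly flag that Assumption~\ref{assm:compact}(ii) as written (``there exists $\delta_\epsilon>0$'') is formally vacuous unless one can choose $\delta_\epsilon<1$ for some $\epsilon>0$; the cited result in Borovkov indeed assumes the intended reading that contraction occurs with positive probability, so your extraction of $p>0$ is the right move and the right caveat.
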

\begin{proof}
Under Assumption \ref{assm:monotone}, the existence of invariant measures is proved in \cite[Theorem 8.1, p. 79-81]{borovkov1998}. Under Assumption \ref{assm:compact}, the existence of invariant measures is proved in \cite{bhattacharya1989} and \cite[Theorem 8.2, p. 82-83]{borovkov1998}. Under Assumption \ref{assm:contraction}, the existence and uniqueness result is established in \cite[Theorem 1.1, p. 87]{diaconis1999}.
\end{proof}

\begin{remark}
We can replace the assumption of $\ALP X$ being a compact Polish space in Assumption \ref{assm:compact} by making the following assumption. There exists $x_0\in\ALP X$ such that if $\hat z^n_0 = x_0$, then for any $\delta>0$, there exists $N_\delta\in\Na$ such that for any $k\geq 1$, we have
\beq{\label{eqn:compactprime}\pr{\rho\Big(\hat z^n_0,\hat z^n_k\Big)>N_\delta}<\delta.}
If the above condition and Assumption \ref{assm:compact} (2) holds, then one can show that for any initial condition $\hat z^n_0\in\ALP X$, a unique invariant distribution exists. For a proof, we refer the reader to \cite[p. 179]{borovkov1998}. However, proving \eqref{eqn:compactprime} is satisfied in usual RSAs appears to be difficult in our experience. \hfill $\Box$
\end{remark}

We now turn our attention to establishing the law of large numbers for time averages of the outputs from a RSA.

\section{Averaging of Iterates and The Weak Law of Large Numbers}\label{sec:lln}
RSAs with constant stepsize and averaging of iterates have received significant attention recently. Nemirovski et. al. in \cite{nemirovski2009robust} develops an algorithm for stochastic gradient descent with averaging of the last few iterates and shows that the algorithm is robust to stepsize selection and yields better results when properties of the loss functions (such as strong convexity parameter, Lipschitz coefficient of the derivative, etc.) are unknown. Along similar lines, \cite{bach2013non,bach2014adaptivity,dieuleveut2017harder} study stochastic gradient descent based algorithms with constant stepsize and averaging and derive the finite time guarantees on the loss achieved with averaged output. We note here that the constant stepsize algorithms with averaging are fundamentally different from Polyak-Ruppert averaging used in stochastic approximation \cite{polyak1992acceleration,kushner2003stochastic}, where the stepsizes decreases at a rate slower than $1/k$.

Motivated by the above references, in this section, we consider the  problem of convergence of the sequence of averages of the random sequence $(\hat z^n_k)_{k\in\Na}$ (or of $(f(\hat z^n_k))_{k\in\Na}$ for some $f\in C_b(\ALP X)$). As stated in Subsection \ref{sub:contri}, averaging of the last few outputs of an RSA is used to reduce the variance in the final output of the algorithm. Specifically, if we terminate the iteration of RSA at $k$, and output $\hat z^n_k$, there is a small chance that this output is far from $x^*$. It is generally believed that if we average the last few outputs (assuming none of them are $\infty$), then it reduces the variance in the output. We establish this result rigorously here, under the assumption that the Markov chain has a unique invariant distribution.

We will assume throughout that the Markov chain $(\hat z^n_k)_{k\in\Na}$ admits an invariant distribution (the precise assumption is stated in the sequel). The time-average of the Markov chain is precisely the law of large numbers for Markov chains. It has been studied within the context of Markov chains over compact spaces in \cite{breiman1960} and over general locally compact spaces in \cite[Sec 18.5]{meyn2012}. Let us formulate the problem precisely.

Consider a continuous function $f\in C_b(\ALP X)$. Let $\pi^n$ denote the invariant measure of the Markov chain $(\hat z^n_k)_{k\in\Na}$. We have already studied the conditions under which such an invariant measure would exist in Theorem \ref{thm:invariant}. In what follows, we show that under relatively mild assumptions, we have
\beq{\label{eqn:asconv}\frac{1}{K}\sum_{k=0}^{K-1} f(\hat z^n_k) \rightarrow \int f(x) \pi^n(dx) \text{ in probability as } K\rightarrow\infty.}
Note that in the expression above, the term $\int f(x) \pi^n(dx)$ is the spatial average of the function $f$ at the invariant distribution. Thus, what we show is that the time average converges in probability to the spatial average -- which implies that $(f(\hat z^n_k))_{k\in\Na}$ is a discrete time ergodic process.

Let us define the operator $F:C_b(\ALP X)\rightarrow C_b(\ALP X)$ and its adjoint $F^*:\wp(\ALP X)\rightarrow\wp(\ALP X)$, where $\wp(\ALP X)$ is endowed with the weak* topology, as follows:
\beqq{F(f)(x) = \int_{\ALP X} f(y) Q_n(dy|x) = \ex{f(\hat T^n(x))}, \quad F^*(\mu)(dy) = \int_{\ALP X} Q_n(dy|x) \mu(dx).}

\begin{assumption}\label{assm:unique}
The distribution $\mu$ of the initial condition $\hat z^n_0$ is picked from a set $\ALP M\subset\wp(\ALP X)$. Either of the following two conditions holds:
\begin{enumerate}[leftmargin=1.5em]
 \item There exists a unique invariant measure $\pi^n$ such that for any $\mu\in\ALP M$, $(F^*)^k(\mu)$ converges in weak* topology to  $\pi^n$.
 \item There exists a unique invariant measure $\pi^n$ such that for any $\mu\in\ALP M$, the averaged operator satisfies
 \beqq{\lim_{k\rightarrow\infty} \frac{1}{k}\sum_{i=0}^{k-1} (F^*)^k(\mu) = \pi^n,}
 where $(F^*)^k(\mu)$ denotes $k$ compositions of $F^*$ applied on $\mu$ and the convergence is in weak* sense.
\end{enumerate}
\end{assumption}

From the Stolz-Cesaro theorem \cite[Theorem 2.7.1, p.59]{choudary2014real}, it is easy to show that if Assumption \ref{assm:unique}(1) holds, then Assumption \ref{assm:unique}(2) holds as well. However, the converse may not be true. To prove our next result, we only need Assumption \ref{assm:unique}(2) to hold. It should be noted that Assumption \ref{assm:unique}(1) may be rather easy to prove using Theorem \ref{thm:invariant}.

\begin{theorem}\label{thm:lln}
If Assumptions \ref{assm:hatt} and \ref{assm:unique} hold, then \eqref{eqn:asconv} holds. 
\end{theorem}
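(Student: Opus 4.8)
The plan is to establish the weak law of large numbers by a mean-square (second moment) argument, exploiting the fact that under Assumption~\ref{assm:unique}(2) the Cesàro averages of the distributions of $\hat z^n_k$ converge to $\pi^n$. Fix $f\in C_b(\ALP X)$ and set $S_K := \frac{1}{K}\sum_{k=0}^{K-1} f(\hat z^n_k)$ and $\bar f := \int f\,d\pi^n$. The first step is to control the mean: by the tower property and the definition of $F$, $\ex{f(\hat z^n_k)} = \int f\, d\big((F^*)^k\mu\big)$, so $\ex{S_K} = \frac{1}{K}\sum_{k=0}^{K-1}\int f\,d\big((F^*)^k\mu\big)$, which converges to $\bar f$ by Assumption~\ref{assm:unique}(2) and $f\in C_b(\ALP X)$. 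Hence it suffices to show $\mathrm{Var}(S_K)\to 0$, after which Chebyshev's inequality gives the convergence in probability in \eqref{eqn:asconv}.

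The second and central step is the variance bound. Write $\mathrm{Var}(S_K) = \frac{1}{K^2}\sum_{j,k} \mathrm{Cov}\big(f(\hat z^n_j),f(\hat z^n_k)\big)$. Since $f$ is bounded, each covariance term is bounded by $2\|f\|_\infty^2$, giving the trivial estimate $\mathrm{Var}(S_K)\le \frac{4\|f\|_\infty^2}{K}\cdot\text{(something)}$ only if the off-diagonal correlations decay. The key is to show the chain ``forgets'' its state: for $j<k$, by the Markov property $\ex{f(\hat z^n_j)f(\hat z^n_k)} = \ex{f(\hat z^n_j)\,F^{k-j}(f)(\hat z^n_j)}$. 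I would argue that $F^{m}(f)(x)\to \bar f$ uniformly in a suitable sense (or at least in an averaged sense consistent with Assumption~\ref{assm:unique}(2)), so that the correlation $\mathrm{Cov}(f(\hat z^n_j),f(\hat z^n_k))$ is small when $k-j$ is large, uniformly over the relevant range of $j$. Summing, one splits the double sum into ``near-diagonal'' pairs (small in number relative to $K^2$) and ``far'' pairs (each with small covariance), concluding $\mathrm{Var}(S_K)\to 0$. This mirrors the classical Breiman argument \cite{breiman1960} but is streamlined by only invoking Assumption~\ref{assm:unique}(2) rather than a mixing rate.

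The main obstacle is that Assumption~\ref{assm:unique}(2) only gives Cesàro convergence of the measures $(F^*)^k\mu$, not convergence of $F^m(f)(x)$ for a fixed starting point, and certainly not a uniform rate. To handle this, I expect to work directly at the level of the averaged operator: rather than bounding individual covariances, bound the second moment $\ex{S_K^2}$ by reorganizing it as an integral of $f\otimes f$ against a measure built from the pair-empirical occupation, and then pass to the limit using the weak* convergence of the Cesàro-averaged one-dimensional marginals together with the Feller continuity of $F$ (so that $F^m(f)\in C_b(\ALP X)$ and the iterated integrals behave well under the limit). An alternative, cleaner route — and likely the one the authors take — is to first reduce to the case where $\hat z^n_0\sim\pi^n$ (the stationary chain), where $\ex{S_K}=\bar f$ exactly and $\{f(\hat z^n_k)\}$ is a stationary sequence; then von Neumann's $L^2$ mean ergodic theorem applied to the isometry induced by $Q_n$ on $L^2(\pi^n)$ gives $S_K\to \mathbb{E}_{\pi^n}[f\mid\mathcal I]$ in $L^2$, and uniqueness of the invariant measure forces the invariant $\sigma$-field $\mathcal I$ to be trivial, so the limit is $\bar f$; finally one transfers from the stationary initialization to a general $\mu\in\ALP M$ using Assumption~\ref{assm:unique}(2) to control the discrepancy in the Cesàro averages. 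The delicate point in this second route is the transfer step, since convergence in probability is not obviously preserved under a change of initial distribution unless the two chains can be coupled or the discrepancy is quantified through the averaged marginals; I would handle it by writing $\ex{|S_K - \bar f|}$ under $\mu$ in terms of $F$-iterates and bounding it by a stationary-chain term plus a vanishing Cesàro-discrepancy term.
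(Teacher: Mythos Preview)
Your proposal identifies the right target (control of the second moment) and correctly diagnoses the obstacle: Assumption~\ref{assm:unique}(2) gives only Ces\`aro convergence of the one-dimensional marginals, not decay of $F^{m}(f)(x)-\bar f$ pointwise or uniformly, and certainly not a rate. Unfortunately neither of your two routes actually closes this gap. In the variance route, to make the ``near-diagonal/far'' split work you need $\mathrm{Cov}(f(\hat z^n_j),f(\hat z^n_k))\to 0$ as $k-j\to\infty$, i.e.\ $F^{k-j}(f)(\hat z^n_j)\approx\bar f$ in an $L^1$ sense along the chain; Ces\`aro convergence of the \emph{averages} $\frac{1}{M}\sum_{m<M}F^m(f)$ does not yield this for individual large $m$. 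In the stationary-plus-transfer route, the transfer step is the whole difficulty: convergence in probability under $\pi^n$ does not pass to $\mu\in\ALP M$ without a coupling, and Assumption~\ref{assm:unique}(2) is too weak to build one. So as written, both arguments stop precisely at the point you flag as ``delicate.''

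The paper's proof uses a genuinely different decomposition that sidesteps the need for correlation decay. The key idea you are missing is a \emph{martingale replacement}: set $g_i=F^i(f)$ and $\bar g_M=\frac{1}{M}\sum_{i=0}^{M-1}g_i$, and show that
\[
\frac{1}{N}\sum_{l=0}^{N-1}\Big(f(\hat z^n_l)-\bar g_M(\hat z^n_l)\Big)\to 0\quad\text{a.s.\ as }N\to\infty,
\]
for each fixed $M$, by writing the difference as a finite sum of martingale-difference arrays $u_{p,l}=g_p(\hat z^n_l)-g_{p+1}(\hat z^n_{l-1})$ and applying the strong law for $L^2$-bounded martingales. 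This replaces the empirical average of $f$ by the empirical average of $\bar g_M$, and now Assumption~\ref{assm:unique}(2) is \emph{directly} applicable: $\bar g_M(x)=\langle f,\bar F^*_M(\delta_x)\rangle\to\bar f$ pointwise, and by Ascoli's theorem this convergence is uniform on compact sets. The remaining issue---that $\hat z^n_l$ need not stay in a fixed compact set---is handled by tightness of the Ces\`aro-averaged marginals (Prokhorov plus Markov's inequality on the occupation measure). The three pieces (martingale replacement, uniform convergence of $\bar g_M$ on compacts, tightness of the occupation measure) combine to give convergence in probability without ever needing individual-lag correlation decay or a coupling to the stationary chain.
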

\begin{proof}
The proof essentially follows the steps in \cite{breiman1960} and \cite[Theorem 18.5.1, p. 478]{meyn2012}, except that we relax the assumption on compactness of the state space as assumed in \cite{breiman1960} and replace the hypotheses in \cite{meyn2012} with Assumption \ref{assm:unique}. For completeness, a detailed proof is presented in Section \ref{sec:llnproof}.
\end{proof}

One way the presentation of Assumption \ref{assm:unique} departs from the traditional Markov chain literature is as follows. It is generally assumed that $\ALP M = \wp(\ALP X)$, that is, for every $\mu\in\wp(\ALP X)$, $(F^*)^k(\mu)$ converges in weak* topology to  $\pi^n$. This is a very strong assumption from the applicability viewpoint in RSAs. In particular, it is possible to pick the most suitable initialization for the RSAs, which implies that $\ALP M$ can be picked appropriately. For example, in empirical dynamic programming for MDPs, one can initialize the value function to be 0. Then, we can utilize Assumption \ref{assm:monotone} (with $x_0 = 0$) to establish the existence of a unique invariant distribution using Theorem \ref{thm:invariant}. Incidentally, for the law of large numbers to hold, we do not need the stronger condition of $\ALP M = \wp(\ALP X)$.

We now have a corollary of the result above, which we capture as a theorem below due to its importance and applicability to RSAs.

\begin{theorem}\label{thm:acompact}
Suppose that $\ALP X \subset \Re^n$ is a compact set. Suppose further that either of the following conditions hold:
\begin{enumerate}[leftmargin=1.5em]
\item Assumption \ref{assm:monotone} holds, and the RSA is always initialized with $x_0$, where $x_0$ is defined in Assumption \ref{assm:monotone}. 
\item Assumption \ref{assm:compact} holds.
\item Assumption \ref{assm:contraction} holds.
\end{enumerate}
Consider the average $\hat a^n_k$ of the Markov chain
\beqq{\hat a^n_k = \frac{1}{k}\sum_{t=0}^{k-1} \hat z^n_t.}
If Assumptions \ref{assm:hatt} holds, then for any initialization $\hat z^n_0\in\ALP X$, a unique invariant distribution $\pi^n$ exists and $\hat a^n_k$ converges almost surely to the mean $\bar a^n$ of the distribution $\pi^n$. 
\end{theorem}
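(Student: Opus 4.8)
\textbf{Proof proposal for Theorem \ref{thm:acompact}.}

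The plan is to obtain Theorem \ref{thm:acompact} as a corollary of Theorems \ref{thm:invariant} and \ref{thm:lln} by checking their hypotheses in this setting, and then upgrading the in-probability convergence of Theorem \ref{thm:lln} to almost sure convergence using the compactness of $\ALP X$. First I would argue the existence and uniqueness of $\pi^n$: under any one of Assumptions \ref{assm:monotone}, \ref{assm:compact}, or \ref{assm:contraction} together with Assumption \ref{assm:hatt}, Theorem \ref{thm:invariant} gives a unique invariant measure $\pi^n$, with the minor caveat that under Assumption \ref{assm:monotone} one needs the initialization $\hat z^n_0 = x_0$, which is exactly what hypothesis (1) stipulates; under Assumptions \ref{assm:compact} or \ref{assm:contraction} uniqueness holds for every initialization $\hat z^n_0 \in \ALP X$. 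In all three cases $\mu^n_k = (F^*)^k(\mu^n_0)$ converges in weak* topology to $\pi^n$, which is precisely Assumption \ref{assm:unique}(1) with $\ALP M$ containing the relevant initial law; by the remark following Assumption \ref{assm:unique} (via the Stolz-Cesaro theorem), Assumption \ref{assm:unique}(2) follows. Hence Theorem \ref{thm:lln} applies and \eqref{eqn:asconv} holds for every $f \in C_b(\ALP X)$.

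Next I would pass from scalar-valued $f$ to the vector-valued average $\hat a^n_k$ and from convergence in probability to almost sure convergence. Since $\ALP X \subset \Re^n$ is compact, each coordinate map $x \mapsto x_i$, $i = 1,\ldots,n$, is continuous and bounded on $\ALP X$, so it belongs to $C_b(\ALP X)$; applying \eqref{eqn:asconv} to $f(x) = x_i$ and stacking the coordinates gives $\hat a^n_k \to \bar a^n := \int_{\ALP X} x\, \pi^n(dx)$ in probability as $k \to \infty$, where $\bar a^n$ is finite because $\ALP X$ is bounded. To get the almost sure statement, I would invoke the fact that on a compact state space the law of large numbers for Feller Markov chains holds in the strong (almost sure) sense — this is exactly Breiman's theorem \cite{breiman1960}, whose compact-space hypothesis is met here. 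Concretely: the chain $(\hat z^n_k)$ is weak Feller (the random operators in all our examples have this property, and more to the point $F$ maps $C_b(\ALP X)$ into itself as defined just before Assumption \ref{assm:unique}), resides in the compact space $\ALP X$, and has the unique invariant measure $\pi^n$; Breiman's result then yields $\frac{1}{k}\sum_{t=0}^{k-1} g(\hat z^n_t) \to \int g\, d\pi^n$ almost surely for all $g \in C(\ALP X) = C_b(\ALP X)$, and applying this to the coordinate functions gives $\hat a^n_k \to \bar a^n$ almost surely.

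The main obstacle I anticipate is the last step — getting almost sure rather than merely in-probability convergence — because Theorem \ref{thm:lln} as stated only delivers convergence in probability, and its proof (adapted from \cite{breiman1960,meyn2012}) is built around the weak* machinery. There are two ways to handle this. The cleaner route is to cite Breiman's theorem \cite{breiman1960} directly for the compact case, since its hypotheses (compact metric state space, Feller transition kernel, unique invariant measure) are all verified above; this is legitimate because the whole point of restricting to compact $\ALP X$ in this corollary is to land back in Breiman's original setting. The alternative, if one wants to stay self-contained, is to note that convergence in probability of $\hat a^n_k$ along the full sequence, combined with the monotone-in-$k$ structure of Cesàro averages on a bounded set (so that $\|\hat a^n_{k+1} - \hat a^n_k\| \le \frac{C}{k+1}$ for $C = \operatorname{diam}(\ALP X)$, ruling out oscillation between subsequential limits), forces almost sure convergence; one extracts an a.s.-convergent subsequence $\hat a^n_{k_j}$ and uses the Cesàro increments to control the gaps between consecutive indices. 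I would present the Breiman citation as the primary argument and relegate the self-contained interpolation argument to a remark.
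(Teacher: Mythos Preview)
Your primary argument is correct and is exactly the paper's proof: cite Theorem \ref{thm:invariant} for the unique invariant measure, then apply Breiman's compact-space result \cite{breiman1960} directly to the coordinate functions $f(x)=x_i\in C_b(\ALP X)$ to obtain almost sure convergence of each $(\hat a^n_k)_i$; the paper's proof is two sentences and does not pass through Theorem \ref{thm:lln} at all, so your detour there is harmless but unnecessary.

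One caution about your proposed alternative self-contained route: the increment bound $\|\hat a^n_{k+1}-\hat a^n_k\|\le C/(k+1)$ is not summable and accumulates logarithmically over geometric gaps, so extracting an a.s.-convergent subsequence and interpolating via Cesàro increments does \emph{not} upgrade convergence in probability to almost sure convergence along the full sequence. Drop that remark and rely solely on the Breiman citation.
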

\begin{proof}
The existence of a unique invariant measure is due to Theorem \ref{thm:invariant}. Since $\ALP X$ is a compact set, we can take $f(x) = x_i$ to conclude that $(\hat a^n_k)_i$ converges almost surely to $\bar a^n_i$ by \cite{breiman1960}.  
\end{proof}

The result in \eqref{eqn:asconv} requires $f$ to be a bounded function over $\ALP X$. We now consider the case where $f$ is a potentially unbounded continuous function. To establish essentially the same result, we need to make the following additional assumption.
\begin{assumption}\label{assm:bounded}
There exists a random variable $C(\omega)$ such that
\beqq{\sup_{k\geq 0} f(\hat z^n_k) \leq C(\omega) \text{ almost surely},}
and $C(\omega)<\infty$ for $\mathbb{P}$-almost all $\omega\in\Omega$.
\end{assumption}
\begin{theorem}\label{thm:bounded}
Suppose that $f:\ALP X\rightarrow\Re$ is a continuous function (potentially unbounded). If Assumptions \ref{assm:hatt}, \ref{assm:unique}, and \ref{assm:bounded} hold, then \eqref{eqn:asconv} holds.    
\end{theorem}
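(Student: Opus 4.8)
The plan is to reduce the unbounded case to the bounded case (Theorem~\ref{thm:lln}) by a truncation argument, exploiting the almost-sure uniform bound supplied by Assumption~\ref{assm:bounded}. First I would introduce, for each $M>0$, the truncated function $f_M(x) := (f(x)\wedge M)\vee(-M)$, which lies in $C_b(\ALP X)$. Since Assumptions~\ref{assm:hatt} and~\ref{assm:unique} hold, Theorem~\ref{thm:lln} applies to each $f_M$, giving
\beqq{\frac{1}{K}\sum_{k=0}^{K-1} f_M(\hat z^n_k) \xrightarrow{\ \mathbb{P}\ } \int_{\ALP X} f_M(x)\,\pi^n(dx) \quad\text{as } K\to\infty.}
The task is then to pass from the $f_M$-averages to the $f$-averages (on the left) and from $\int f_M\,d\pi^n$ to $\int f\,d\pi^n$ (on the right, which is the easy part, handled by monotone/dominated convergence once one checks $f\in L^1(\pi^n)$).

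For the left-hand side I would argue as follows. On the event $\{C(\omega)\le M\}$ (for $M$ large), Assumption~\ref{assm:bounded} forces $f(\hat z^n_k) = f_M(\hat z^n_k)$ \emph{simultaneously for all} $k\ge 0$ whenever $M \ge C(\omega)$ (using that $f(\hat z^n_k)\le C(\omega)$; for the lower truncation one notes $f_M(x)=f(x)$ as soon as $|f(x)|\le M$, and one may also need a lower-bound version of Assumption~\ref{assm:bounded}, or simply observe $-M \le f(\hat z^n_k)$ holds for $M$ large on this event — this is a point I would state carefully, perhaps assuming $|f(\hat z^n_k)|\le C(\omega)$, which is the natural reading). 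Hence on $\{C(\omega)\le M\}$ the two time-averages $\frac1K\sum f(\hat z^n_k)$ and $\frac1K\sum f_M(\hat z^n_k)$ coincide for every $K$. Given $\varepsilon,\eta>0$, first choose $M$ so large that $\pr{C(\omega)>M}<\eta/2$ and also $\big|\int f_M\,d\pi^n - \int f\,d\pi^n\big|<\varepsilon/2$; then choose $K_0$ so that for $K\ge K_0$ the probability that $\big|\frac1K\sum f_M(\hat z^n_k) - \int f_M\,d\pi^n\big|\ge \varepsilon/2$ is below $\eta/2$. Combining these via a union bound yields
\beqq{\pr{\left|\frac{1}{K}\sum_{k=0}^{K-1} f(\hat z^n_k) - \int_{\ALP X} f(x)\,\pi^n(dx)\right|\ge \varepsilon} < \eta \quad\text{for all } K\ge K_0,}
which is exactly \eqref{eqn:asconv} for the unbounded $f$.

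The remaining loose end is integrability of $f$ against $\pi^n$, needed to make $\int f\,d\pi^n$ well defined and to justify $\int f_M\,d\pi^n\to\int f\,d\pi^n$. I would obtain this from Fatou's lemma applied to the $f_M$-convergence: since $\pi^n$ is the weak* limit of $\mu^n_k$ (under the hypotheses invoked, via Theorem~\ref{thm:invariant} or directly from Assumption~\ref{assm:unique}), and since $\mathbb{E}[f_M(\hat z^n_k)]$ is controlled using $\mathbb{E}[C(\omega)\wedge M]$ — or, more robustly, one integrates the established convergence in probability of the bounded-function time averages, which are themselves bounded by $M$, to get $\int f_M\,d\pi^n = \lim_K \frac1K\sum_k \mathbb{E}[f_M(\hat z^n_k)]$, then lets $M\to\infty$ using monotone convergence and the $\pi^n$-a.s. finiteness that follows because $\pi^n$ is a limiting law of iterates each of which satisfies $|f(\hat z^n_k)|\le C<\infty$ a.s. The main obstacle I anticipate is precisely this last point: Assumption~\ref{assm:bounded} bounds $f$ along the \emph{trajectory} but says nothing a priori about $\int |f|\,d\pi^n$, so one must transfer the trajectory bound to the invariant measure through the weak* convergence $\mu^n_k\Rightarrow\pi^n$ together with a uniform-integrability-type control; establishing that $f\in L^1(\pi^n)$ cleanly — and in a way consistent with the stated hypotheses rather than requiring $\mathbb{E}[C(\omega)]<\infty$ — is the delicate step, whereas the truncation bookkeeping on the left-hand side is routine.
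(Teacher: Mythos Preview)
Your approach is essentially the paper's: truncate $f$, apply Theorem~\ref{thm:lln} to the bounded truncation, and use Assumption~\ref{assm:bounded} to make the truncation error vanish on an event of probability at least $1-\eta/2$. The only cosmetic difference is that the paper first reduces to nonnegative $f$ via $f=f_+-f_-$ and truncates one-sidedly by $f_m=\min\{f,m\}$, whereas you truncate two-sidedly at once; either way one needs the two-sided bound $|f(\hat z^n_k)|\le C(\omega)$, which you correctly flag (the paper needs it implicitly when treating $f_-$). The loose end you worry about --- proving $f\in L^1(\pi^n)$ so that $\int f_m\,d\pi^n\to\int f\,d\pi^n$ --- is not addressed in the paper's proof either; it simply invokes monotone convergence without justification, so on that point you are being more careful than the original.
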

\begin{proof}
 The proof is presented in Subsection \ref{sub:bounded}.
\end{proof}
It is easy to see in practice if Assumption \ref{assm:bounded} holds or not, particularly when $\ALP X = \Re^n$. If the random iterates are uniformly bounded during a single run of the RSA, then Assumption \ref{assm:bounded} holds along that trajectory. If almost all independent runs of the RSA is not expected to ``blow up'', then the time average of the iterates is likely going to have the variance reduction property--the time average converges in probability to the spatial average.

We now turn our attention to illustrating the application of Theorems \ref{thm:main}, \ref{thm:lln}, and \ref{thm:bounded} in various optimization and empirical dynamic programming algorithms.

\section{Numerical Simulations}\label{sec:simulation}
In this section, we complement the theoretical results proved above with extensive numerical simulations. We conduct simulations of minibatch stochastic gradient descent for logistic regression on MNIST dataset, empirical value iteration for discounted and average cost MDPs, and empirical Q-value iteration. 

\subsection{Stochastic Gradient Descent}
Consider the task of classifying a subset of MNIST handwritten digits, where we consider only the images corresponding to the numbers 0 and 1. Each data point is a $28\times 28$ pixel image with the corresponding label (either $0$ or $1$). We use logistic regression and Poisson regression with an $\ell_2$ regularizer for the classification task. We refer the reader to Subsection \ref{sub:logistic} for details of this problem for the logistic regression. The loss function for the logistic regression with the regularizer is
\beqq{L_i(x):= -l^i\log f(u^i;x) - (1-l^i)\log(1-f(u^i;x)) +\frac{\lambda}{2}\|x\|^2_2, \text{ where } \lambda = 5.}
The loss function for Poisson regression with regularizer is
\begin{align*}
L(x) = {1\over N}\sum_{i=1}^NL_i(x), \text{ where } L_i(x) = \exp(u^{i\transpose}x) - l^i(u^{i\transpose}x) + \frac{\lambda}{2}\|x\|_2^2, \text{ where } \lambda = 1.
\end{align*}
The first and the second derivatives of this loss function is 
\begin{align*}
\nabla_xL_i(x) &= \exp(u^{i\transpose}x)u^i - l^iu^i + \lambda x,\qquad    \nabla_{xx}^2L_i(x) = \exp(u^{i\transpose}x)u^iu^{i\transpose} +\lambda I
\end{align*}

\begin{figure}[bth]
\includegraphics[width=\textwidth,trim={0 0.5cm 0 0.5cm},clip]{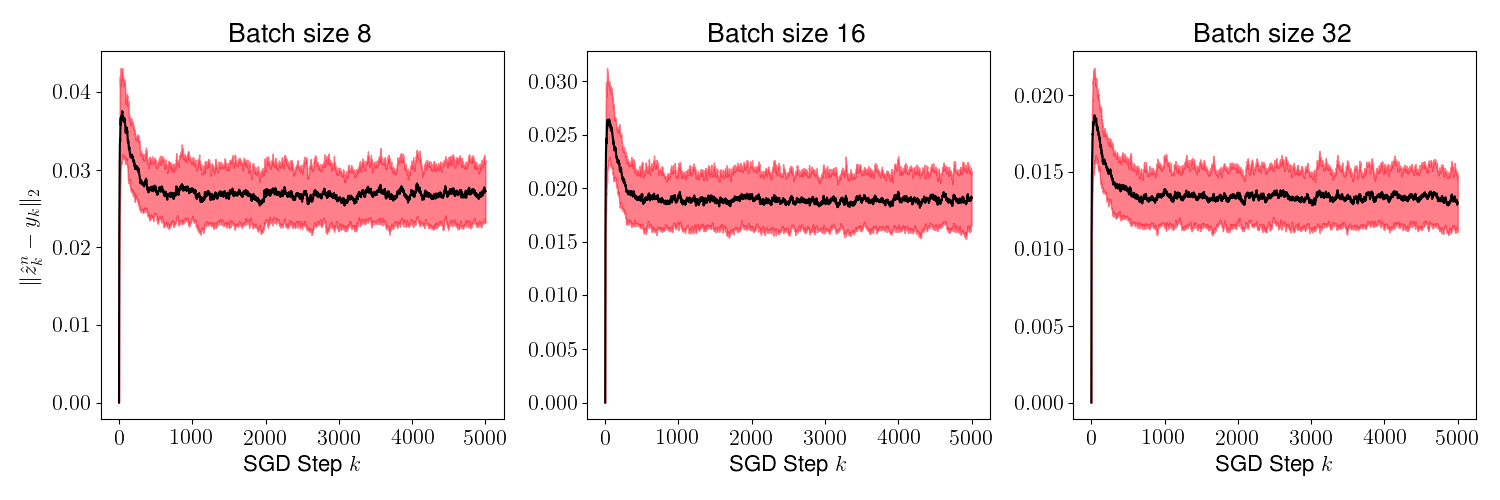}
\vspace*{-2.2em}
\caption{The Euclidean norm between the outputs of the exact gradient descent and the stochastic gradient descent with $n=8,16,32$ for the logistic regression with a regularizer. We observe that as $n$ grows, the probability that $\|\hat z^n_k-y_k\|$ is large becomes smaller. The mean $\ex{\|\hat z^n_k-y_k\|}$ (showed using black line) and the variance of $\|\hat z^n_k-y_k\|$ are computed using 1000 independent runs of the iterations.}
\label{fig:log}
\end{figure}

\begin{figure}[bth]
\includegraphics[width=\textwidth]{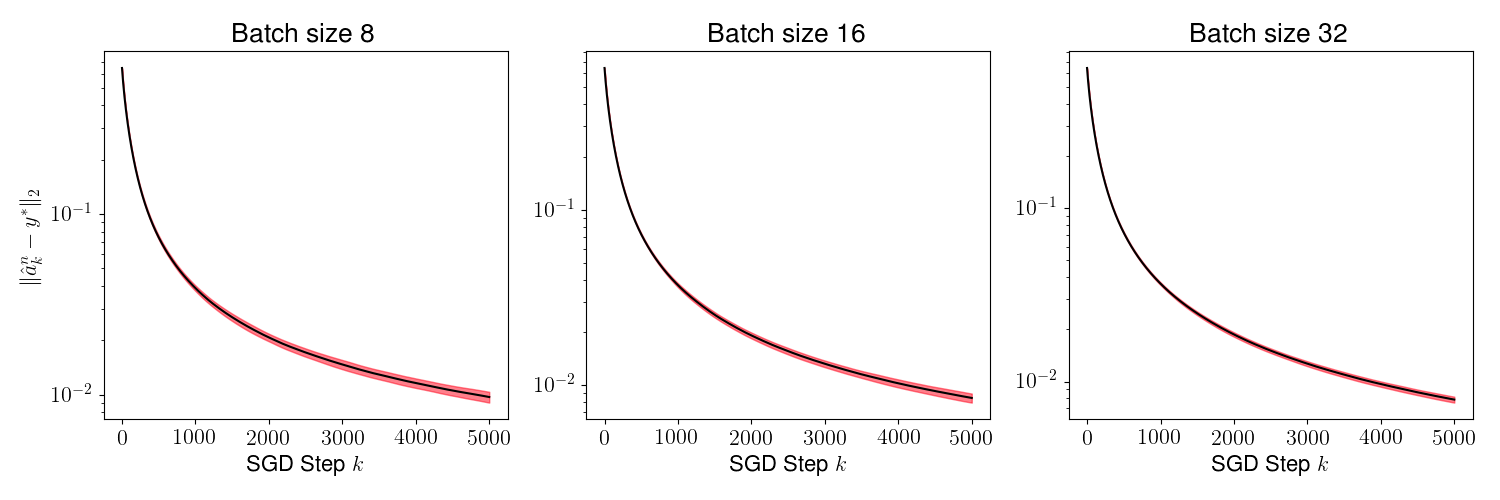}
\vspace*{-2.2em}
\caption{Plot of $\|\hat a^n_k-x^*\|$ for $n=8,16,32$ for the logistic regression with a regularizer. The variance of $\|\hat a^n_k-x^*\|$ is decreasing as $k$ grows, which indicates that the variance reduction property of the time averages. The mean and the variance of $\|\hat a^n_k-x^*\|$ are computed using 1000 independent runs of the stochastic gradient descent iterations.}
\label{fig:ta_log}
\end{figure}

\begin{figure}[bth]
\includegraphics[width=\textwidth]{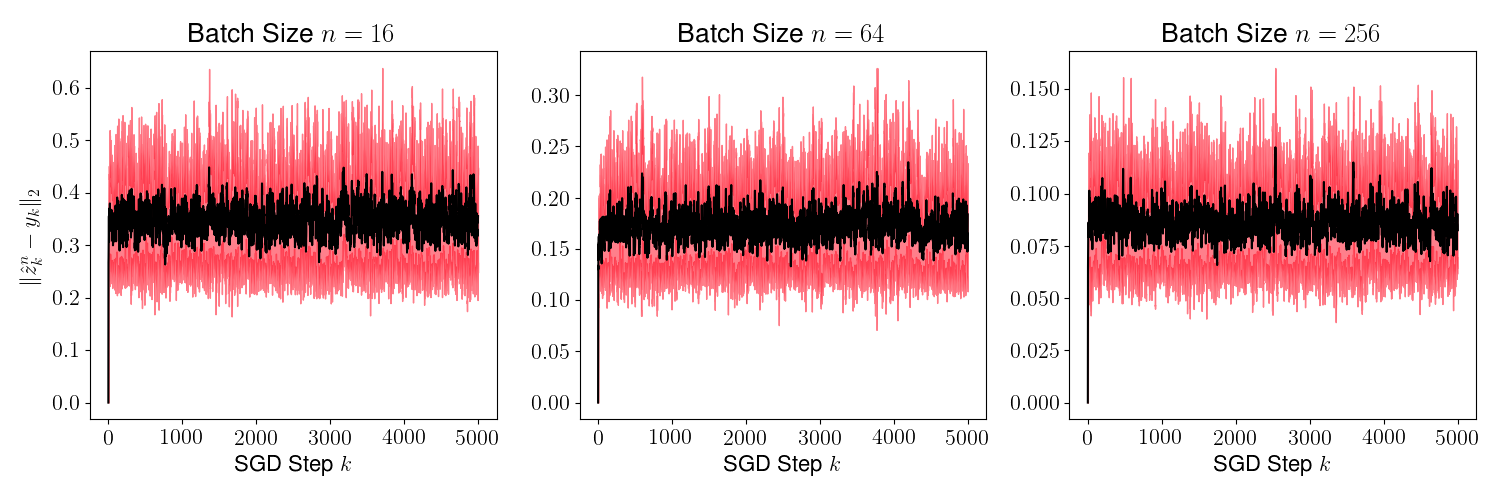}
\vspace*{-2.2em}
\caption{The Euclidean norm between the outputs of the exact gradient descent and the stochastic gradient descent with $n=64,256,1024$ for the Poisson regression with a regularizer. We observe that as $n$ grows, the probability that $\|\hat z^n_k-y_k\|$ is large becomes smaller. The mean $\ex{\|\hat z^n_k-y_k\|}$ (showed using black line) and the variance of $\|\hat z^n_k-y_k\|$ are computed using 1000 independent runs of the iterations.}
\label{fig:pi}
\end{figure}

\begin{figure}[bth]
\includegraphics[width=\textwidth]{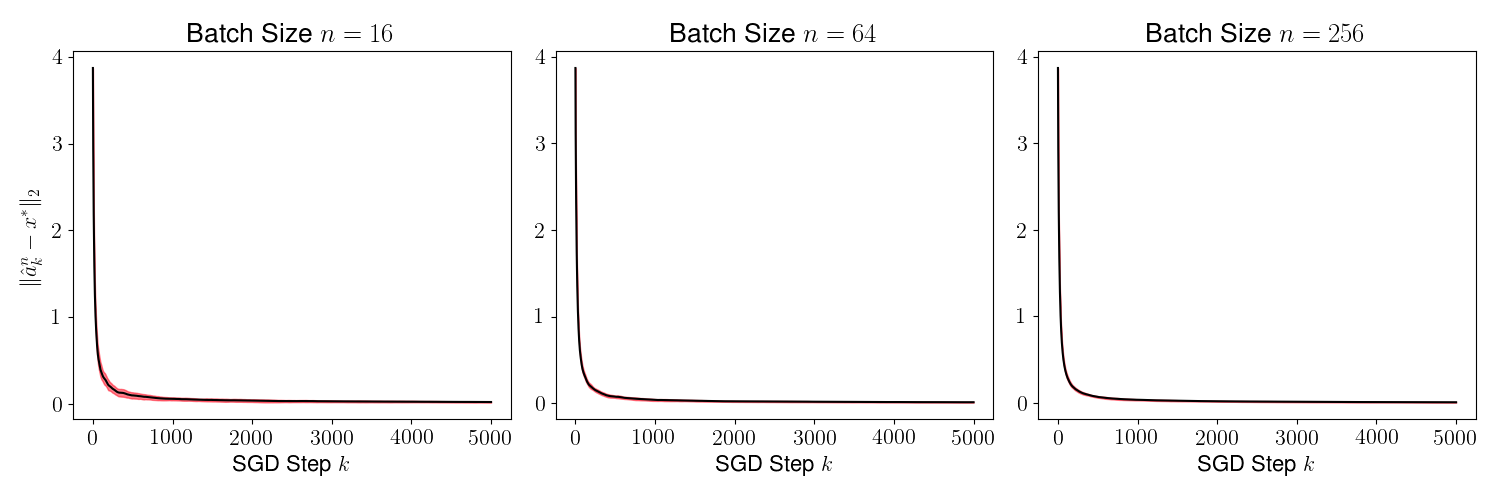}
\vspace*{-2.2em}
\caption{Plot of $\|\hat a^n_k-x^*\|$ for $n=16,64,256$ for the Poisson regression with a regularizer. The variance of $\|\hat a^n_k-x^*\|$ is decreasing as $k$ grows, which indicates that the variance reduction property of the time averages. The mean and the variance of $\|\hat a^n_k-x^*\|$ are computed using 1000 independent runs of the stochastic gradient descent iterations.}
\label{fig:ta_pi}
\end{figure}

We use here the notations introduced in Subsection \ref{sub:logistic}. We transform each image into a vector and append $1$ at the beginning of the vector. Thus, the space $\ALP U = \{1\}\times[0,1]^{784}$. Thus, the space $\ALP X = \Re^{785}$. As mentioned previously, the variable $n$ represents the batch size picked at every SGD iteration step. We pick $y_0$ arbitrarily in $\ALP X$ and set $\hat z^n_0 = y_0$. Then, we run the exact gradient descent and the minibatch SGD as follows:
\beqq{y_{k+1} = T(y_k),\quad \hat z^n_{k+1} = \hat T^n_k(\hat z^n_k), \quad \hat a^n_k = \frac{1}{k+1}\sum_{i=0}^k \hat z^n_i.}
As discussed in Subsection \ref{sub:logistic}, it is clear that the exact gradient descent is a contraction map for stepsize $\beta$ small enough. Therefore, $y_k$ converges to the optimal solution $x^*$. We can make the following claim about the operator $\hat T^n_k$:
\begin{theorem}
The random operator $\hat T^n_k$ satisfies Assumptions \ref{assm:hatt}, \ref{assm:suppr}. If $\beta$ is sufficiently small, then $\hat T^n_k$ also satisfies Assumption \ref{assm:contraction}. Let $\mu_n$ denote the distribution of $(\hat z^n_0,\hat z^n_1,\ldots)$ and $\psi$ be the Dirac mass on $(y_0,y_1,\ldots)$. We have $\{\mu_n\}_{n\in\Na}$ converges to $\psi$ in weak* topology as $n\rightarrow\infty$ and there exists a unique invariant distribution of the Markov chain $(\hat z^n_k)_{k\in\Na}$ for any $n\in\Na$. 
\end{theorem}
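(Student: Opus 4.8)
The plan is to check Assumptions \ref{assm:hatt}, \ref{assm:suppr}, and \ref{assm:contraction} for this concrete $\hat T^n_k$ and then read off the two conclusions from Theorems \ref{thm:main} and \ref{thm:invariant}. \textbf{Assumption \ref{assm:hatt}} is immediate: the minibatch $\ALP N_k\subset\{1,\ldots,N\}$ used at step $k$ is drawn afresh, independently of the past, with a law that does not depend on $k$; since $\hat T^n_k(x)$ is a fixed deterministic function of $(\ALP N_k,x)$, the maps $\hat T^n_k$ are i.i.d. across $k$. \textbf{Assumption \ref{assm:suppr}} is precisely property 3 of Subsection \ref{sub:logistic} (inequality \eqref{eqn:logeps}): on a compact $K\subset\ALP X=\Re^{785}$ the gradients $\nabla_x L_i$ are uniformly bounded, say by $M_K$ (here one uses that the feature vectors lie in $\{1\}\times[0,1]^{784}$ and that each $L_i$ carries the quadratic regularizer), so $\hat T^n_k(x)-T(x)=-\tfrac{\beta}{n}\sum_{j}\big(\nabla_x L_{i_j}(x)-\nabla_x L(x)\big)$ is an average of $n$ mean-zero vectors of norm at most $2\beta M_K$; Hoeffding's inequality and a union bound over the $785$ coordinates give $\sup_{x\in K}\pr{\|\hat T^n_k(x)-T(x)\|_2>\epsilon}\le 1570\exp\!\big(-c\,n\epsilon^2/M_K^2\big)$, which is $<\delta$ once $n$ is large. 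The same estimate applies verbatim to the Poisson loss.

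\textbf{Assumption \ref{assm:contraction}} has three parts. Part (i) holds since $\ALP X=\Re^{785}$ with the Euclidean metric is Polish. For part (ii), $\rho(\hat T^n(x^*),T(x^*))=\beta\,\big\|\tfrac1n\sum_{j}\nabla_x L_{i_j}(x^*)-\nabla_x L(x^*)\big\|_2$ is an average of finitely many fixed vectors, hence bounded by a deterministic constant $R$; taking $b=1$ and $a=R$ makes $\pr{\rho(\hat T^n(x^*),x^*)>\epsilon}\le a/\epsilon^{b}$ hold for every $\epsilon>0$ (the left side is $0$ when $\epsilon\ge R$, and $a/\epsilon^{b}>1$ when $\epsilon<R$); a Hoeffding bound would give the stronger exponential tail, but the crude bound already suffices. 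For part (iii) I would invoke property 4 of Subsection \ref{sub:logistic}: with the $\ell_2$ regularizer each $L_i$ has Hessian satisfying $\lambda I\preceq\nabla^2_{xx}L_i(x)\preceq(\lambda+M)I$ with $M=\tfrac14\max_i\|u^i\|_2^2$ for logistic regression. Writing $\hat T^n_k(x)-\hat T^n_k(y)=\big(I-\tfrac{\beta}{n}\sum_j\int_0^1\nabla^2_{xx}L_{i_j}(y+t(x-y))\,dt\big)(x-y)$, the bracketed matrix is $I-\beta H$ with $\lambda I\preceq H\preceq(\lambda+M)I$, so for $0<\beta<1/(\lambda+M)$ its operator norm is the \emph{deterministic} number $\hat\alpha^n=1-\beta\lambda\in(0,1)$; hence $\ex{\hat\alpha^n}=1-\beta\lambda<\infty$ and $\ex{\log\hat\alpha^n}=\log(1-\beta\lambda)<0$. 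This same range of $\beta$ also makes the exact map $T$ a contraction, as required by the standing framework of Section \ref{sec:problem}.

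With the assumptions verified, the conclusions follow by citation: Assumptions \ref{assm:hatt}, \ref{assm:suppr} are the hypotheses of Theorem \ref{thm:main}, giving $\mu_n\to\psi$ in the weak* topology; Assumptions \ref{assm:hatt}, \ref{assm:contraction} are case (iii) of Theorem \ref{thm:invariant}, giving existence and uniqueness of the invariant measure $\pi^n$, with $\mu^n_k$ converging weak* to $\pi^n$. \textbf{The main obstacle} lies entirely in part (iii) of Assumption \ref{assm:contraction} for the Poisson model: there $\nabla^2_{xx}L_i(x)=\exp(u^{i\transpose}x)\,u^iu^{i\transpose}+\lambda I$ has no global upper bound, so there is no clean deterministic contraction coefficient and $\hat\alpha^n$ may fail to be integrable. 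To handle that case one must argue that the regularizer confines the iterates to a bounded set almost surely (on which the Hessians are then uniformly bounded), or pass to a projected version of the recursion, and then recover part (iii) on that set; everything else is routine concentration and bookkeeping.
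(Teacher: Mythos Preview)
Your proposal is correct and follows essentially the same route as the paper: verify Assumptions \ref{assm:hatt} and \ref{assm:suppr} from the properties listed in Subsection \ref{sub:logistic}, verify Assumption \ref{assm:contraction} via the regularizer-induced contraction in Remark \ref{rem:hatt}, and then cite Theorems \ref{thm:main} and \ref{thm:invariant}. The paper's own proof is terser---for part (ii) of Assumption \ref{assm:contraction} it simply observes that $\hat T^n_k(x^*)$ takes only finitely many values (since $n$ and $N$ are finite) and declares the tail bound trivially satisfied, which is exactly your boundedness argument stated more briefly; for part (iii) it defers to Remark \ref{rem:hatt} rather than writing out the Hessian sandwich and the resulting deterministic coefficient $1-\beta\lambda$ as you do. Your additional paragraph on the Poisson model is a genuine observation the paper does not make: the unbounded Hessian does obstruct a clean global verification of part (iii), and the paper's proof, which invokes only Remark \ref{rem:hatt}, tacitly covers the logistic case.
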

\begin{proof}
The first conclusion follows from the discussion leading to \eqref{eqn:logeps} in Subsection \ref{sub:logistic}. For $\beta$ sufficiently small, every realization of $\hat T^n_k$ is a contraction as discussed in Remark \ref{rem:hatt}. We only need to show that Part 2 of Assumption \ref{assm:contraction} is satisfied. Since $n$ is finite, $\hat T^n_k(x^*)$ can take only finitely many values, and therefore Part 2 of Assumption \ref{assm:contraction} is trivially satisfied. The existence of invariant distribution then follows from Theorem \ref{thm:invariant}.
\end{proof}

As a result of the theorem above, we conclude that $\|y_k-\hat z^n_k\|_2$ is generally small for most $k$, and its variance converges to 0 as $n$ increases. This is evident in Figures \ref{fig:log} and \ref{fig:pi}, where we plot the distance $\|y_k-\hat z^n_k\|_2$ for various values of $n$ (batch sizes) and $k$ ranging from 0 to 5000. The black curve is the mean $\ex{\|y_k-\hat z^n_k\|_2}$ and the red region shows one standard deviation of the distance $\|y_k-\hat z^n_k\|_2$ over 1000 sample paths of the minibatch SGD iterations. We picked the same initial condition $\hat z^n_0 = y_0 = 0$ for all the sample paths to generate the figure.

Figures \ref{fig:ta_log} and \ref{fig:ta_pi} plots $\|\hat a^n_k- x^*\|_2$, and we observe that the variance of time averages reduces as $k$ grows large for any batch size $n$. This variance reduction of time averages can be attributed to the contraction property of the random maps $\hat T^n_k$, which in turn is due to the addition of a strongly convex regularizer to the empirical loss function.


\subsection{Empirical Value Iteration for Discounted Cost MDP}

We consider here the empirical value iteration for discounted Markov decision processes (MDP) as described in Subsection \ref{sub:edpdiscount}. Consider the value iteration algorithm applied to an MDP in which there are 20 states and 5 actions. We generate the state transition probability matrix for this MDP randomly at the beginning of the simulation.

We use here the notation introduced in Subsection \ref{sub:edpdiscount}. We initialize $v_0$ arbitrarily and set $\hat z^n_0 = v_0$, and define the iterates of exact value iteration and empirical value iteration as
\beqq{v_{k+1} = T(v_k),\qquad \hat z^n_{k+1} = \hat T^n_k(\hat z^n_k), \quad \hat a^n_k = \frac{1}{k+1}\sum_{i=0}^k \hat z^n_i.}

\begin{figure}[bth]
\includegraphics[width=\textwidth]{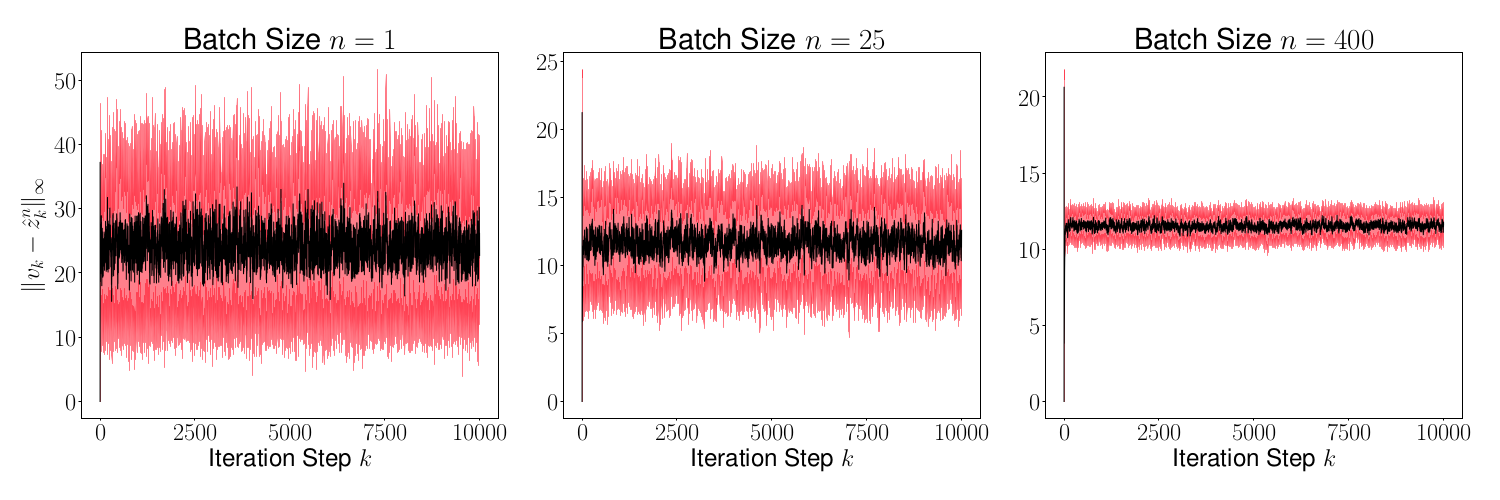}
\caption{\label{fig:valiter}Plot of $\|v_k - \hat z^n_k\|$ for $n = 1,25,400$ for $k=1,\ldots,1000$. It is clear from the plots that as $n$ grows, the average and variance of $\|v_k - \hat z^n_k\|$ reduces. The mean and the variance of $\|v_k - \hat z^n_k\|$ are computed using 1000 independent runs of the iterations. There are 20 states and 5 actions in this MDP.}
\end{figure}

\begin{figure}[bth]
\includegraphics[width=\textwidth]{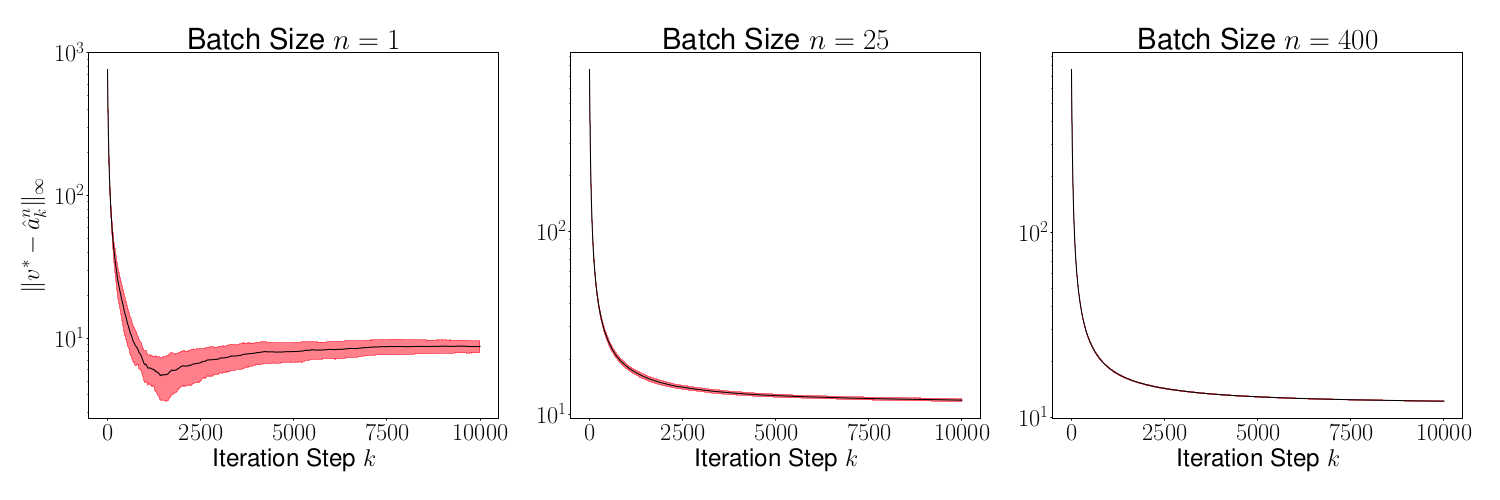}
\caption{\label{fig:ta_valiter} Plot of $\|v^* - \hat a^n_k\|$ for $n = 1,25,400$ for $k=1,\ldots,1000$. Notice that for every $n$, the variance in $\|v^* - \hat z^n_k\|$ reduces as $k$ increases. The plots are constructed using 1000 sample paths. The mean and the variance of $\|v^* - \hat a^n_k\|$ are computed using 1000 independent runs of the iterations.}
\end{figure}
We can prove the following result.
\begin{theorem}\label{thm:edpdiscount}
The random operator $\hat T^n_k$ satisfies Assumptions \ref{assm:hatt}, \ref{assm:suppr}, and \ref{assm:contraction}. As a result, we conclude:
\begin{enumerate}[leftmargin=1em]
 \item Let $\mu_n$ denote the distribution of $(\hat z^n_0,\hat z^n_1,\ldots)$ and $\psi$ be the Dirac mass on $(y_0,y_1,\ldots)$. We have $\{\mu_n\}_{n\in\Na}$ converges to $\psi$ in weak* topology as $n\rightarrow\infty$.
 \item There exists a unique invariant distribution $\pi^n$ of the Markov chain $(\hat z^n_0,\hat z^n_1,\ldots)$. Further, the sequence of distribution of $\hat z^n_k$ converges to this invariant distribution $\pi^n$.
 \item The time average $\hat a^n_k$ converges in probability to the mean of $\pi^n$ as $k\rightarrow\infty$.
\end{enumerate}
\end{theorem}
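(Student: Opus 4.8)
The plan is to show that $\hat T^n_k$ meets Assumptions \ref{assm:hatt}, \ref{assm:suppr}, and \ref{assm:contraction}, and then to extract the three conclusions from Theorems \ref{thm:main}, \ref{thm:invariant}, and an averaging theorem. First I would check the three assumptions. Assumption \ref{assm:hatt} is immediate: the next-state samples $s'_{k,i}(s,a)$ used to build $\hat T^n_k$ are redrawn at each $k$ independently of the past, so for a fixed $v$ the maps $\hat T^n_k(v)$ and $\hat T^n_{k'}(v)$ are i.i.d.\ for $k \neq k'$. Assumption \ref{assm:suppr} is precisely the uniform-in-probability estimate \eqref{eqn:evieps} derived at the end of Subsection \ref{sub:edpdiscount} from Hoeffding's inequality and a union bound over the finitely many state--action pairs. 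For Assumption \ref{assm:contraction}: part (i) holds because $\ALP V \cong \Re^{|\ALP S|}$ is Polish; part (iii) holds because, by property 1 of Subsection \ref{sub:edpdiscount}, every realization of $\hat T^n_k$ is an $\alpha$-contraction, so its Lipschitz coefficient $\hat\alpha^n$ satisfies $\hat\alpha^n \le \alpha < 1$ almost surely, whence $\ex{\hat\alpha^n} \le \alpha < \infty$ and $\ex{\log\hat\alpha^n} \le \log\alpha < 0$; and part (ii) is trivial because $n$ is finite, so $\hat T^n_k(v^*)$ takes only finitely many values, $\pr{\rho(\hat T^n(v^*),v^*)>\epsilon}$ vanishes once $\epsilon$ exceeds some threshold, and therefore $\pr{\rho(\hat T^n(v^*),v^*)>\epsilon} \le a/\epsilon^b$ holds for $b=1$ and $a$ sufficiently large (exactly as in the logistic regression argument).

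With the assumptions verified, conclusion 1 is Theorem \ref{thm:main} applied with $\hat z^n_0 = v_0 = y_0$, and conclusion 2 is Theorem \ref{thm:invariant}: Assumption \ref{assm:contraction} supplies both the existence of $\pi^n$ with $\mu^n_k \to \pi^n$ in weak* topology and its uniqueness. For conclusion 3 the only point requiring care is that the natural test functions — the coordinate maps $v \mapsto v(s)$ — are unbounded on $\ALP V = \Re^{|\ALP S|}$, so Theorem \ref{thm:lln} does not apply verbatim. To handle this I would first record the a priori bound $\|\hat z^n_k\|_\infty \le C := \max\{\|v_0\|_\infty, \|c\|_\infty/(1-\alpha)\}$, which follows by induction from $\|\hat T^n_k(v)\|_\infty \le \|c\|_\infty + \alpha\|v\|_\infty$. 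Thus the chain stays in the compact set $\ALP X' = \{v \in \ALP V : \|v\|_\infty \le C\}$, which $\hat T^n_k$ maps into itself; restricting the state space to $\ALP X'$ and invoking Theorem \ref{thm:acompact} (case 3, under Assumption \ref{assm:contraction}) gives that $\hat a^n_k$ converges almost surely, hence in probability, to the mean $\bar a^n$ of $\pi^n$, which is supported on $\ALP X'$ and so has finite mean. Equivalently, one may keep $\ALP V$ as the state space and apply Theorem \ref{thm:bounded} coordinatewise: the deterministic bound $C$ verifies Assumption \ref{assm:bounded}, Assumption \ref{assm:unique}(1) (hence (2)) follows from the uniqueness and weak* convergence of $\mu^n_k$ already obtained, and the finitely many coordinate limits combine to yield convergence of $\hat a^n_k$.

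I do not expect any genuinely hard step: the argument is bookkeeping that routes the example through the general theorems. The one mild subtlety — and the spot where a careless proof would stumble — is conclusion 3: because the general law of large numbers is stated for bounded $f$ (or a compact state space), one must first establish the a priori boundedness of the empirical value-iteration iterates before the averaging theorem can be applied to the coordinate functions.
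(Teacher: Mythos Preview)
Your proposal is correct and follows essentially the same route as the paper: verify Assumptions \ref{assm:hatt}, \ref{assm:suppr}, \ref{assm:contraction} from the properties listed in Subsection \ref{sub:edpdiscount}, deduce conclusions 1 and 2 from Theorems \ref{thm:main} and \ref{thm:invariant}, and for conclusion 3 observe that the iterates are trapped in a compact sup-norm ball so that Theorem \ref{thm:acompact} applies. Your a priori bound $C=\max\{\|v_0\|_\infty,\|c\|_\infty/(1-\alpha)\}$ is in fact slightly more careful than the paper's, which simply restricts to $\{\|v\|_\infty\le\|c\|_\infty/(1-\alpha)\}$ without explicitly accommodating an initial $v_0$ outside that ball.
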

\begin{proof}
The proof follows from the discussions in Subsection \ref{sub:edpdiscount}, where we show that $\hat T^n_k$ satisfies Assumptions \ref{assm:suppr} and \ref{assm:contraction}.

Moreover, since every realization of $\hat T^n_k$ is a contraction operator with coefficient $\alpha$, if $\|v\|_\infty\leq \frac{\|c\|_\infty }{1-\alpha}$, then $\|\hat T^n_k(v)\|_\infty \leq  \frac{\|c\|_\infty }{1-\alpha}$. Thus, there is no loss of generality in restricting $\ALP X$ to be in the compact set $\{v\in\Re^{|\ALP S|}: \|v\|_\infty\leq \frac{\|c\|_\infty }{1-\alpha}\}$. The fact that the time averages converge in probability to the mean of the invariant distribution is a direct application of Theorem \ref{thm:acompact}.
\end{proof}

Figure \ref{fig:valiter} shows the mean (black line) and the variance (red region) of $\|v_k - \hat z^n_k\|_\infty$ for every time step $k = 1,\ldots,10000$ for $n = 1, 25, 400$. To compute the mean and the variance, 1000 sample paths were taken. As proved in Theorem \ref{thm:edpdiscount} above, the mean of $\|v_k - \hat z^n_k\|_\infty$ becomes smaller as $n$ grows. Figure \ref{fig:ta_valiter} shows the mean and the variance of $\|v^* - \hat a^n_k\|_\infty$ for $n = 1,25,400$ for $k=1,\ldots,10000$. For every $n$, since the sequence of distribution of $\hat z^n_k$ converges to a unique invariant distribution, $\hat a^n_k$ converges in probability to the mean of the invariant distribution. We observe in the figure that the variance of $\|v^* - \hat a^n_k\|_\infty$ reduces as $k$ grows implying that the average $\hat a^n_k$ is close but not equal to $v^*$. We further observe that the variance of $\|v^* - \hat a^n_k\|_\infty$ is vanishingly small for large values of $n$ -- a result that we have not formally proved here, and can be addressed in a future work.

\subsection{Synchronous Batch Q-Value Iteration for Discounted Cost MDP}
Q-value iteration is another algorithm that, like value iteration, computes the optimal value function in MDPs. Let $\ALP Q$ denote the set of all Q-value functions $Q: \mathcal{S}\times\mathcal{A}\rightarrow \Re$. Similar to Bellman operator of value iteration, we define an operator $T:\ALP Q\rightarrow\ALP Q$ as
\begin{equation*}
    T(q)(s,a) = c(s,a) + \alpha \sum_{s'\in\ALP S}p(s'|s,a)\min_{a'\in\ALP A} q(s',a').
\end{equation*}
Similar to Bellman operator, $T$ is a contraction on $(\mathcal{Q}, \|\cdot\|_\infty)$. Further, it can be shown that the fixed point of $T$ is $q^*$, which is defined as $q^*(s,a) = c(s,a)+\alpha \ex{v^*(s')|s,a}$, where $v^*(\cdot) = \min_{a\in\ALP A} q(\cdot,a)$. The Q-value iteration starts with an arbitrary $q_0\in\ALP Q$ and generates the sequence according to $q_{k+1} = T(q_k)$, which converges to $q^*$ as $k\rightarrow\infty$. 

The exact operator, as in other cases considered in the paper, can be approximated by the empirical operator $\hat T^n_k$:
\begin{equation*}
    \hat T^n_k(q)(s, a) = c(s, a) + \alpha \frac{1}{n}\sum_{i=1}^n \min_{a'\in\ALP A} q(s'_{k,i}(s,a),a'),
\end{equation*}
where $\{s'_{k,i}(s,a)\}_{i=1}^n$ are $n$ i.i.d. samples of the next state given the current state-action pair $(s,a)$. 

Let us define $\hat z^n_{k+1} = \hat T^n_k(\hat z^n_k)$, where $\hat z^n_0 = q_0$. Let $\hat a^n_k$ be the time averaged version of $\hat z^n_k$. The properties of the random operator $\hat T^n_k$ for empirical Q-value iteration has the same properties as listed in Theorem \ref{thm:edpdiscount} for the case of empirical value iteration. Thus, we omit repetition of essentially the same result here. The simulation results are plotted in Figure \ref{fig:qvaliter} and \ref{fig:ta_qvaliter}.

\begin{figure}[bth]
\includegraphics[width=\textwidth]{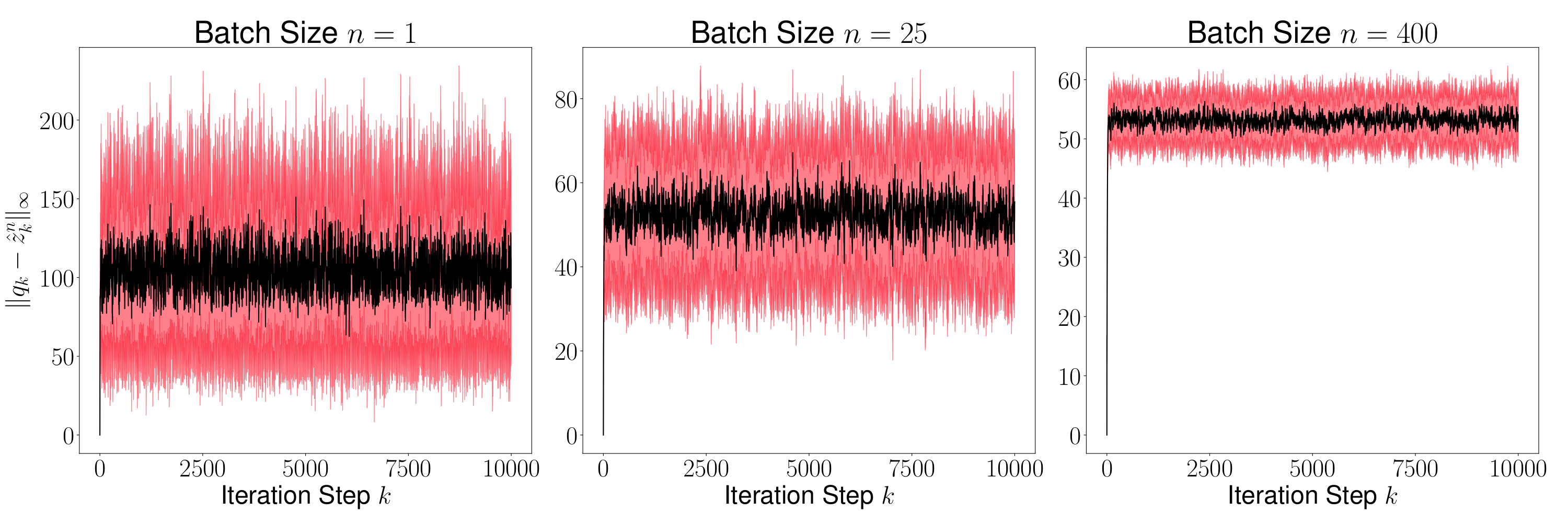}\vspace*{-2.5em}
\caption{Plot of the mean and the variance of $\|q_k - \hat z^n_k\|$ for $n = 1,25,400$ for $k=1,\ldots,1000$. It is clear from the plots that as $n$ grows, the mean of $\|q_k - \hat z^n_k\|$ reduces. The mean and the variance of $\|q_k - \hat z^n_k\|$ are computed using 1000 independent runs of the iterations. There are 20 states and 5 actions in this MDP.}
\label{fig:qvaliter}
\end{figure}
\begin{figure}[bth]
\includegraphics[width=\textwidth]{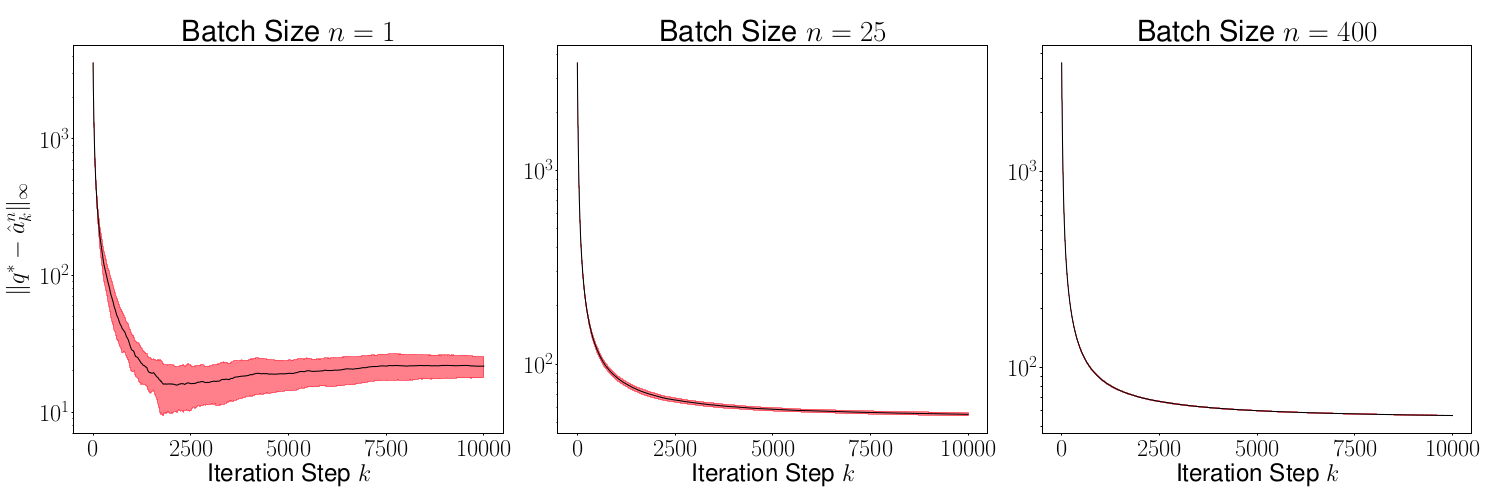}\vspace*{-2.5em}
\caption{Plot of the mean and the variance of $\|q^* - \hat a^n_k\|$ for $n = 1,25,400$ for $k=1,\ldots,1000$. Notice that for every $n$, the variance in $\|q^* - \hat z^n_k\|$ reduces as $k$ increases. The mean and the variance of $\|q^* - \hat a^n_k\|$ are computed using 1000 independent runs of the iterations starting from the same initial condition $\hat z^n_0 = q_0$.}
\label{fig:ta_qvaliter}
\end{figure}

Figure \ref{fig:qvaliter} shows the mean (black line) and the variance (red region around the mean) of  $\|q_k - \hat z^n_k\|_\infty$ for various values of $n$ for $k$ ranging from $1$ to $10000$. This mean and variance is computed using 1000 sample paths of the algorithm starting from the same initial $\hat z^n_0 = q_0$ for all sample paths. As is evident, the mean reduces as we increase the sample size $n$.

Figure \ref{fig:ta_qvaliter} plots the mean and the variance of $\|q^* - \hat a^n_k\|$. Once again, we observe that the variance reduces as $k$ increases since the Markov chain admits a unique invariant measure.

\section{Proof of Theorem \ref{thm:main}}\label{sec:mainproof}
To prove Theorem \ref{thm:main}, we need to introduce some further notation. Define $\Z = \{0,1,2,\ldots\}$ and we use $\mu_n\Rightarrow\psi$ to denote the convergence in weak topology. Let $\Pi_k:\ALP X^\Z\rightarrow\ALP X^{k+1}$ denote the projection operator that projects a sequence to its first $(k+1)$ components, that is,
\beqq{\Pi_k(x_0,x_1,x_2,\ldots) = (x_0,\ldots,x_k), \quad k\in \Z.}
For a measure $\mu\in\wp(\ALP X^\Z)$, let $\mu\circ \Pi_k^{-1}\in\wp(\ALP X^{k+1})$ denote the pullback of the measure to the first $k+1$ components. We note the following fact from probability theory.
\begin{theorem}[\cite{billingsley2013convergence}, Theorem 2.8]
Let $(\nu_n)_{n\in\Na}\subset\wp(\ALP X^\Z)$ be a sequence of measures. Then, $(\nu_n)_{n\in\Na}\Rightarrow\nu_\infty$ if and only if $\nu_n\circ \Pi_k^{-1}\Rightarrow\nu_\infty \circ \Pi_k^{-1}$ for every $k\in\Z$.  
\end{theorem}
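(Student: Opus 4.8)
The plan is to prove the two implications separately; the forward one is essentially a one-line observation and the reverse one carries all the content. For the forward direction, each projection $\Pi_k:\ALP X^\Z\to\ALP X^{k+1}$ is continuous (it is a composition of finitely many coordinate projections, and coordinate projections are continuous by the very definition of the product topology on $\ALP X^\Z$), and weak convergence of probability measures is preserved under pushforward along a continuous map (the continuous mapping theorem). Hence $\nu_n\Rightarrow\nu_\infty$ yields $\nu_n\circ\Pi_k^{-1}\Rightarrow\nu_\infty\circ\Pi_k^{-1}$ for every $k\in\Z$ simultaneously.

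For the reverse direction I would argue by tightness together with a subsequence argument. First fix $\epsilon>0$. Since $\ALP X$ is Polish, so is each $\ALP X^{k+1}$, and a weakly convergent sequence of probability measures on a Polish space is tight; applying this to the $k$-th coordinate marginal of $\nu_n$ (the pushforward of $\nu_n\circ\Pi_k^{-1}$ onto its last factor) produces a compact $C_k\subset\ALP X$ with $\nu_n(\{x:x_k\in C_k\})>1-\epsilon\,2^{-(k+1)}$ for all $n$. By Tychonoff's theorem $K:=\prod_{k\in\Z}C_k$ is compact in $\ALP X^\Z$, and a union bound gives $\nu_n(K)>1-\epsilon$ for all $n$, so $(\nu_n)_{n\in\Na}$ is tight. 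By Prokhorov's theorem it is then relatively compact in the weak* topology, so any subsequence admits a further subsequence $\nu_{n_j}\Rightarrow\nu'$ for some $\nu'\in\wp(\ALP X^\Z)$. Applying the already-established forward implication to this convergence and comparing with the hypothesis (restricted to the subsequence) shows $\nu'\circ\Pi_k^{-1}=\nu_\infty\circ\Pi_k^{-1}$ for every $k$, by uniqueness of weak limits on the Polish space $\ALP X^{k+1}$. Since the cylinder sets form a $\pi$-system generating the Borel $\sigma$-algebra of the countable product $\ALP X^\Z$ (using that $\ALP X$ is second countable), the $\pi$-$\lambda$ theorem forces $\nu'=\nu_\infty$. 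As every subsequence of $(\nu_n)_{n\in\Na}$ has a further subsequence converging to the fixed limit $\nu_\infty$, the whole sequence satisfies $\nu_n\Rightarrow\nu_\infty$.

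I expect the tightness step to be the only genuine obstacle, and it is more a matter of careful bookkeeping than of difficulty: one must descend from weak convergence of the full $(k+1)$-dimensional projections to tightness of each one-dimensional coordinate marginal, allocate geometrically decaying error budgets $\epsilon\,2^{-(k+1)}$ so the union bound closes, and invoke Tychonoff in the product topology — which is precisely the topology under which $\ALP X^\Z$ is the Polish space in the statement. A secondary point worth an explicit remark is that the Borel $\sigma$-algebra of $\ALP X^\Z$ is generated by the cylinders, so that agreement of all finite-dimensional marginals pins down the measure. If one prefers to bypass Prokhorov, an alternative is to combine the tightness estimate with a direct approximation: on the compact set $K$ any $f\in C_b(\ALP X^\Z)$ is, by Stone--Weierstrass, within $\epsilon$ in sup-norm over $K$ of a function $g$ of finitely many coordinates, and then $\bigl|\int f\,d\nu_n-\int f\,d\nu_\infty\bigr|$ is controlled by the finite-dimensional convergence of $\int g\,d\nu_n$ together with the uniform mass bounds $\nu_n(K^\complement)<\epsilon$ and the corresponding estimate for $\nu_\infty$. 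Either route works; I would present the Prokhorov version as the cleaner of the two.
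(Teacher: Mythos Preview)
Your proof is correct and follows the standard route (continuous mapping for one direction; tightness via coordinate marginals, Prokhorov, and identification of finite-dimensional distributions for the other). Note, however, that the paper does not supply its own proof of this statement: it is simply quoted as a fact from Billingsley (\emph{Convergence of Probability Measures}, Theorem 2.8) and used as a black box in the induction argument for Theorem~\ref{thm:main}. So there is no in-paper proof to compare against; you have filled in what the paper deliberately outsourced to the reference.
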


Recall that $\mu_n$ is a measure defined on a stochastic sequence and we would like to show that $\mu_n\Rightarrow\psi$. Due to the theorem above, all we need to establish is that $\mu_n\circ \Pi_k^{-1}\Rightarrow\psi\circ \Pi_k^{-1}$ for every $k$. We will establish this result via an induction argument following \cite{karr1975}. 

Fix $n\in\Na$. For $k = 0$, let $\mu_n\circ \Pi_0^{-1}(F) = \ind{y_0}(F) = \psi\circ\Pi_0^{-1}(F)$ for all measurable $F\subset\ALP X$. Thus, the statement is true for $k = 0$. In the induction step, we prove that if $\mu_n\circ\Pi_{k-1}^{-1}\Rightarrow \psi\circ\Pi_{k-1}^{-1}$ and for every closed set $F_1\subset\ALP X^k$ and $F_2\subset\ALP X$, we have 
\beqq{\limsup_{n\rightarrow\infty}\mu_n\circ\Pi_k^{-1}(F_1\times F_2)\leq\psi\circ\Pi_k^{-1}(F_1\times F_2).}
By the Portmanteau theorem \cite[Theorem 2.1]{billingsley2013convergence}, we then conclude that $\mu_n\circ\Pi_k^{-1}\Rightarrow \psi\circ\Pi_k^{-1}$ for every $k\in\Z$. We divide the proof into three steps.

{\it Step 1:} We show that for any uniformly continuous function $g$, the map $g\circ T$ is also uniformly continuous since $T$ is Lipschitz operator.
\begin{lemma}\label{lem:gunif}
If $g\in U_b(\ALP X)$, then $g\circ T \in U_b(\ALP X)$.
\end{lemma}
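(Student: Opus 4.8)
The plan is to verify directly the two defining properties of membership in $U_b(\ALP X)$ — boundedness and uniform continuity — for the composition $g\circ T$, exploiting the fact that $T$ is a contraction, hence globally Lipschitz with constant $\alpha<1$.

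Boundedness is immediate: since $g$ is bounded, $\sup_{x\in\ALP X}|g(T(x))|\le\sup_{y\in\ALP X}|g(y)|<\infty$, so $g\circ T$ is bounded (by the same constant). For uniform continuity, fix $\epsilon>0$. Since $g\in U_b(\ALP X)$ is uniformly continuous, there exists $\eta>0$ such that $\rho(y_1,y_2)<\eta$ implies $|g(y_1)-g(y_2)|<\epsilon$ for all $y_1,y_2\in\ALP X$. Now set $\delta=\eta$. Then whenever $\rho(x_1,x_2)<\delta$, the contraction property $\rho(T(x_1),T(x_2))\le\alpha\rho(x_1,x_2)<\alpha\delta\le\eta$ gives $|g(T(x_1))-g(T(x_2))|<\epsilon$. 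Since $\delta$ depends only on $\epsilon$ (through $\eta$) and not on the base points $x_1,x_2$, this is precisely uniform continuity of $g\circ T$, completing the proof.

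There is no genuine obstacle here: the statement is the routine fact that post-composition with any map preserves boundedness, and composition of a uniformly continuous function with a Lipschitz (hence uniformly continuous) map is uniformly continuous. The only point requiring a little care is keeping the chosen $\delta$ independent of the base point, which is automatic because both the modulus of continuity of $g$ and the Lipschitz constant of $T$ are uniform over all of $\ALP X$.
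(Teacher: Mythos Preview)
Your proof is correct and follows essentially the same approach as the paper's: both use the uniform continuity of $g$ to obtain a $\delta$, then invoke the contraction property $\rho(T(x_1),T(x_2))\le\alpha\rho(x_1,x_2)<\delta$ to transfer this modulus to $g\circ T$. You additionally make the boundedness check explicit, which the paper leaves implicit.
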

\begin{proof}
Fix $\epsilon>0$. Since $g\in U_b(\ALP X)$, we can pick a $\delta_\epsilon>0$ such that for any $x,x'\in\ALP X$ with $\rho(x,x')<\delta_\epsilon$, $|g(x)-g(x')|<\epsilon$. Now, for any $y,y'\in\ALP X$ with $\rho(y,y')<\delta_\epsilon$, we know that
\beqq{\rho(T(y),T(y')) <\alpha\delta_\epsilon<\delta_\epsilon.}
Taking $x = T(y)$ and $x' = T(y')$, we conclude that
\beqq{|g\circ T(y)- g\circ T(y') |<\epsilon.}
This implies that $g\circ T\in U_b(\ALP X)$.
\end{proof}

{\it Step 2:} Next, we establish a consequence of Assumption \ref{assm:suppr} and Lemma \ref{lem:gunif}. Let $x_{0:l} = (x_0,\ldots,x_l)$. We assume that $\mu_n\circ \Pi_k^{-1}\Rightarrow\psi\circ \Pi_k^{-1}$ for all $k=0,\ldots,l$, which serves as the induction hypothesis.
\begin{lemma}\label{lem:weak}
Suppose that Assumption \ref{assm:suppr} holds. Then, for every $\epsilon>0$, there exists an $N_\epsilon$ such that
\beqq{\Bigg|\int_{F_1} g(T(x_l)) \mu_n\circ \Pi_l^{-1}(dx_{0:l}) - \int_{F_1} g(T(x_l))\psi\circ \Pi_l^{-1}(dx_{0:l})\Bigg| <\epsilon \text{ for all } n\geq N_\epsilon,}
and 
\beqq{ \int_{\ALP X}  \Big|\ex{g(\hat T^n_l(x_l)} - g(T(x_l))\Big| \mu_n\circ \Pi_l^{-1}(dx_{0:l}) \leq \epsilon \qquad \text{ for all } n\geq N_\epsilon.}
\end{lemma}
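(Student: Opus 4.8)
The plan is to prove the two estimates in Lemma~\ref{lem:weak} essentially independently, since the second one feeds into the first. I would start with the second display. Fix $\epsilon>0$. The strategy is to split the integrand $|\ex{g(\hat T^n_l(x_l))}-g(T(x_l))|$ according to whether $\rho(\hat T^n_l(x_l),T(x_l))$ is small or not. Since $g\in U_b(\ALP X)$, pick $\delta_\epsilon>0$ so that $\rho(u,v)<\delta_\epsilon$ implies $|g(u)-g(v)|<\epsilon/2$, and let $M=\sup_{x}|g(x)|<\infty$. Then for each fixed $x_l$,
\beqq{\Big|\ex{g(\hat T^n_l(x_l))}-g(T(x_l))\Big|\leq \ex{\big|g(\hat T^n_l(x_l))-g(T(x_l))\big|}\leq \frac{\epsilon}{2}+2M\,\pr{\rho(\hat T^n_l(x_l),T(x_l))\geq\delta_\epsilon}.}
The obstacle here is that Assumption~\ref{assm:suppr} only gives a uniform probability bound over \emph{compact} sets $K$, whereas we must integrate $x_l$ against the (non-compact-support, in general) measure $\mu_n\circ\Pi_l^{-1}$. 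The remedy is to use tightness: by the induction hypothesis $\mu_n\circ\Pi_l^{-1}\Rightarrow\psi\circ\Pi_l^{-1}$, and since $\psi\circ\Pi_l^{-1}=\ind{(y_0,\dots,y_l)}$ is a Dirac mass, the family $\{\mu_n\circ\Pi_l^{-1}\}_n$ is tight on $\ALP X^{l+1}$; hence there is a compact set $K\subset\ALP X$ (take the projection of a compact set in $\ALP X^{l+1}$ onto the last coordinate, enlarged to contain $y_l$) with $\mu_n\circ\Pi_l^{-1}(\{x_{0:l}:x_l\notin K\})<\epsilon/(8M)$ for all $n$. Applying Assumption~\ref{assm:suppr} to $K$ with tolerance $\delta_\epsilon$ and probability level $\epsilon/(8M)$ yields $N_\epsilon$ such that for $n\geq N_\epsilon$ and all $x_l\in K$, $\pr{\rho(\hat T^n_l(x_l),T(x_l))\geq\delta_\epsilon}<\epsilon/(8M)$. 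Integrating the pointwise bound over $x_l\in K$ and $x_l\notin K$ separately (using the crude bound $2M$ on the bad set) gives the second inequality.

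For the first display, the integrand $g\circ T$ is bounded, and by Lemma~\ref{lem:gunif} it lies in $U_b(\ALP X)$; composing with the projection $x_{0:l}\mapsto x_l$ (which is continuous) shows $x_{0:l}\mapsto g(T(x_l))$ is in $C_b(\ALP X^{l+1})$. The indicator $1_{F_1}$ is not continuous, so I would not directly apply weak convergence; instead, since $\psi\circ\Pi_l^{-1}$ is the Dirac mass at the point $\VEC{y}_{0:l}:=(y_0,\dots,y_l)$, only two cases arise. If $\VEC{y}_{0:l}\in F_1$: cover the complement situation by noting $\int_{F_1}g(T(x_l))\psi\circ\Pi_l^{-1}(dx_{0:l})=g(T(y_l))=g(y_{l+1})$, and $\int_{F_1}g(T(x_l))\mu_n\circ\Pi_l^{-1}(dx_{0:l})=\int g(T(x_l))\mu_n\circ\Pi_l^{-1}(dx_{0:l})-\int_{F_1^\complement}g(T(x_l))\mu_n\circ\Pi_l^{-1}(dx_{0:l})$; the first term converges to $g(T(y_l))$ by weak convergence of $\mu_n\circ\Pi_l^{-1}$ applied to the $C_b$ function $g\circ T\circ\pi_l$, and the second is bounded by $M\cdot\mu_n\circ\Pi_l^{-1}(F_1^\complement)$. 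By the Portmanteau theorem and since $F_1$ is closed (so $F_1^\complement$ is open), $\liminf_n\mu_n\circ\Pi_l^{-1}(F_1^\complement)\geq\psi\circ\Pi_l^{-1}(F_1^\complement)$; but we need an \emph{upper} bound on $\mu_n\circ\Pi_l^{-1}(F_1^\complement)$, equivalently a \emph{lower} bound on $\mu_n\circ\Pi_l^{-1}(F_1)$, which Portmanteau for the closed set $F_1$ does not directly give. The cleaner route, which I would adopt, is to observe that $\VEC y_{0:l}\in F_1$ and $F_1$ closed does not suffice — instead use that the Dirac mass is concentrated at a single point and argue: either $\VEC y_{0:l}$ lies in the interior-like "support" captured by the limit, or handle $\VEC y_{0:l}\notin F_1$ where the right side is $0$ and we need $\limsup_n\int_{F_1}g(T(x_l))\mu_n\circ\Pi_l^{-1}\leq 0$ — this does not hold for general $g$.

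Recognizing this, the correct reading is that the lemma's first estimate should be interpreted with $g\geq 0$ (or via $|g|$), or that the relevant application in the induction only needs it for the specific $F_1$ arising there; in any case the honest approach is: write $\int_{F_1}g(T(x_l))\,\mu_n\circ\Pi_l^{-1}(dx_{0:l})=\int_{F_1} (g\circ T)\,d(\mu_n\circ\Pi_l^{-1})$ and $\int_{F_1}(g\circ T)\,d(\psi\circ\Pi_l^{-1})$, subtract, and bound the difference by $\big|\int(g\circ T)\,d\mu_n\circ\Pi_l^{-1}-\int(g\circ T)\,d\psi\circ\Pi_l^{-1}\big| + \int_{F_1^\complement}|g\circ T|\,d\psi\circ\Pi_l^{-1} + \int_{F_1^\complement}|g\circ T|\,d\mu_n\circ\Pi_l^{-1}$. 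Since $\psi\circ\Pi_l^{-1}$ is a Dirac mass, one arranges (by the way $F_1$ is chosen in the induction, namely $F_1$ a closed set \emph{containing} $\VEC y_{0:l}$ in its interior, or at least with $\psi\circ\Pi_l^{-1}(\partial F_1)=0$) that both boundary-type terms vanish: the first because $g\circ T\circ\pi_l\in C_b$ and weak convergence applies, the last because $\mu_n\circ\Pi_l^{-1}(F_1^\complement)\to\psi\circ\Pi_l^{-1}(F_1^\complement)=0$ when $F_1^\complement$ is a continuity set. The main obstacle, then, is precisely this measure-theoretic bookkeeping: converting the weak convergence of $\mu_n\circ\Pi_l^{-1}$ on $C_b$ functions into control of integrals over the non-open, non-closed region $F_1$ against the discontinuous weight, which is handled by exploiting that the limiting measure is a point mass and by the tightness argument above; the bound $N_\epsilon$ is then the maximum of the thresholds produced in each of these finitely many estimates.
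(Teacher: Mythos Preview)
Your treatment of the \emph{second} display is essentially the paper's own argument: the paper (Appendix~A) also uses tightness of $\{\mu_n\circ\Pi_l^{-1}\}_n$ (which follows from the induction hypothesis) to obtain a compact set $F_\epsilon$ with small complement, proves a sub-lemma (Lemma~\ref{lem:gsuppr}) giving the uniform bound $|\ex{g(\hat T^n_k(x))}-g(T(x))|<\epsilon$ for $x\in F_\epsilon$ from Assumption~\ref{assm:suppr} and the uniform continuity of $g$, and then splits the integral over $F_\epsilon$ and $F_\epsilon^\complement$. Your pointwise bound $\epsilon/2+2M\pr{\rho(\hat T^n_l(x_l),T(x_l))\geq\delta_\epsilon}$ is exactly the content of that sub-lemma, so this half is correct and matches the paper.

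For the \emph{first} display the paper is much less careful than you are: it simply says ``$g\circ T\in U_b(\ALP X)$ by Lemma~\ref{lem:gunif}; the first result is a direct consequence of the induction hypothesis.'' In other words, the paper treats $\int_{F_1}(g\circ T)\,d\mu_n\circ\Pi_l^{-1}\to\int_{F_1}(g\circ T)\,d\psi\circ\Pi_l^{-1}$ as immediate from weak convergence. Your objection that $1_{F_1}$ is discontinuous is legitimate, and the paper does not address it. Your eventual observation---that the limit $\psi\circ\Pi_l^{-1}$ is the Dirac mass at $(y_0,\dots,y_l)$, so one can split into the cases $(y_0,\dots,y_l)\notin F_1$ (where $\mu_n\circ\Pi_l^{-1}(F_1)\to0$ by Portmanteau on the closed set) and $(y_0,\dots,y_l)\in F_1$---is the right way to make the paper's one-line claim honest. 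However, you then over-hedge: you do not need to impose ``$\psi\circ\Pi_l^{-1}(\partial F_1)=0$'' or ``$\VEC y_{0:l}$ in the interior of $F_1$'' as extra hypotheses. In the application (Step~3), the functions $g=g_m$ are nonnegative, and only the one-sided bound $\int_{F_1}(g\circ T)\,d\mu_n\circ\Pi_l^{-1}-\int_{F_1}(g\circ T)\,d\psi\circ\Pi_l^{-1}<\epsilon$ is actually used; for $g\ge0$ and $(y_0,\dots,y_l)\in F_1$ this follows from $\int_{F_1}(g\circ T)\,d\mu_n\circ\Pi_l^{-1}\le\int(g\circ T)\,d\mu_n\circ\Pi_l^{-1}\to g(T(y_l))=\int_{F_1}(g\circ T)\,d\psi\circ\Pi_l^{-1}$. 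So your detour through continuity sets and boundary conditions, while not wrong, is unnecessary once you notice the sign of $g$ and the direction of the inequality that Step~3 requires.
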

\begin{proof}
Lemma \ref{lem:gunif} implies that $g\circ T$ is uniformly continuous. The first result is a direct consequence of the induction hypothesis. The proof of the second result is presented in Appendix \ref{app:weak}.
\end{proof}

{\it Step 3:} We now complete the proof by establishing the induction step using the result from Lemma \ref{lem:weak}. We again follow \cite{karr1975}. Let $F_1\subset\ALP X^{l+1}$ be any closed set. Then, using Lemma \ref{lem:weak}, for any $g\in U_b(\ALP X)$, there exists $N_\epsilon$ such that
\beq{& \int_{F_1} \int_{\ALP X} g(x_{l+1})\mu_n\circ \Pi_{l+1}^{-1}(x_{0:l+1}) - \int_{F_1} \int_{\ALP X} g(x_{l+1})\psi\circ \Pi_{l+1}^{-1}(dx_{0:l+1}) \nonumber\\
& = \int_{F_1}  \ex{g(\hat T^n_l(x_l))}\mu_n\circ \Pi_l^{-1}(dx_{0:l}) - \int_{F_1} g(T(x_l))\psi\circ \Pi_l^{-1}(dx_{0:l}) \nonumber \\
& \leq  \int_{\ALP X^{l+1}}  \Big|\ex{g(\hat T^n_l(x_l)} - g(T(x_l))\Big| \mu_n\circ \Pi_l^{-1}(dx_{0:l})\nonumber\\
& + \Bigg|\int_{F_1} g(T(x_l)) \mu_n\circ \Pi_l^{-1}(dx_{0:l}) - \int_{F_1} g(T(x_l))\psi\circ \Pi_l^{-1}(dx_{0:l})\Bigg| \nonumber\\
& <2\epsilon \text{ for all } n\geq N_\epsilon\label{eqn:munpsi}}
Now, for any $F_2\subset\ALP X$ closed, we can construct a sequence of $(g_m)_{m\in\Na}\subset U_b(\ALP X)$ such that $g_{m+1}\leq g_m$ for all $m\in\Na$ and $g_m\rightarrow 1_{F_2}$. This leads us to the following inequality for every $m\in\Na$:
\beqq{\int_{F_1}\int_{\ALP X} 1_{F_2}(x_l)\mu_n\circ \Pi_{l+1}^{-1}(x_{0:l+1}) \leq \int_{F_1}\int_{\ALP X} g_m(x_l)\mu_n\circ \Pi_{l+1}^{-1}(x_{0:l+1}).}
Taking the limsup on both sides and using \eqref{eqn:munpsi}, we arrive at the following inequality
\beq{\label{eqn:limsupmun}\limsup_{n\rightarrow\infty}\mu_n\circ \Pi_{l+1}^{-1}(F_1\times F_2) \leq \int_{F_1} \int_{\ALP X} g_m(x_{l+1})\psi\circ \Pi_{l+1}^{-1}(dx_{0:l+1}).}
Now, since the right side holds for every $m\in\Na$, we take the limit $m\rightarrow\infty$ and use the bounded convergence theorem to conclude
\beq{\label{eqn:limmpsi}\lim_{m\rightarrow\infty}\int_{F_1} \int_{\ALP X} g_m(x_{l+1})\psi\circ \Pi_{l+1}^{-1}(dx_{0:l+1}) = \psi\circ \Pi_{l+1}^{-1}(F_1\times F_2).}
Collecting the two inequalities in \eqref{eqn:limsupmun} and \eqref{eqn:limmpsi}, we conclude that
\beqq{ \limsup_{n\rightarrow\infty}\mu_n\circ \Pi_{l+1}^{-1}(F_1\times F_2) \leq \psi\circ \Pi_{l+1}^{-1}(F_1\times F_2).}
The proof of the lemma is complete.

\section{Proof of Theorem \ref{thm:lln}}\label{sec:llnproof}
We first introduce some notations. For $f\in C_b(\ALP X)$, we define $g_k\in C_b(\ALP X)$, $k=0,1,\ldots$ as
\beqq{g_0(x) = f(x), \quad g_k(x) = F^k(f)(x) = \ex{f\left( \hat T^n_k\circ\ldots\circ \hat T^n_1(x)\right)}.}
The following equation follows immediately from the above definitions:
\beq{\label{eqn:gkm}\ex{g_k(\hat z^n_m)|\hat z^n_{m-1}} = g_{k+1}(\hat z^n_{m-1}).}
Define $\bar F^*_k = \frac{1}{k}\sum_{i=1}^{k} (F^*)^i$. Define the constant function $c_f\in C_b(\ALP X)$ as 
\beqq{c_f(x) =  \int f d\pi^n.}
The average of the functions $g_k$, denoted by $\bar g_k$, is
\beqq{\bar g_k(x) = \frac{1}{k}\sum_{i=1}^k g_{i-1}(x).}
For a function $f\in C_b(\ALP X)$ and a set $C\subset\ALP X$, we use $f|_C$ to denote the restriction of the function on the set $C$. We now prove three lemmas that lead to the result. For the next result, let us define the occupation measure $\eta_k$ over the set $C\subset\ALP X$ as 
\beq{\label{eqn:etak}\eta_k(C) = \frac{1}{k}\sum_{i=0}^{k-1} 1_{\{\hat z^n_k\in C\}}.}
We claim the following.
\begin{lemma}\label{lem:occ}
If Assumptions \ref{assm:hatt} and \ref{assm:unique} holds, then for every $\epsilon>0$, there exists a compact set $C_\epsilon\subset\ALP X$ such that 
\beqq{\limsup_{k\rightarrow\infty}\pr{\eta_k(C_\epsilon^\complement)\geq \epsilon} <\epsilon.}
\end{lemma}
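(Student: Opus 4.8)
The plan is to bound the probability appearing in the statement by Markov's inequality and then to extract the required compact set from tightness of a suitable averaged sequence of laws; no finer structure of the chain is needed beyond what Assumptions \ref{assm:hatt} and \ref{assm:unique} already give.

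First I would write $\mu^n_i$ for the law of $\hat z^n_i$. Under Assumption \ref{assm:hatt} the process $(\hat z^n_k)_{k}$ is a time-homogeneous Markov chain with kernel $Q_n$, so $\mu^n_i=(F^*)^i(\mu)$. Then, for any Borel set $C\subset\ALP X$,
\beqq{\ex{\eta_k(C^\complement)} = \frac1k\sum_{i=0}^{k-1}\pr{\hat z^n_i\in C^\complement} = \frac1k\sum_{i=0}^{k-1}\mu^n_i(C^\complement) = \nu_k(C^\complement), \qquad \nu_k := \frac1k\sum_{i=0}^{k-1}\mu^n_i\in\wp(\ALP X),}
and Markov's inequality gives $\pr{\eta_k(C^\complement)\geq\epsilon}\leq \epsilon^{-1}\nu_k(C^\complement)$ for every $k$. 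Hence it suffices to produce, for each $\epsilon>0$, a compact set $C_\epsilon$ with $\sup_{k}\nu_k(C_\epsilon^\complement)\leq\epsilon^2/2$.

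Next I would observe that $\nu_k=\frac1k\sum_{i=0}^{k-1}(F^*)^i(\mu)$ converges in the weak* topology to $\pi^n$: this is precisely Assumption \ref{assm:unique}(2) (and Assumption \ref{assm:unique}(1) implies (2), as noted before Theorem \ref{thm:lln}). Since $\ALP X$ is Polish, a weakly convergent sequence of probability measures is tight: the family $\{\nu_k:k\in\Na\}\cup\{\pi^n\}$ is compact in the weak* topology, hence relatively compact, and Prokhorov's theorem then yields tightness. Consequently, given $\epsilon>0$ there is a compact $C_\epsilon\subset\ALP X$ with $\nu_k(C_\epsilon^\complement)\leq\epsilon^2/2$ for all $k$. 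Combined with the Markov bound,
\beqq{\pr{\eta_k(C_\epsilon^\complement)\geq\epsilon}\leq\frac1\epsilon\cdot\frac{\epsilon^2}{2}=\frac\epsilon2\quad\text{for every }k,}
so $\limsup_{k\to\infty}\pr{\eta_k(C_\epsilon^\complement)\geq\epsilon}\leq\epsilon/2<\epsilon$, which is the claim.

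The one point that requires care — and the only place the argument is not entirely mechanical — is that $C_\epsilon^\complement$ is an \emph{open} set, so the Portmanteau inequality applied to $\nu_k\Rightarrow\pi^n$ gives $\liminf_k\nu_k(C_\epsilon^\complement)\geq\pi^n(C_\epsilon^\complement)$, which is the wrong direction; merely choosing $C_\epsilon$ so that $\pi^n(C_\epsilon^\complement)$ is small does not control $\nu_k(C_\epsilon^\complement)$ uniformly in $k$. Routing the argument through tightness of the whole sequence $\{\nu_k\}$ rather than through the weak limit alone is exactly what repairs this; the rest — the Markov estimate and identifying $\nu_k$ with the averaged operator of Assumption \ref{assm:unique} — is routine.
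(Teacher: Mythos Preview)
Your proof is correct and follows essentially the same route as the paper: identify $\ex{\eta_k(C^\complement)}$ with the averaged measure $\bar F^*_k(\mu)(C^\complement)$, invoke Prokhorov's theorem on the weakly convergent sequence $\{\bar F^*_k(\mu)\}$ to obtain a single compact $C_\epsilon$ with uniformly small complement, and finish with Markov's inequality. Your added remark on why one must use tightness of the whole sequence rather than merely smallness of $\pi^n(C_\epsilon^\complement)$ is a useful clarification not spelled out in the paper.
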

\begin{proof}
Note that since $\bar F^*_k(\mu)\rightarrow\pi^n$ for any $\mu\in\ALP M$ by Assumption \ref{assm:unique}, we conclude from Prohorov's theorem that the sequence $\{\bar F^*_k(\mu)\}_{k\in\Na}$ is tight. Thus, for $\epsilon>0$, let $C_\epsilon$ be the compact set such that
\beqq{\bar F^*_k(\mu)(C_\epsilon^\complement) <\epsilon^2 \text{ for all } k\in\Na.}
Further, we note that for any set $C\subset\ALP X$, we have
\beqq{\ex{\eta_k(C)} = \ex{\frac{1}{k}\sum_{i=0}^{k-1} 1_{\{\hat z^n_k\in C\}}} = \bar F^*_k(\mu)(C).}
Using the above identity and using Markov's inequality, we conclude that
\beqq{\pr{\eta_k(C_\epsilon^\complement)\geq \epsilon} \leq \frac{\ex{\eta_k(C_\epsilon^\complement)}}{\epsilon} <\epsilon. }
The proof of the theorem is complete.
\end{proof}

\begin{lemma}\label{lem:equi}
Let $C\subset\ALP X$ be a compact set. Then, the sequence of functions $(\bar g_k|_C)_{k\in\Na}$ is uniformly bounded and equicontinuous and converges uniformly to $c_f|_C$. 
\end{lemma}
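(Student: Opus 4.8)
The plan is to verify the three asserted properties — uniform boundedness, equicontinuity, and uniform convergence to $c_f|_C$ — in that order, since equicontinuity plus uniform boundedness will combine with Arzel\`a--Ascoli to organize the convergence argument. First I would note uniform boundedness is immediate: each $g_k(x) = \ex{f(\hat T^n_k\circ\ldots\circ\hat T^n_1(x))}$ satisfies $|g_k(x)| \le \|f\|_\infty$ because $f\in C_b(\ALP X)$, hence $|\bar g_k(x)| \le \|f\|_\infty$ for every $x$ and every $k$, so the family $(\bar g_k|_C)_{k\in\Na}$ is uniformly bounded on $C$ (in fact on all of $\ALP X$).

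Next, for equicontinuity, I would exploit that $F$ maps $C_b(\ALP X)$ into itself and, crucially, that the chain is weak Feller — so the operator $F$ preserves continuity, and one expects it to regularize. The cleanest route is: on the compact set $C$, the identity function and more to the point the functions $g_k$ are all obtained by applying the transition kernel; I would argue that $(g_k|_C)_{k\in\Na}$ is equicontinuous by showing each $g_k$ is the composition of $f$ with $k$-fold averaged compositions of the random operators, and that these compositions are (in expectation) Lipschitz or at least uniformly equicontinuous on $C$. Concretely, using Assumption \ref{assm:suppr} together with the uniform continuity on compacta afforded by the weak Feller property, one shows that for any $\epsilon>0$ there is $\delta>0$, \emph{uniform in} $k$, such that $\rho(x,x')<\delta$ with $x,x'\in C$ implies $|g_k(x)-g_k(x')|<\epsilon$. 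Averaging preserves this modulus of continuity, so $(\bar g_k|_C)_{k\in\Na}$ is equicontinuous.

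Finally, for the uniform convergence $\bar g_k|_C \to c_f|_C$, I would combine Arzel\`a--Ascoli with identification of the limit via Assumption \ref{assm:unique}(2). By Arzel\`a--Ascoli, every subsequence of $(\bar g_k|_C)$ has a further subsequence converging uniformly on $C$ to some $h\in C(C)$; it suffices to show every such limit equals $c_f|_C$, which forces the whole sequence to converge uniformly to $c_f|_C$. To identify $h$: note $\int \bar g_k\, d\mu = \frac{1}{k}\sum_{i=1}^k \int g_{i-1}\,d\mu = \frac{1}{k}\sum_{i=0}^{k-1}\int f\, d(F^*)^i(\mu) \to \int f\, d\pi^n = c_f$ for every $\mu\in\ALP M$ by Assumption \ref{assm:unique}(2); combining this integral convergence against a sufficiently rich family of initial laws $\mu$ (Dirac masses at points of $C$, which must be shown to lie in, or be approximable within, $\ALP M$ — or else pushed through the structure of $\ALP M$) pins down $h\equiv c_f$ on $C$. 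The main obstacle I anticipate is precisely the equicontinuity step: establishing a modulus of continuity for $g_k$ that does \emph{not} degrade with $k$, since naive estimates would accumulate a factor per composition; this is where the interplay between Assumption \ref{assm:suppr} (approximating $T$, whose compositions contract) and the weak Feller regularization must be used carefully, likely by comparing $g_k$ to $f\circ T^k$ on $C$ and noting $T^k$ has Lipschitz constant $\alpha^k\to 0$, so the random compositions are eventually equicontinuous uniformly in $k$ with a controllable error term.
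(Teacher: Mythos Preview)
There is a genuine gap in your equicontinuity argument: you rely on Assumption~\ref{assm:suppr} (and, via the comparison with $f\circ T^k$, on the contraction $T$), but Lemma~\ref{lem:equi} sits inside the proof of Theorem~\ref{thm:lln}, whose hypotheses are \emph{only} Assumptions~\ref{assm:hatt} and~\ref{assm:unique}. Neither Assumption~\ref{assm:suppr} nor any quantitative property of $T$ is available here, so the entire ``compare $g_k$ to $f\circ T^k$ and use $\alpha^k\to 0$'' strategy is not admissible in this context. Even if Assumption~\ref{assm:suppr} were granted, it controls a \emph{single} application of $\hat T^n$ versus $T$, not the $k$-fold composition, so the accumulated error would have to be tracked over $k$ steps; your sketch does not address this.

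The paper takes a different and much shorter route that uses only what is assumed. It first observes the pointwise identity $\bar g_k(x)=\langle f,\bar F^*_k(\delta_x)\rangle$ and invokes Assumption~\ref{assm:unique}(2) with $\mu=\delta_x$ to get $\bar g_k(x)\to c_f(x)$ for every $x\in C$; equicontinuity and then uniform convergence on $C$ are then read off from Ascoli's theorem (following the argument in Breiman~\cite{breiman1960} and Meyn--Tweedie~\cite[Sec.~18.5]{meyn2012}). So the logical order is the reverse of yours: pointwise convergence to the constant $c_f$ comes first, and equicontinuity is a consequence, not a separately engineered estimate on the modulus of continuity of each $g_k$. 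You correctly flag the need for Dirac masses $\delta_x$ to lie in $\ALP M$; the paper uses this implicitly.
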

\begin{proof}
First, we note that $\|g_i\|_\infty\leq \|f\|_\infty$ for all $i\in\Na$, which implies that $\bar g_k|_C$ is uniformly bounded.

The proof of equicontinuity follows directly from Assumption \ref{assm:unique}(2) and Ascoli theorem. Note that as $k\rightarrow\infty$, we get
\beqq{\bar g_k(x) = \langle f,\bar F^*_k(\delta_x)\rangle \rightarrow c_f(x).}
Thus, by Ascoli's theorem, $\{\bar g_k|_C\}_{k\in\Na}$ is an equicontinuous sequence of functions. The result then follows using Assumption \ref{assm:unique}(2).
\end{proof}

Next, we establish the following result.
\begin{lemma}\label{lem:gkN}
For every $M\in\Na$, we have
\beqq{\lim_{N\rightarrow\infty}\left|\frac{1}{N} \sum_{l=0}^{N-1} \Big(g_0(\hat z^n_l) - \bar g_M(\hat z^n_l)\Big)\right| = 0 \quad \mathbb{P}\text{-almost surely}.}
\end{lemma}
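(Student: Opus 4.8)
The plan is to recognise that the one-step identity \eqref{eqn:gkm} makes $g_{j-1}(\hat z^n_{l+1})-g_j(\hat z^n_l)$ a martingale-difference sequence (this is the martingale device used in \cite{breiman1960}), and then to reduce the statement, by telescoping, to a strong law of large numbers for bounded-increment martingales.

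First I would rewrite the integrand combinatorially. Since $\bar g_M=\frac1M\sum_{i=0}^{M-1}g_i$, we have $g_0-\bar g_M=\frac1M\sum_{i=0}^{M-1}(g_0-g_i)$, and each difference telescopes as $g_0-g_i=\sum_{j=1}^{i}(g_{j-1}-g_j)$. Hence, writing $A_N^{(j)}:=\frac1N\sum_{l=0}^{N-1}\bigl(g_{j-1}(\hat z^n_l)-g_j(\hat z^n_l)\bigr)$,
\[
\frac1N\sum_{l=0}^{N-1}\bigl(g_0(\hat z^n_l)-\bar g_M(\hat z^n_l)\bigr)=\frac1M\sum_{i=0}^{M-1}\sum_{j=1}^{i}A_N^{(j)}.
\]
Because $M$ is fixed, the right-hand side is a finite linear combination of the $A_N^{(j)}$, so it suffices to prove $A_N^{(j)}\to0$ $\mathbb{P}$-almost surely for each fixed $j\ge1$.

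Second, fix $j\ge1$, set $\ALP F_l:=\sigma(\hat z^n_0,\dots,\hat z^n_l)$ and $\xi_l:=g_{j-1}(\hat z^n_{l+1})-g_j(\hat z^n_l)$. By \eqref{eqn:gkm} applied with parameter $k=j-1$ and time index $l+1$, $\ex{\xi_l\mid\ALP F_l}=g_j(\hat z^n_l)-g_j(\hat z^n_l)=0$, so $S_N:=\sum_{l=0}^{N-1}\xi_l$ is an $(\ALP F_N)$-martingale with increments $\xi_{l}$; moreover $|\xi_l|\le 2\|f\|_\infty$, since $g_i=F^i(f)$ and $F$ is a Markov operator (an average against a probability kernel), whence $\|g_i\|_\infty\le\|f\|_\infty$ for all $i$. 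A one-line re-indexing gives the telescoping identity
\[
\sum_{l=0}^{N-1}\bigl(g_{j-1}(\hat z^n_l)-g_j(\hat z^n_l)\bigr)=S_N+g_{j-1}(\hat z^n_0)-g_{j-1}(\hat z^n_N).
\]
Dividing by $N$: the boundary term vanishes because $g_{j-1}$ is bounded, and $S_N/N\to0$ $\mathbb{P}$-almost surely by the strong law of large numbers for martingale differences with bounded increments — equivalently, by Azuma's inequality $\pr{|S_N|\ge\epsilon N}\le 2\exp\!\bigl(-\epsilon^2N/(8\|f\|_\infty^2)\bigr)$ together with Borel--Cantelli, or by Chow's $L^2$ criterion $\sum_l\ex{\xi_l^2}/l^2<\infty$, which holds because the $\xi_l$ are uniformly bounded. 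This yields $A_N^{(j)}\to0$ a.s., and the lemma follows from the finite-sum reduction of the first step.

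The argument is essentially routine once the martingale structure is isolated; the only bookkeeping that needs care is the double telescoping in $i$ and $j$ and the uniform bound $\|g_i\|_\infty\le\|f\|_\infty$ that guarantees bounded martingale increments. Rather than an obstacle, the one point to be careful about is that the filtration must be the one generated by the Markov chain $(\hat z^n_k)_{k\in\Na}$, so that \eqref{eqn:gkm} applies verbatim; note that no further structure of the chain — not even the existence of the invariant measure $\pi^n$ — is used for this particular lemma.
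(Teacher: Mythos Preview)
Your argument is correct and is essentially the same as the paper's. Both proofs isolate the martingale-difference sequence coming from \eqref{eqn:gkm} (the paper writes it as $u_{p,l}=g_p(\hat z^n_l)-g_{p+1}(\hat z^n_{l-1})$, which is your $\xi_{l-1}$ with $p=j-1$) and invoke the martingale strong law of large numbers; the only difference is bookkeeping---the paper telescopes simultaneously in the function index and the time index via $g_0(\hat z^n_l)-g_k(\hat z^n_{l-k})=\sum_{p=0}^{k-1}u_{p,l}$ and picks up the boundary term $\frac{1}{N}\sum_{l=N-k}^{N-1}g_k(\hat z^n_l)$, whereas you first telescope purely in the function index and then handle the time shift with the boundary term $g_{j-1}(\hat z^n_0)-g_{j-1}(\hat z^n_N)$.
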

\begin{proof}
See Appendix \ref{app:gkN}.
\end{proof}
The proof can now be completed easily. Fix $\epsilon>0$ and recall the definition of the set $C_\epsilon$ from Lemma \ref{lem:occ}. We now note that for every $K\in\Na$ and $M\in\Na$, we have
\beqq{& \frac{1}{K}\sum_{k=1}^K f(\hat z^n_k) - \int fd\pi^n = \frac{1}{K}\sum_{k=1}^K g_0(\hat z^n_k) - \int fd\pi^n\\
& \leq   1{\{  \hat z^n_1 \in C_\epsilon,\ldots,\hat z^n_K \in C_\epsilon \}}\left( \frac{1}{K}\sum_{k=1}^K g_0(\hat z^n_k) - \int fd\pi^n \right)  + 2\|f\|_\infty\eta_K(C_\epsilon^\complement),\\
& < 1{\{\hat z^n_1 \in C_\epsilon,\ldots,\hat z^n_K \in C_\epsilon \}}\left\{ \frac{1}{K}\sum_{k=1}^K \Big(g_0(\hat z^n_k) - \bar g_M(\hat z^n_k)\Big) + \left( \frac{1}{K}\sum_{k=1}^K \bar g_M(\hat z^n_k) - \int fd\pi^n \right)\right\} \\
& \qquad + 2\|f\|_\infty\eta_K(C_\epsilon^\complement).}
Since $\bar g_M(x) \rightarrow\int fd\pi^n $ uniformly on the compact set $C_\epsilon$ due to Lemma \ref{lem:equi}, we can pick $M_\epsilon$ sufficiently large such that for all $K\in\Na$ and $M\geq M_\epsilon$, we get
\beqq{1{\{\hat z^n_0 \in C_\epsilon,\ldots,\hat z^n_K \in C_\epsilon \}}\left( \frac{1}{K}\sum_{k=1}^K \bar g_M(\hat z^n_k) - \int fd\pi^n \right) <\epsilon.}
For such $M_\epsilon$, as $K\rightarrow\infty$, the first summand goes to 0 by Lemma \ref{lem:gkN}. In the third summand, we know that $\eta_K(C_\epsilon^\complement)$ is less than $\epsilon$ with probability at least $1-\epsilon$ due to Lemma \ref{lem:occ} for sufficiently large $K$. Collecting all these results, we conclude that
\beqq{\limsup_{K\rightarrow\infty} \pr{\Bigg|\frac{1}{K}\sum_{k=1}^K f(\hat z^n_k) - \int fd\pi^n\Bigg|<(2\|f\|_\infty+2)\epsilon}\geq 1-\epsilon.}
This completes the proof of the theorem.

\subsection{Proof of Theorem \ref{thm:bounded}}\label{sub:bounded}
In order to prove the result, let us first consider the function $f:\ALP X\rightarrow[0,\infty)$, which always takes non-negative values. Let $f_m(x):=\min\{f(x),m\}$ be the clipped function, in which case, $f_m\in C_b(\ALP X)$. Define $\Omega_0 = \{\omega:C(\omega)<\infty\}$, and note that by assumption, $\pr{\Omega_0} = 1$. It is obvious that for any $x\in\ALP X$, if we pick $m\geq f(x)$, then $f_m(x) = f(x)$. 

Note that for this case, for any $m\in\Na$, the following inequality holds:
\beq{\Bigg|\frac{1}{K}\sum_{k=0}^{K-1}f(\hat z^n_k) - \int fd\pi^n\Bigg|\leq & \Bigg|\frac{1}{K}\sum_{k=0}^{K-1}\Big( f(\hat z^n_k) - f_m(\hat z^n_k)\Big)\Bigg| + \Bigg|\frac{1}{K}\sum_{k=0}^{K-1}f_m(\hat z^n_k) - \int f_md\pi^n\Bigg| \nonumber\\
& + \Bigg| \int f_md\pi^n-\int fd\pi^n \Bigg|.\label{eqn:kmdiff}}
In the next lemma, we show that each of the summand on the right is small with high probability. 
\begin{lemma}\label{lem:positive}
Let $f:\ALP X\rightarrow[0,\infty)$. If Assumptions \ref{assm:hatt}, \ref{assm:unique}, and \ref{assm:bounded} hold, then \eqref{eqn:asconv} holds.
\end{lemma}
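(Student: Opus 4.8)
The strategy is to bound the three summands on the right-hand side of \eqref{eqn:kmdiff} one at a time, choosing the truncation level $m$ large (depending on $\epsilon$ but not on $K$) and only afterwards letting $K\to\infty$. Write $A_K=\frac1K\sum_{k=0}^{K-1}f(\hat z^n_k)$ and $A_K^m=\frac1K\sum_{k=0}^{K-1}f_m(\hat z^n_k)$, and recall that $f_m=\min\{f,m\}\in C_b(\ALP X)$. I would handle the truncation error $|A_K-A_K^m|$ using Assumption \ref{assm:bounded}, the middle term $|A_K^m-\int f_m\,d\pi^n|$ using Theorem \ref{thm:lln} (which applies since $f_m$ is continuous and bounded), and the bias term $|\int f_m\,d\pi^n-\int f\,d\pi^n|$ using the monotone convergence theorem, the last of these only after establishing that $\int f\,d\pi^n<\infty$.

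First I would dispose of the truncation error. On $\Omega_0=\{C(\omega)<\infty\}$, Assumption \ref{assm:bounded} gives $f(\hat z^n_k)\le C(\omega)$ for every $k$, so whenever $m\ge C(\omega)$ we have $f_m(\hat z^n_k)=f(\hat z^n_k)$ for every $k$, hence $A_K=A_K^m$ for all $K$ simultaneously. Fixing $\epsilon>0$ and picking $M_1$ with $\pr{C(\omega)>M_1}<\epsilon$ (possible since $\pr{C(\omega)=\infty}=1-\pr{\Omega_0}=0$), it follows that for every $m\ge M_1$ the event $E_m=\{A_K=A_K^m\text{ for all }K\}$ has probability at least $1-\epsilon$. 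It is essential that this failure probability is uniform in $K$, so that the term can be absorbed before passing to the limit.

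Next I would prove $\int f\,d\pi^n<\infty$, which simultaneously controls the bias term. For each fixed $m$, Theorem \ref{thm:lln} gives $A_K^m\to\int f_m\,d\pi^n$ in probability; along a suitable subsequence $K_j$ the convergence is almost sure, while $A_{K_j}^m\le A_{K_j}\le C(\omega)$ almost surely by Assumption \ref{assm:bounded}. Letting $j\to\infty$ yields $\int f_m\,d\pi^n\le C(\omega)$ almost surely, and since the left side is a constant while $C(\omega)<\infty$ a.s., we get $\int f_m\,d\pi^n\le\bar c$, where $\bar c<\infty$ is the essential infimum of $C(\omega)$ and does not depend on $m$. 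Since $f_m\uparrow f$ pointwise, monotone convergence gives $\int f\,d\pi^n=\sup_m\int f_m\,d\pi^n\le\bar c<\infty$, and in particular $\int f_m\,d\pi^n\to\int f\,d\pi^n$, so the bias term is below $\epsilon$ whenever $m\ge M_2$ for a suitable $M_2$. To finish, fix $\epsilon>0$, set $m=\max\{M_1,M_2\}$, and note that on $E_m$ the decomposition \eqref{eqn:kmdiff} gives $|A_K-\int f\,d\pi^n|\le|A_K^m-\int f_m\,d\pi^n|+\epsilon$. By Theorem \ref{thm:lln} applied to $f_m$, $\pr{|A_K^m-\int f_m\,d\pi^n|\ge\epsilon}\to0$, so $\pr{|A_K-\int f\,d\pi^n|\ge2\epsilon}\le\pr{E_m^\complement}+\pr{|A_K^m-\int f_m\,d\pi^n|\ge\epsilon}$ and hence $\limsup_{K\to\infty}\pr{|A_K-\int f\,d\pi^n|\ge2\epsilon}\le\epsilon$; since $\epsilon>0$ is arbitrary this is \eqref{eqn:asconv}.

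The main obstacle is the gap between ``$C(\omega)<\infty$ almost surely'' (all Assumption \ref{assm:bounded} provides) and ``$C(\omega)$ integrable'': one cannot simply take expectations of the pathwise bound $A_K\le C(\omega)$ to control $\int f\,d\pi^n$ or the tail contribution of $f$. The device that resolves this is to push the pathwise bound $A_K^m\le C(\omega)$ through the in-probability limit supplied by Theorem \ref{thm:lln}, which pins $\int f_m\,d\pi^n$ below the essential infimum of $C$ \emph{uniformly in} $m$; this both establishes finiteness of $\int f\,d\pi^n$ and legitimizes the monotone-convergence step for the bias term. The only other point needing care is to arrange that the truncation error vanishes uniformly in $K$ rather than merely for each fixed $K$, so that it may be absorbed before the limit $K\to\infty$ is taken.
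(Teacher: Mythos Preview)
Your proof is correct and uses the same three-term decomposition \eqref{eqn:kmdiff} as the paper, but the execution differs in two respects worth noting.

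First, you work with a \emph{deterministic} truncation level: you pick $M_1$ so that $\pr{C(\omega)>M_1}<\epsilon$ and then fix $m\ge M_1$, which makes the truncation error vanish on an event $E_m$ of probability $\ge 1-\epsilon$, uniformly in $K$. The paper instead takes the random level $M_\epsilon(\omega)=\lceil C(\omega)\rceil$ and then builds a Borel--Cantelli-type set $\check\Omega=\Omega_0\cap\bigcap_{m}\Omega_{m,K_m}$ with $\pr{\Omega_{m,K_m}^\complement}<\delta/2^m$. Your route is more elementary and leads to a cleaner endgame: once $m$ is fixed, Theorem~\ref{thm:lln} applies directly to $f_m$ and you pass to $\limsup_K$ without any $\omega$-dependent choice of $K$.

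Second, you explicitly establish $\int f\,d\pi^n<\infty$ by pushing the pathwise bound $A_{K}^m\le C(\omega)$ through the in-probability limit of Theorem~\ref{thm:lln} (via an a.s.\ convergent subsequence) to get $\int f_m\,d\pi^n\le \operatorname{ess\,inf}C(\omega)<\infty$ uniformly in $m$. The paper's proof invokes monotone convergence to obtain a finite $\bar M$ with $|\int f_m\,d\pi^n-\int f\,d\pi^n|<\epsilon$, which tacitly presumes $\int f\,d\pi^n<\infty$; your argument fills that gap and is a genuine addition.
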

\begin{proof}
For any $\epsilon>0$, there exists (a random natural number) $M_\epsilon(\omega)$ such that 
\beqq{\Bigg|\frac{1}{K}\sum_{k=0}^{K-1}\Big( f(\hat z^n_k) - f_m(\hat z^n_k)\Big)\Bigg|<\epsilon \text{ for all } m\geq M_\epsilon(\omega) \text{ and } K\in\Na.}
Indeed, one can take $M_\epsilon(\omega) = \lceil C(\omega)\rceil$, in which case, 
\beq{\label{eqn:mooreosgood}\Bigg|\frac{1}{K}\sum_{k=0}^{K-1}\Big( f(\hat z^n_k) - f_m(\hat z^n_k)\Big)\Bigg|=0 \text{ for all } m\geq M_\epsilon(\omega) \text{ and } K\in\Na.}
Further, due to the monotone convergence theorem, there exists $\bar M$ such that
\beqq{\left|\int f_m d\pi^n - \int fd\pi^n\right|<\epsilon \text{ for all } m\geq \bar M.}
Now, pick $\epsilon,\delta>0$. Let us define $\Omega_{m,K}\subset\Omega$ as the set such that
\beqq{\Omega_{m,K} = \left\{\omega: \left|\frac{1}{K}\sum_{k=0}^{K-1} f_m(\hat z^n_k) - \int f_md\pi^n\right|<\epsilon\right\}.}
For every $m$, pick $K_m$ such that $\pr{\Omega_{m,K_m}}<\frac{\delta}{2^m}$; such a $K_m$ always exists due to Theorem \ref{thm:lln}. Define $\check\Omega = \Omega_0\bigcap \bigcap_{m=1}^\infty \Omega_{m,K_m}$. This immediately implies that 
\beqq{\pr{\check\Omega} = 1-\pr{\Omega_0^\complement\bigcup \bigcup_{m=1}^\infty \Omega_{m,K_m}^\complement}>1-\delta. }
For every $\omega\in \check\Omega$, pick $m$ sufficiently large such that $m\geq M_\epsilon(\omega)$, $m\geq \bar M$, and pick any $K = K_m$. We use \eqref{eqn:kmdiff} to conclude that 
\beqq{\Bigg|\frac{1}{K}\sum_{k=0}^{K-1}f(\hat z^n_k) - \int fd\pi^n\Bigg|<3\epsilon \text{ for all } \omega\in \check\Omega.}
Since $\pr{\omega\in \check\Omega} >1-\delta$, the proof of the lemma is complete.
\end{proof}
We can now complete the proof of Theorem \ref{thm:bounded} using the result above. Pick $f\in C(\ALP X)$ and define $f_+,f_-:\ALP X\rightarrow[0,\infty)$ such that $f_+ = \max\{f,0\}$ and $f_- = \max\{-f,0\}$. This immediately yields $f = f_+ - f_-$. Further, we have
\beqq{\frac{1}{K}\sum_{k=0}^{K-1} f(\hat z^n_k) & = \frac{1}{K}\sum_{k=0}^{K-1} f_+(\hat z^n_k) -\frac{1}{K}\sum_{k=0}^{K-1} f_-(\hat z^n_k), \\
\int fd\pi^n & = \int f_+d\pi^n - \int f_-d\pi^n.}
Now, it is easy to conclude from Lemma \ref{lem:positive} that the convergence in probability result in \eqref{eqn:asconv} holds for both $f_+$ and $f_-$, due to which the convergence in probability result holds for the function $f$ itself.

\section{Conclusion}\label{sec:conclusion}
In this paper, we studied the convergence of random sequences generated from certain RSAs used in machine and reinforcement learning problems. If the randomization device used within the algorithm is independent at every iteration, and the maps do not change (for instance, the stepsize is taken as constant), then the random sequence generated can be viewed using the lens of Markov chains. We leveraged the theory of Feller Markov chains to deduce many interesting characteristics of the random sequence and their distributions. Specifically, under reasonable conditions, we showed that the entire random sequence is close to the sequence generated by the exact algorithm with high probability for $n$ sufficiently large. We further showed that the average of the random sequence converges to the mean of the invariant distribution if the sequence if there exists a unique invariant measure of the Markov chain.

We expect that the results presented here can be applied to MDPs over continuous state and action spaces (referred to as continuous MDPs). Indeed, finite time guarantees of empirical value iteration for continuous MDPs with function approximator have been presented in \cite{munos2008finite,sharma2019empirical,sharma2019approximately,haskell2019universal,sharma2019approximate,haskell2017randomized,gupta2018probabilistic} under a variety of assumptions on the MDPs and performance criteria. Convergence of asynchronous algorithm for continuous MDPs with non-parametric function approximation is presented in \cite{shah2018q}. It will be interesting to investigate if the output of these algorithms satisfy the sufficient conditions for Theorem \ref{thm:main} and \ref{thm:lln}. It will also be interesting to apply the results presented here to variance reduced algorithms \cite{sidford2018near,wainwright2019variance,sato2019riemannian,wang2019stochastic,wang2018spiderboost} that have been developed recently.

Another problem left for future research is to determine bounds on $\pr{\rho(\hat z^n_k,y_k)\geq \epsilon}$ for any $\epsilon>0$ for every $n$ and $k$. Such bounds would unify our understanding of finite time guarantees for RSAs and allow us to improve the existing algorithms. We hope that the unified framework developed in this paper will be useful for analyzing many other learning algorithms in the future, particularly for analyzing MDPs over compact uncountable state and action spaces.

\appendix

\section{Proof of Lemma \ref{lem:weak}} \label{app:weak} 
Since $\mu_n\circ\Pi_l^{-1}\rightarrow \psi\circ\Pi_l^{-1}$ in weak topology as $n\rightarrow\infty$, we conclude that $\Big(\mu_n\circ\Pi_l^{-1}\Big)_{n\in\Na}$ is tight. For a fixed $\epsilon>0$, let $F_2\subset\ALP X$ be the compact set such that
\beqq{\mu_n\circ\Pi_l^{-1}(\ALP X^l\times F_\epsilon) > 1-\frac{\epsilon}{4\|g\|_\infty} \text{ for every } n\in\Na.}
We now need the following result.
\begin{lemma}\label{lem:gsuppr}
If Assumption \ref{assm:suppr} holds, then for any $g\in U_b(\ALP X)$, compact set $K\subset\ALP X$ and $\epsilon>0$, there exists $N\in\Na$ such that
\beqq{\Big|\ex{g(\hat T^n_k(x))} - g(T(x)) \Big|<\epsilon \text{ for all } x\in K.}
\end{lemma}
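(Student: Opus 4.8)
The plan is to bound $\big|\ex{g(\hat T^n_k(x))} - g(T(x))\big|$ pointwise in $x$ by splitting the underlying expectation according to whether the random point $\hat T^n_k(x)$ has landed near $T(x)$, and then to invoke Assumption \ref{assm:suppr} to make the ``bad'' event uniformly (over $x\in K$) improbable once $n$ is large. The two sources of uniformity in $x$ are: uniform continuity of $g$, which will give a modulus $\delta$ that does not depend on $x$; and Assumption \ref{assm:suppr}, which gives a probability bound uniform over $x\in K$.

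First I would fix $\epsilon>0$ and set $B:=\|g\|_\infty<\infty$ (recall $g\in U_b(\ALP X)$ is bounded). Uniform continuity of $g$ supplies a single $\delta>0$, independent of $x$, such that $\rho(a,b)<\delta$ implies $|g(a)-g(b)|<\epsilon/2$. Next, using $|\ex{Z}|\le\ex{|Z|}$ and decomposing over the event $\{\rho(\hat T^n_k(x),T(x))\le\delta\}$ and its complement, one gets $\big|\ex{g(\hat T^n_k(x))} - g(T(x))\big| \le \ex{\big|g(\hat T^n_k(x)) - g(T(x))\big|} \le \tfrac{\epsilon}{2} + 2B\,\pr{\rho(\hat T^n_k(x),T(x))>\delta}$, since on the first event the integrand is below $\epsilon/2$ and everywhere it is at most $2B$.

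Finally I would apply Assumption \ref{assm:suppr} to the compact set $K$ with the parameter choices $\epsilon\mapsto\delta$ and $\delta\mapsto \epsilon/(4B)$: there is $N:=N_{\delta,\,\epsilon/(4B)}(K)$ such that $\sup_{x\in K}\pr{\rho(\hat T^n_k(x),T(x))>\delta}<\epsilon/(4B)$ for all $n\ge N$. Substituting this bound yields $\big|\ex{g(\hat T^n_k(x))} - g(T(x))\big|<\tfrac{\epsilon}{2}+\tfrac{\epsilon}{2}=\epsilon$ for every $x\in K$ and every $n\ge N$, which is the claim; if one insists on a strict inequality throughout, starting the continuity step from $\epsilon/3$ suffices. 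I do not expect a real obstacle here — the proof is a routine $\epsilon$-split — the only point needing care is to keep the uniformity straight: $\delta$ must come from uniform continuity (so it is $x$-free), and the probability estimate must be the uniform-in-$x$ version from Assumption \ref{assm:suppr}; boundedness of $g$ is exactly what lets the rare event be absorbed into the error term.
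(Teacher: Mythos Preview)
Your proof is correct and follows essentially the same route as the paper: split the expectation on the event $\{\rho(\hat T^n_k(x),T(x))\le\delta\}$, use uniform continuity of $g$ on the good event and the bound $2\|g\|_\infty$ on the bad one, then invoke Assumption~\ref{assm:suppr} to make the bad event uniformly small over $K$. Your bookkeeping of constants is in fact slightly tidier than the paper's (you land on $\epsilon$ rather than $2\epsilon$); the only cosmetic slip is that you take $\delta$ from the strict-inequality modulus $\rho(a,b)<\delta$ but then split on $\{\rho\le\delta\}$, which is harmless after replacing $\delta$ by any smaller value.
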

\begin{proof}
Since $g$ is uniformly continuous, for every $\epsilon>0$, there exists a $\delta_{\epsilon}>0$ such that for any $x,x'\in\ALP X$ with $\rho(x,x')<\delta_\epsilon$, we have $|g(x) - g(x')|<\epsilon$. Since Assumption \ref{assm:suppr} holds, there exists $ N_{\epsilon}(g,K)$ such that
\beqq{\sup_{x\in K}\pr{\rho(\hat T^n_k(x),T(x)) >\delta_\epsilon}<\frac{\epsilon}{2\|g\|_\infty} \text{ for all } n\geq N_{\epsilon}(g,K).}This implies
\beqq{\Big|\ex{g(\hat T^n_k(x)) - g(T(x))}\Big|  & \leq  \int |g(\hat T^n_k(x)) - g(T(x))| \pr{d\omega}\\
& \leq \epsilon \pr{\rho(\hat T^n_k(x),T(x))< \delta_\epsilon} \\
& \qquad + 2\|g\|_\infty \pr{\rho(\hat T^n_k(x),T(x))\geq \epsilon}\\  
& < 2\epsilon.}
The proof of the lemma is complete. 
\end{proof}

We are now in a position to prove the result. Consider the following expressions
\beqq{ & \int_{\ALP X^{l+1}}  \Big|\ex{g(\hat T^n_l(x_l))} - g(T(x_l))\Big| \mu_n\circ \Pi_l^{-1}(dx_{0:l})\\
& = \int_{\ALP X^l\times F_\epsilon}  \Big|\ex{g(\hat T^n_l(x_l))} - g(T(x_l))\Big| \mu_n\circ \Pi_l^{-1}(dx_{0:l}) \\
& + \int_{\ALP X^l\times F_\epsilon^\complement}  \Big|\ex{g(\hat T^n_l(x_l))} - g(T(x_l))\Big| \mu_n\circ \Pi_l^{-1}(dx_{0:l})\\
& \leq \frac{\epsilon}{2}+\frac{\epsilon}{2} = \epsilon.}
The proof of the lemma is complete.

\section{Proof of Lemma \ref{lem:gkN}}\label{app:gkN}
Define $g_p(\hat z^n_{-i}) := 0$ for all $i\in\Na$ and $p\in\{0,1,\ldots\}$. Let us define random variables $u_{p,l}$ for $l,p = 0,1,\ldots$ as
\beqq{u_{p,l} = g_p(\hat z^n_l) - g_{p+1}(\hat z^n_{l-1}).}
Thus, $u_{p,0} = g_p(\hat z^n_0)$. By definition of $u_{p,l}$, we have for any $l,k\geq 0$
\beq{g_0(\hat z^n_l) - g_k(\hat z^n_{l-k}) & = \Big(g_0(\hat z^n_l)-g_1(\hat z^n_{l-1})\Big)+ \ldots + \Big(g_{k-1}(\hat z^n_{l-k+1}) - g_k(\hat z^n_{l-k})\Big)\nonumber\\
& = u_{0,l} + \ldots + u_{k-1,l} = \sum_{p = 0}^{k-1}u_{p,l}.\label{eqn:llk}}
We use the above expression to establish the following identity.
\begin{lemma}
For $M\leq N$, we have
\beq{\label{eqn:MN}\frac{1}{N}\sum_{l=0}^{N-1}\Bigg( g_0(\hat z^n_l) - \bar g_M(\hat z^n_l)\Bigg) =  \frac{1}{M}\sum_{k=0}^{M-1}\sum_{p = 0}^{k-1} \left(\frac{1}{N}\sum_{l=0}^{N-1}u_{p,l}\right) -\frac{1}{M}\sum_{k=0}^{M-1}\Bigg(\frac{1}{N}\sum_{l=N-k}^{N-1}g_k(\hat z^n_l)\Bigg).}
\end{lemma}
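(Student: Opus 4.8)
The plan is to verify the identity \eqref{eqn:MN} by a direct algebraic manipulation, with no analytic content involved. First I would unfold the definition $\bar g_M(x) = \frac1M\sum_{i=1}^M g_{i-1}(x) = \frac1M\sum_{k=0}^{M-1} g_k(x)$, so that the left-hand side of \eqref{eqn:MN} becomes $\frac1M\sum_{k=0}^{M-1}\frac1N\sum_{l=0}^{N-1}\bigl(g_0(\hat z^n_l)-g_k(\hat z^n_l)\bigr)$. It then suffices to analyze, for each fixed $k$, the inner sum $\frac1N\sum_{l=0}^{N-1}\bigl(g_0(\hat z^n_l)-g_k(\hat z^n_l)\bigr)$ and show it equals $\sum_{p=0}^{k-1}\frac1N\sum_{l=0}^{N-1}u_{p,l} - \frac1N\sum_{l=N-k}^{N-1} g_k(\hat z^n_l)$; summing over $k$ and dividing by $M$ is then immediate.

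The key observation is that \eqref{eqn:llk} gives a telescoping expression for $g_0(\hat z^n_l)-g_k(\hat z^n_{l-k})$, which differs from $g_0(\hat z^n_l)-g_k(\hat z^n_l)$ by the ``shift defect'' $g_k(\hat z^n_{l-k})-g_k(\hat z^n_l)$. So I would write $g_0(\hat z^n_l)-g_k(\hat z^n_l) = \sum_{p=0}^{k-1}u_{p,l} + \bigl(g_k(\hat z^n_{l-k})-g_k(\hat z^n_l)\bigr)$ using \eqref{eqn:llk}, then sum over $l$ from $0$ to $N-1$ and divide by $N$. The first group of terms produces $\sum_{p=0}^{k-1}\frac1N\sum_{l=0}^{N-1}u_{p,l}$ verbatim.

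The remaining ``boundary'' piece $\frac1N\sum_{l=0}^{N-1}\bigl(g_k(\hat z^n_{l-k})-g_k(\hat z^n_l)\bigr)$ is where the bookkeeping happens, and this is the only step I expect to require care. Reindexing $j=l-k$ gives $\sum_{l=0}^{N-1} g_k(\hat z^n_{l-k}) = \sum_{j=-k}^{N-1-k} g_k(\hat z^n_j)$, and the convention $g_p(\hat z^n_{-i})\equiv 0$ for $i\in\Na$ discards the terms with $j<0$, leaving $\sum_{j=0}^{N-1-k} g_k(\hat z^n_j)$. Subtracting $\sum_{l=0}^{N-1} g_k(\hat z^n_l)$ cancels all terms with index $0,\dots,N-1-k$ and leaves exactly $-\sum_{l=N-k}^{N-1} g_k(\hat z^n_l)$; the hypothesis $M\le N$ ensures $N-k\ge N-M+1\ge 1$, so this range is well defined (and empty when $k=0$, consistent with the identity). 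Dividing by $N$, summing over $k=0,\dots,M-1$, and dividing by $M$ reassembles precisely the two terms on the right-hand side of \eqref{eqn:MN}. All sums are finite, so the interchanges of summation used along the way are unconditionally valid, and the proof is complete.
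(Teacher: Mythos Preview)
Your proposal is correct and follows essentially the same route as the paper: expand $\bar g_M$, insert and subtract $g_k(\hat z^n_{l-k})$, invoke \eqref{eqn:llk} for the telescoping part, and reindex the shift defect using the convention $g_p(\hat z^n_{-i})=0$ to obtain the boundary sum. Your added remarks about the role of $M\le N$ in keeping the index ranges well defined and about the empty-sum case $k=0$ are a nice bit of extra bookkeeping not made explicit in the paper.
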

\begin{proof}
For any $l\in\{0,1,\ldots\}$, we have
\beqq{ g_0(\hat z^n_l) - \bar g_M(\hat z^n_l) & =  g_0(\hat z^n_l) - \frac{1}{M} \sum_{k=0}^{M-1}g_k(\hat z^n_l) = \frac{1}{M}\sum_{k=0}^{M-1}\Big(g_0(\hat z^n_l) - g_k(\hat z^n_l)\Big)\\
& = \frac{1}{M}\sum_{k=0}^{M-1}\bigg( \Big(g_0(\hat z^n_l) )  - g_k(\hat z^n_{l-k})\Big) + \Big(g_k(\hat z^n_{l-k}) - g_k(\hat z^n_l)\Big)\bigg)\\
& = \frac{1}{M}\sum_{k=0}^{M-1}\bigg( \Big(g_0(\hat z^n_l) )  - g_k(\hat z^n_{l-k})\Big) + \Big(g_k(\hat z^n_{l-k}) - g_k(\hat z^n_l)\Big)\bigg).}
This yields
\beq{\label{eqn:B1}\frac{1}{N}\sum_{l=0}^{N-1}\Bigg( g_0(\hat z^n_l) - \bar g_M(\hat z^n_l)\Bigg) & = \frac{1}{NM}\sum_{k=0}^{M-1}\sum_{l=0}^{N-1}\bigg( \Big(g_0(\hat z^n_l) )  - g_k(\hat z^n_{l-k})\Big) + \Big(g_k(\hat z^n_{l-k}) - g_k(\hat z^n_l)\Big)\bigg).
}
For any $l,k\geq 0$, \eqref{eqn:llk} yields
\beqq{\frac{1}{N}\sum_{l=0}^{N-1}\Big(g_0(\hat z^n_l) - g_k(\hat z^n_{l-k})\Big) =  \frac{1}{N}\sum_{l=0}^{N-1}\sum_{p = 0}^{k-1}u_{p,l} = \sum_{p = 0}^{k-1} \left(\frac{1}{N}\sum_{l=0}^{N-1}u_{p,l}\right).}
Consider the second summand in \eqref{eqn:B1}. For any $k\in\Na$, we have
\beqq{\frac{1}{N}\sum_{l=0}^{N-1}\Big(g_k(\hat z^n_{l-k}) - g_k(\hat z^n_l)\Big) = -\frac{1}{N}\sum_{l=N-k}^{N-1}g_k(\hat z^n_l). }
These expressions immediately establish the equality in \eqref{eqn:MN}.

\end{proof}

\begin{lemma}\label{lem:ulp}
For a fixed $p\in\{0,1,\ldots\}$, we have
\beqq{\lim_{N\rightarrow\infty}\frac{1}{N}\sum_{l=0}^{N-1} u_{p,l} = 0 \quad \mathbb{P}\text{-almost surely}.}
\end{lemma}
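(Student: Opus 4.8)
The plan is to recognize that, for each fixed $p$, the sequence $(u_{p,l})_{l\geq 1}$ is a uniformly bounded martingale difference sequence, and then to invoke a strong law of large numbers for such sequences, treating the deterministic term $u_{p,0}$ separately.

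First I would introduce the natural filtration $\FLD F_l := \sigma(\hat z^n_0,\hat z^n_1,\ldots,\hat z^n_l)$ for $l\geq 0$. Since $(\hat z^n_k)_{k\in\Na}$ is a time-homogeneous Markov chain with kernel $Q_n$ and $g_{p+1}(\hat z^n_{l-1})$ is $\FLD F_{l-1}$-measurable, the identity \eqref{eqn:gkm} (applied with index shift $k=p$, $m=l$) gives, for every $l\geq 1$,
\beqq{\ex{g_p(\hat z^n_l)\,\middle|\,\FLD F_{l-1}} = \ex{g_p(\hat z^n_l)\,\middle|\,\hat z^n_{l-1}} = g_{p+1}(\hat z^n_{l-1}),}
so that $\ex{u_{p,l}\mid\FLD F_{l-1}} = 0$; i.e.\ $(u_{p,l})_{l\geq1}$ is a martingale difference sequence adapted to $(\FLD F_l)_{l\geq0}$. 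Moreover, since $g_k(x) = \ex{f\big(\hat T^n_k\circ\cdots\circ \hat T^n_1(x)\big)}$ is an average of values of $f$, we have $\|g_k\|_\infty\leq\|f\|_\infty$ for every $k$, hence $|u_{p,l}|\leq 2\|f\|_\infty$ almost surely.

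Next I would apply the Azuma--Hoeffding inequality to the martingale $M_N := \sum_{l=1}^{N-1} u_{p,l}$: for every $\epsilon>0$,
\beqq{\pr{|M_N|\geq N\epsilon}\leq 2\exp\left(-\frac{N^2\epsilon^2}{2(N-1)(2\|f\|_\infty)^2}\right)\leq 2\exp\left(-\frac{N\epsilon^2}{8\|f\|_\infty^2}\right),}
and the right-hand side is summable in $N$. By the Borel--Cantelli lemma, $\limsup_{N\to\infty}|M_N|/N\leq\epsilon$ almost surely; intersecting this event over a sequence $\epsilon\downarrow 0$ yields $M_N/N\to 0$ almost surely. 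The remaining $l=0$ term contributes $u_{p,0}/N = g_p(\hat z^n_0)/N$, which is bounded in absolute value by $\|f\|_\infty/N\to 0$. Combining, $\frac{1}{N}\sum_{l=0}^{N-1}u_{p,l} = \frac{1}{N}\big(u_{p,0}+M_N\big)\to 0$ almost surely, which is the claim.

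I do not anticipate a serious obstacle: the only points requiring care are the verification of the martingale-difference property from the Markov structure (in particular using the correct index shift supplied by \eqref{eqn:gkm}) and the uniform bound $\|g_k\|_\infty\leq\|f\|_\infty$. As an alternative to Azuma--Hoeffding, one may instead observe that $\sum_{l\geq1}\ex{u_{p,l}^2}/l^2 \leq 4\|f\|_\infty^2\sum_{l\geq1}l^{-2}<\infty$ and apply the $L^2$ strong law of large numbers for martingale differences (Kolmogorov's criterion together with Kronecker's lemma), which delivers the same conclusion.
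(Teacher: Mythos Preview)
Your proof is correct and follows essentially the same approach as the paper: identify $(u_{p,l})_{l\geq 1}$ as a bounded martingale difference sequence via \eqref{eqn:gkm}, then apply a strong law of large numbers for martingales. The paper invokes the $L^2$ martingale SLLN (your stated alternative) directly, whereas you reach the same conclusion through Azuma--Hoeffding and Borel--Cantelli; both routes are valid and the arguments are interchangeable.
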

\begin{proof}
For any $p\in\{0,1,\ldots\}$ and $l\in\Na$, let $\ALP F_{p,l-1}$ denote the $\sigma$-algebra generated by $(\hat z^n_0,\ldots,\hat z^n_{l-1},u_{p,0},\ldots,u_{p,l-1})$. We show that $\{u_{p,l}\}_{l=1}^\infty$ forms a martingale with respect to the $\sigma$-algebra $\ALP F_{p,l-1}$.
By the definition of $u_{p,l}$, we immediately have for any $l\geq 1$,
\beq{\label{eqn:upl}\ex{u_{p,l}|\hat z^n_{l-1}} =\ex{g_p(\hat z^n_l) - g_{p+1}(\hat z^n_{l-1})|\hat z^n_{l-1}} = 0.}
This implies
\beqq{\ex{u_{p,l}|\ALP F_{p,l-1}} = 0, \quad \ex{(u_{p,l})^2} \leq 2\|f\|_\infty^2 \text{ for $p\in\{0,1,\ldots\}$ and $l\in\Na$}.}
The proof then is an immediate consequence of the strong law of large numbers for martingales in \cite[p. 66]{loeve1977ii}.
\end{proof}

Now note that for any $M\in\Na$, as $N\rightarrow\infty$, the right side of \eqref{eqn:MN} converges to 0 almost surely by Lemma \ref{lem:ulp}. This yields the result.

\bibliographystyle{siamplain}
\bibliography{b/2019rl,b/constantrl,b/constantsgd,b/edp,b/math,b/myconf,b/myjournal,b/optbook,b/prob,b/probbook,b/ql,b/gamefp,b/sgd}
\end{document}